\numberwithin{figure}{section}
\numberwithin{table}{section}
\newtheorem{lemma}{Lemma}[section]
\newtheorem{theorem}{Theorem}[section]
\newtheorem{definition}{Definition}[section]
\newtheorem{corollary}{Corollary}[section]
\newtheorem{observation}{Observation}[section]
\newtheorem{restatelemma}{Lemma}[section]
\newtheorem{restatecorollary}{Corollary}[section]
\newcommand{\defeq}{\vcentcolon=}
\newcommand{\E}{\mathbb{E}}
\newcommand{\T}{\mathcal{T}}
\newcommand{\V}{\mathcal{V}}
\DeclarePairedDelimiterX{\infdivx}[2]{(}{)}{%
  #1\,\delimsize|\delimsize|#2%
}
\DeclarePairedDelimiterX{\infdivbigx}[2]{\biggl(}{ \biggr)}{%
  #1\,\delimsize|\delimsize|#2%
}
\let\originalleft\left
\let\originalright\right
\renewcommand{\left}{\mathopen{}\mathclose\bgroup\originalleft}
\renewcommand{\right}{\aftergroup\egroup\originalright}
\newcommand{\kld}[2]{\ensuremath{D_{KL}\infdivx{#1}{#2}}\xspace}
\newcommand{\fd}[2]{\ensuremath{D_{F}\infdivx{#1}{#2}}\xspace}
\newcommand{\fvd}[2]{\ensuremath{d_{2,\nu}(#1,#2)}\xspace}
\newcommand{\norm}[1]{\left\lVert#1\right\rVert}
\newcommand{\kldbig}[2]{\ensuremath{D_{KL}\left(\vphantom{#2}#1\right|\left|\vphantom{#1}#2\right)}}
\newcommand{\eqperiod}{\ensuremath{\,\text{.}}}
\newcommand{\eqcomma}{\ensuremath{\,\text{,}}}
\DeclareMathOperator*{\argmin}{arg\!\min}
\DeclareMathOperator{\tr}{tr}
\DeclareMathOperator{\clip}{clip}
\icmltitlerunning{Generalization and Hyperparameter Optimization}
\begin{document}

\twocolumn[
\icmltitle{Efficient Hyperparameter Optimization By Way Of PAC-Bayes Bound Minimization}



\icmlsetsymbol{equ}{*}

\begin{icmlauthorlist}
\icmlauthor{John J. Cherian}{des}
\icmlauthor{Andrew G. Taube}{des}
\icmlauthor{Robert T. McGibbon}{des}
\icmlauthor{Panagiotis Angelikopoulos}{des}
\icmlauthor{Guy Blanc}{des}
\icmlauthor{Michael Snarski}{des}
\icmlauthor{Daniel D. Richman}{des}
\icmlauthor{John L. Klepeis}{des}
\icmlauthor{David E. Shaw}{des,col}
\end{icmlauthorlist}

\icmlaffiliation{des}{D.E. Shaw Research, New York, New York, USA}
\icmlaffiliation{col}{Department of Biochemistry and Molecular Biophysics, Columbia University, New York, New York, USA}
\icmlcorrespondingauthor{Andrew G. Taube}{Andrew.Taube@DEShawResearch.com}
\icmlcorrespondingauthor{David E. Shaw}{David.Shaw@DEShawResearch.com}

\icmlkeywords{hyperparameter optimization, generalization, PAC-Bayes}

\vskip 0.3in
]



\printAffiliationsAndNotice{}  

\begin{abstract}
Identifying optimal values for a high-dimensional set of hyperparameters is a problem that has received growing attention given its importance to large-scale machine learning applications such as neural architecture search.  Recently developed optimization methods can be used to select thousands or even millions of hyperparameters.  Such methods often yield overfit models, however, leading to poor performance on unseen data.  We argue that this overfitting results from using the standard hyperparameter optimization objective function.  Here we present an alternative objective that is equivalent to a Probably Approximately Correct-Bayes (PAC-Bayes) bound on the expected out-of-sample error.  We then devise an efficient gradient-based algorithm to minimize this objective; the proposed method has asymptotic space and time complexity equal to or better than other gradient-based hyperparameter optimization methods.  We show that this new method significantly reduces out-of-sample error when applied to hyperparameter optimization problems known to be prone to overfitting.
\end{abstract}

\section{Introduction}
\label{sec:intro}
Hyperparameters are settings that must be chosen prior to fitting a model; such hyperparameters include the learning algorithm, choice of training data, and optimizer. Without careful selection of the values of these hyperparameters, models fit to a training set often fail to achieve the goal of machine learning: accurate prediction on unseen (i.e., out-of-sample) data. To address this risk, the standard hyperparameter optimization strategy begins by constructing an independent validation data set to serve as a proxy for out-of-sample data. Optimization then proceeds in two nested steps. In the inner step, the model parameters are fit to minimize the training set error; in the outer step, the hyperparameters are adjusted to minimize the trained model's validation set error. An outer step thus requires a potentially costly refit of the model parameters.

Though computationally burdensome, the optimization of a large number of hyperparameters can sometimes greatly improve model performance on unseen data \cite{franceschi17a, franceschi18a, Liu2018}. \citet{Metz19}, for example, were able to substantially reduce the out-of-sample error of an image classifier by optimizing the 20,000 hyperparameters that control their classifier's optimizer. For many applications, however, large-scale hyperparameter optimization fails to improve model performance on unseen data, despite reducing model error on the validation set \cite{lorraine2019optimizing, LiTalwalkar19, zela2020understanding}. To understand this generalization error (i.e., the discrepancy between validation set error and error on truly unseen data), we must examine the standard hyperparameter optimization objective function
\begin{equation}
    \begin{split}
        \min_\lambda &\quad \hat{R}_\V\left(\theta^*(\lambda, S_\T); S_\V, \lambda \right)\\
        &\text{s.t. } \theta^*(\lambda, S_\T) = \argmin_\theta \hat{R}_\T(\theta;  S_\T, \lambda)\eqcomma
    \end{split}
    \label{eq:bilevel-ho}
\end{equation}
$\theta$ are the model parameters, $\lambda$ are hyperparameters, $S_\T$ and $S_\V$ are the training and validation data sets, $\hat{R}_\T(\cdot; S_\T, \lambda)$ is a measure of the training set error, and $\hat{R}_\V(\cdot; S_\V, \lambda)$ is a measure of the validation set error. We denote the minimizer of this objective as $\lambda^*$. 

We consider two sources of generalization error in hyperparameter optimization.  The first is performance degradation when we evaluate the model $\theta^*(\lambda^*, S_\T)$ on out-of-sample validation data instead of $S_\V$. The second is the variability of $\theta^* (\lambda^*,S_\T)$ when $S_\T$ is replaced with out-of-sample training data. In applications of hyperparameter optimization that were feasible prior to the development of highly scalable optimization methods, these two sources of generalization error were seldom encountered. Optimization of a single regularization penalty, for instance, does not overfit the validation data because of its limited capacity, and the optimized penalty is rarely applied to additional training data fits. By contrast, modern applications such as learned optimizers or neural architecture search involve thousands of re-used hyperparameters, and are thus susceptible to both training and validation overfitting \cite{Metz19, LiTalwalkar19}. Heuristics for reducing overfitting in certain applications have been empirically validated, but remain poorly understood \cite{guiroy2019towards, zela2020understanding}. Without a theoretical framework for studying overfitting, it is difficult to generalize these methods to a broader class of hyperparameter optimization problems.

Here we present an objective for hyperparameter optimization that both addresses the two sources of generalization error described above and motivates existing methods for reducing overfitting. Using a novel extension of ``Probably Approximately Correct Bayesian'' (PAC-Bayes) theory \cite{mcallester1999some}, we derive a bound on the expected out-of-sample error. In Section~\ref{sec:objective}, we describe a theory of generalization error that motivates the presented objective.  We outline an efficient algorithm in Section~\ref{sec:algorithm} to minimize this objective and compare its asymptotic complexity with existing methods for hyperparameter optimization.  In Section~\ref{sec:experiments}, we demonstrate the utility of our approach by applying our algorithm to several problems known to be prone to validation set overfitting. We also show how the PAC-Bayesian bounds we derive can help to explain the benefits and limitations of previously developed heuristics for reducing overfitting.
\section{Objective} \label{sec:objective}
\subsection{Preliminaries} \label{sec:prelims}
The hyperparameter vector $\lambda \in \Lambda \subseteq \mathbb{R}^n$ and parameter vector $\theta \in \Theta(\lambda) \subseteq \mathbb{R}^{m}$ define a model that predicts a label $y$ given an input $x$. Note that $m$ can be determined by $\lambda$ and we assume $\Theta$ is compact. The training set $S_\T = \{x_i, y_i\}_{i  = 1}^{n_\T}$ is composed of input-label tuples $(x_i, y_i)$ that are independently sampled from the data-generating distribution $\mathcal{D}_\T$. The validation set $S_\V$ is constructed similarly. Note that the validation data-generating distribution $\mathcal{D}_\V$ is not necessarily identical to $\mathcal{D}_\T$. 

We measure the performance of a particular choice of parameters and hyperparameters by evaluating a loss function, $\ell_{(\cdot)}: \Theta \times \mathcal{X} \times \mathcal{Y} \times \Lambda \to \mathbb{R}^+$. We define the empirical (or in-sample) risk $\hat{R}_{(\cdot)}$ and expected (or out-of-sample) risk $R_{(\cdot)}$ of a parameter $\theta$ below. The subscript $\T$ denotes the loss/risk function used during training, while $\V$ denotes the loss/risk function applied to the validation set. When the subscript for $\ell$, $R$, $S$, and $\mathcal{D}$ can be either $\T$ or $\V$, we omit the $(\cdot)$ placeholder. 
\begin{align*}
    \hat{R}(\theta; S, \lambda) &\defeq \frac{1}{|S|}\sum_{(x_i, y_i) \in S} \ell(\theta; x_i, y_i, \lambda) \eqcomma\\
    R(\theta; \mathcal{D}, \lambda) &\defeq \E_{\mathcal{D}} [\ell(\theta; x, y, \lambda)]\eqperiod
\end{align*}
We assume that $\int_\Theta e^{-\tau \hat{R}(\theta; S, \lambda)} \pi(d\theta) < \infty$ for all $S \sim \mathcal{D}^n$ and $\tau > 0$. We take the reference measure to be Lebesgue measure on the compact set $\Theta$. Then, the Gibbs posterior distribution over the parameters $\theta$ given $S, \lambda$, and $\tau$ has density \cite{catoni2004statistical} 
\begin{align}
    p^{\tau}(\theta | S, \lambda) \propto e^{-\tau \hat{R}(\theta; S, \lambda)} \label{eq:posteriordef}\eqperiod
\end{align}
We assume that any distribution over parameters we consider admits a differentiable probability density. 

We rely upon several measures of difference between probability distributions in our exposition. The Kullback-Leibler (KL) divergence is defined as
\begin{align}
    \kld{Q}{P} \defeq \E_Q \left [ \log q(x) - \log p(x) \right] \label{eq:kldiv} \eqperiod
\end{align}
The 1-Wasserstein distance is defined as follows for $P$ and $Q$ with bounded support \cite{villani2008optimal}; let $\mathcal{F}$ be the set of real-valued Lipschitz continuous functions with Lipschitz constant no greater than $1$.
\begin{align}
    d_{\mathcal{W}}^1 (Q, P) \defeq \sup_{f \in \mathcal{F}} \left \{\E_Q [f(x)] - \E_P[f(x)] \right\} \label{eq:wassdist}\eqperiod
\end{align}
Unlike the KL divergence, the Fisher divergence (Eq.~\ref{eq:fisherdiv}) and the $(2, \nu)$-Fisher distance (Eq.~\ref{eq:fisherdist}) are computable even when the partition functions of $P$ and $Q$ are unknown \cite{huggins2018practical}. 
\begin{subequations}
\begin{align}
    \fd{Q}{P} \defeq& \E_Q \left [\norm{\nabla \log q(x) - \nabla \log p(x)}_2^2 \right ] \label{eq:fisherdiv}\eqcomma \\
    \fvd{Q}{P} \defeq& \sqrt{\E_\nu \left [ \norm{\nabla \log q(x) - \nabla \log p(x)}_2^2 \right]} \label{eq:fisherdist}\eqperiod
\end{align}
\end{subequations}
When $\nu = Q$, note that $\fvd{Q}{P} = \sqrt{\fd{Q}{P}}$. 

For the sake of brevity, we make the following notational simplifications. We define $p(\theta | S, \lambda) \defeq p^{|S|}(\theta | S, \lambda)$, and we use $\nabla_\theta F(\theta_t)$, instead of $\nabla_\theta F(\theta)|_{\theta = \theta_t}$, to denote the gradient of $F$ evaluated at $\theta_t$.
\subsection{Motivation} \label{sec:motivation}
The poor generalization of large-scale hyperparameter optimization can be understood through the lens of adaptive data analysis \cite{dwork2015generalization}. To optimize the hyperparameters $\lambda$, we reuse a validation set that is held out from the parameter optimization procedure. Repeatedly querying this holdout data set during hyperparameter optimization can result in models that are overfit to the validation set. This phenomenon contradicts our assumption that the validation error is an accurate proxy for out-of-sample error; we have no justification for why minimizing Eq.~\ref{eq:bilevel-ho} would result in models that generalize. One approach to reduce this overfitting is to carefully control access to the holdout set. \citet{dwork2015generalization} prove generalization guarantees for holdout set algorithms that return obfuscated statistics or answer only a limited number of queries. Leveraging these methods in real problems, however, can catastrophically and unnecessarily limit the model's performance \cite{ji2014differential}. Instead, we propose a regularized hyperparameter optimization objective that penalizes hyperparameters more likely to result in overfit models.

To motivate this approach, we first examine a straightforward application of hyperparameter optimization that is susceptible to overfitting: feature selection. Methods specifically developed for feature selection, such as the Akaike Information Criterion (AIC), guard against overfitting by penalizing the inclusion of additional features; models with more features must have substantially lower in-sample risk to justify their selection. The feature selection problem suggests that we can correct Eq.~\ref{eq:bilevel-ho} by penalizing the dimensionality of the parameter vector $\theta$. Model complexity, however, is not always a useful proxy for generalization. Single-classifier generalization bounds, which rely on measures of model complexity such as parameter dimensionality or Vapnik-Chernovenkis dimension, are uncorrelated with the true generalization error for over-parameterized models \cite{zhang2016understanding}. Even if some yet-undiscovered measure of complexity is an accurate proxy for generalization error, many applications of hyperparameter optimization do not affect model complexity. In some meta-learning problems, for instance, the hyperparameter learned is the parameter initialization for future optimizations \cite{franceschi18a}. How, then, can we modify the objective in Eq.~\ref{eq:bilevel-ho} to promote choices of $\lambda$ that are likely to produce generalizable models?

As a step towards a superior objective, we reconsider the formulation of the parameter optimization in Eq.~\ref{eq:bilevel-ho}. Modern optimization methods applied to deep neural networks do not reliably converge to a single optimum of the training risk. Instead, methods such as stochastic gradient descent (SGD) and early stopping perform nonparametric variational inference on a particular distribution over $\Theta$ \cite{mandt2017sgd, duvenaud2016early}. In fact, \citet{mandt2017sgd} and \citet{chaudhari2018stochastic} speculate that the distribution sampled by SGD is similar to $p(\theta | S, \lambda)$. We modify the hyperparameter optimization objective to reflect these observations. Rather than evaluating the validation risk at a minimizer of the training risk, we instead assume that the output of the parameter optimization is a sample from a distribution approximating $p(\theta | S_\T, \lambda)$. In defining this objective, we assume that we run $T$ steps of an iterative gradient-based optimization method initialized with $\theta_0$ sampled from a user-specified initial distribution $P_0$. For notational simplicity, we define the distribution of the $t$-th iterate, $\theta_t$, as $\nu_t$. We denote the distribution of the final output (i.e., $\nu_T$) as $p(\theta_T| S_\T, \lambda)$ for emphasis. We then analyze the expected risk under this ``posterior'' distribution. Using this notation, we can succinctly reformulate the objective in Eq.~\ref{eq:bilevel-ho} as follows:
\begin{align}
    \min_\lambda \E_{p(\theta_T | S_\T, \lambda)} [\hat{R}_\V(\theta; S_\V, \lambda)] \label{eq:bilevel-ho-e}\eqperiod
\end{align}
Unlike Eq.~\ref{eq:bilevel-ho}, this objective quantifies the average validation risk of the network given that training can result in convergence to \emph{many} possible local optima. Leveraging this sampling-oriented perspective on optimization, we propose an alternative hyperparameter optimization objective that adds a regularizer to penalize hyperparameters that are unlikely to yield generalizable models.
\begin{multline}
\label{eq:ho-stability}
    \min_\lambda \E_{p(\theta_T | S_\T, \lambda)} [\hat{R}_\V(\theta; S_\V, \lambda)] + \\
    \zeta \sqrt{\sum_{t = 0}^{T-1} d^2_{2,\nu_t}\left (p^1(\theta | S^{(t)}_\T, \lambda), p^1(\theta | S^{(t)}_\V, \lambda) \right)}\eqperiod
\end{multline}
Recall that Eq.~\ref{eq:posteriordef} defines $p^1 (\theta| S ,\lambda)$ as the Gibbs posterior with $\tau=1$. $\zeta$ is a user-selected parameter that reflects the desired trade-off between minimizing generalization error and minimizing empirical validation risk. The superscript in $S^{(t)}$ denotes the $t$-th mini-batch, relevant if a stochastic method such as SGD is used. 

We rigorously justify the use of Eq.~\ref{eq:ho-stability} in Section~\ref{sec:gentheory}, but here we provide two intuitive arguments for its utility. While optimizing the objective in Eq.~\ref{eq:bilevel-ho-e} aligns the modes of the training and validation posterior distributions, it does not induce any broader agreement between the two distributions. In contrast, when $\nu_t \approx p(\theta | S_\T, \lambda)$, the regularizer in Eq.~\ref{eq:ho-stability} penalizes hyperparameters that produce similar posterior modes, but dissimilar posterior \emph{distributions}. When $\hat{R}_\V(\theta; S_\V, \lambda)$ is strongly convex in $\theta$, the $(2,\nu_t)$-Fisher distance summands of the regularizer are upper bounds for the 1- and 2-Wasserstein distances between the training and validation posteriors \cite{huggins2018practical}. Minimizing Eq.~\ref{eq:ho-stability} thus ensures agreement in not only the modes, but also the moments of both posterior distributions. 

We provide a second interpretation for how the regularizer improves generalization by substituting the definition of the $(2,\nu_t)$-Fisher distance from Eq.~\ref{eq:fisherdist} in Eq.~\ref{eq:ho-stability}. After rewriting the summands, we observe that the regularizer penalizes the 2-norm of the difference between the training and validation gradients (i.e., the proposed objective adds a so-called ``gradient incoherence'' regularizer to the standard hyperparameter optimization objective \cite{negrea2019information, guiroy2019towards}). Prior work has shown that models with similar, ``coherent'' gradient descent trajectories across multiple data sets exhibit faster convergence during optimization and improved generalization \cite{nichol2018first}.

These two arguments hint at how the regularizer in Eq.~\ref{eq:ho-stability} induces model ``stability'' \cite{bousquet2002stability}. Model stability measures the sensitivity of the posterior distribution to perturbations of the data set. Minimizing the $(2, \nu_t)$-Fisher distance then improves stability because a model is more likely to be stable if the training and validation optima coincide \emph{and} the curvature at these optima agree (i.e., the second moments of the posterior distributions match). Stability bounds on the out-of-sample error are ideal hyperparameter optimization objectives because the posterior distribution's characteristics are not only affected by the size of the search space, but also by other hyperparameters, such as the choice of data-generating distribution and optimizer.

\subsection{Related Work} \label{sec:pbrelwork}
Hyperparameter optimization by way of generalization bound minimization has been explored previously in the PAC-Bayes literature. \citet{pmlr-v76-thiemann17a}, for instance, use this strategy to determine the weights for an ensemble of Support Vector Machine (SVM) classifiers in which each SVM is trained on a small subset of the training data. To choose these weights, they derive a quasiconvex PAC-Bayes bound on the expected risk and an algorithm for bound minimization that provably converges to the globally optimal posterior distribution. They show that the weighted ensemble of SVMs produced by this algorithm can predict out-of-sample labels with similar accuracy to their benchmark: a kernelized SVM trained on the whole data set.

\citet{ambroladze2007tighter} minimize a PAC-Bayes bound to directly select the hyperparameters of a kernelized SVM. They construct this bound by using a held-out portion of the training set to define a PAC-Bayes prior; the PAC-Bayes posterior is defined using the remaining training data. \citet{ambroladze2007tighter} demonstrate that performing grid search over hyperparameters to minimize this bound produces an SVM with similar test-set error to the SVM that results from hyperparameter optimization using $10$-fold cross-validation. Cross-validation yields an effective baseline for evaluating new approaches to hyperparameter optimization, but is too computationally expensive for practical use in modern applications.

Our approach extends the methods introduced in these and other prior work to a broader class of hyperparameter optimization problems. The method proposed by \citet{pmlr-v76-thiemann17a} can only be applied when the partition function of the posterior distribution is known. Without the use of simplifying approximations to the true posterior, this method cannot be directly applied to most hyperparameter optimization problems. Similarly, \citet{ambroladze2007tighter} solely consider Gaussian PAC-Bayes posterior and prior distributions. The algorithm they then propose, which involves directly computing the KL divergence between the posterior and prior, is only feasible under restrictive assumptions regarding the form of these distributions. We build upon this method by developing an algorithm for the minimization of PAC-Bayes bounds that are a function of intractable parameter distributions implicitly defined by an optimization method. This enables hyperparameter optimization for complex models, such as molecular dynamics force fields, that admit non-Gaussian posterior distributions \cite{rizzi2012uncertainty, kulakova2017}. 

\citet{ambroladze2007tighter} also do not consider the category of problems in which the training and validation data-generating distributions are distinct; in this case, a subset of the training set is no longer a good proxy for unseen validation data. By defining a data-dependent PAC-Bayes prior, we are able to select an informative prior even when $\mathcal{D}_\T \neq \mathcal{D}_\V$. Last, we remedy the poor scaling of grid search for hyperparameter optimization by introducing a gradient-based algorithm in Section~\ref{sec:algorithm} that can efficiently optimize Eq.~\ref{eq:ho-stability} with respect to millions of hyperparameters.

\subsection{Generalization Theory} \label{sec:gentheory}
While we show in Section~\ref{sec:experiments} that the minimization of Eq.~\ref{eq:ho-stability} reduces out-of-sample error for typical choices of parameter optimization method, our analysis of Eq.~\ref{eq:bilevel-ho-e} and~\ref{eq:ho-stability} depends on the use of an iterative sampling method that introduces Gaussian noise at each iteration. We thus assume that the parameter optimization is performed with either Stochastic Gradient Langevin Dynamics (SGLD) or Langevin Dynamics (LD) \cite{welling2011bayesian}. We then establish that optimizing Eq.~\ref{eq:ho-stability} is equivalent to minimizing two PAC-Bayes bounds on the expected validation risk and the change in the training posterior induced by out-of-sample training data. Proofs of the results presented in this section are in Appendix~\ref{sec:genappendix}.

In classical PAC-Bayes theory, generalization error is upper-bounded by a KL divergence between the ``posterior'' distribution (constructed after observing the data set) and a data-independent ``prior'' distribution \cite{mcallester1999some}. The resulting bound can be quite large, however, as it is difficult to choose a prior that is similar to the posterior before observing the data. Our work extends previous attempts to define PAC-Bayes bounds with data-dependent priors. Naively applying these bounds to hyperparameter optimization would require excluding a subset of the validation data from optimization or using a Gaussian to approximate the training posterior distribution \cite{ambroladze2007tighter, parrado12data, dziugaite2018data}. These bounds are thus suboptimal when either the validation set is small, or a spherically symmetric Gaussian is a poor approximation to the true posterior distribution. 

To develop more informative bounds, we prove that a PAC-Bayes prior can be chosen using an algorithm that depends on the data set so long as the dependence is sufficiently weak. The approach we take is similar to that of \citet{dziugaite2018data}, but by using a weaker notion of data dependence, we avoid approximating the posterior with a Gaussian. We then derive a data-dependent PAC-Bayes bound and show that it can be applied to the expected validation risk. Next, we compute a tractable upper bound of the PAC-Bayes bound by expanding the KL divergence into a sum of Fisher distances; this recovers the objective in Eq.~\ref{eq:ho-stability}. Last, we show that the minimization of Eq.~\ref{eq:ho-stability} also controls the variability of $\theta^*(S_\T, \lambda)$ with respect to unseen $S^\prime_\T$ (i.e., to changes in the training set).

We first define a measure of data-dependence known as $(\epsilon, \delta)$-differential privacy (DP).
\begin{definition}[\citet{dwork2014algorithmic}]
Let $\epsilon, \delta \geq 0$ and let $\mathcal{A}$ be a randomized algorithm that takes a data set $S$ as input and produces a random output in some space $U$. We consider the application of the algorithm to two adjacent data sets $S_1$ and $S_2$ differing in only one element. Then, $\mathcal{A}$ is an $(\epsilon, \delta)$-DP algorithm if for any subset $I \subseteq U$
\begin{align*}
    P(\mathcal{A}(S_1) \in I) \leq e^\epsilon P(\mathcal{A}(S_2) \in I) + \delta
\end{align*}
holds for all choices of $S_1$ and $S_2$.
\end{definition}
Intuitively, a randomized algorithm is $(\epsilon, \delta)$-DP with respect to $S$ if the distribution over its output is (mostly) insensitive to the replacement of a single data point in $S$ with any other point. 

We build upon prior work from \citet{dziugaite2018data} and \citet{rivasplata19} by showing that a PAC-Bayes prior can be data-dependent, so long as samples from that prior distribution are  $(\epsilon, \delta)$-DP with respect to $S$. In our analysis, we exclude the use of any prior whose samples are the product of a non-trivial composition over the data set $S$. An $(\epsilon, \delta)$-DP algorithm that operates on $S$ is a non-trivial composition if it is the result of sequentially composing several algorithms $\mathcal{A}_i: S_i \subseteq S \to U$ such that $\cap_i S_i \neq \emptyset$ (i.e., the algorithms in the composition do not operate on disjoint subsets of $S$).

Here we present a data-dependent PAC-Bayes theorem that only holds for loss functions bounded in $[0,1]$. In Theorem~\ref{thm:app_pbapproxdp}, we extend this result to all bounded loss functions. To establish these results, we require an additional function, $\beta(\epsilon,\delta,s)$, that is defined in Appendix~\ref{sec:dpappendix}.
\begin{theorem}
We assume that the loss function $\ell$ is bounded in $[0,1]$ and that $P$ is chosen such that samples drawn from it are $(\epsilon, \delta)$-DP with respect to $S \sim \mathcal{D}^s$ without the use of any non-trivial composition. We require $\epsilon \in \left(0,\frac{1}{2}\right]$ and $\delta \in (0, \epsilon)$ such that $\beta(\epsilon, \delta, s) < \min \left (1, s \exp \left \{s \left( c_1\epsilon^2 + c_2 \sqrt{\frac{\delta}{\epsilon}} - 2\right)\right \} \right)$ for some positive constants $c_1$ and $c_2$.  Then, for $S \sim \mathcal{D}^s$ and for all distributions $Q$ over $\Theta$,
\begin{multline*}
    \E_Q [R(\theta; \mathcal{D}, \lambda)] \leq \E_Q[\hat{R}(\theta; S, \lambda)] + \left\{\frac{1}{s} \kld{Q}{P} + \right. \\
     \left.\frac{1}{s}\log \frac{5s}{\Delta} +  c_1\epsilon^2 + c_2\sqrt{\frac{\delta}{\epsilon}} \right\}^{1/2}
\end{multline*}
holds with probability at least $1 - \Delta$.
\label{thm:sqrtpbapproxdp}
\end{theorem}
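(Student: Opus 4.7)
The plan is to combine a classical (data-independent-prior) PAC-Bayes bound with a differential-privacy change-of-measure argument that controls the cost of letting the prior depend on $S$. First I would establish the classical ingredient: for any fixed prior $P_0$ not depending on $S$, standard arguments (Chernoff/Hoeffding sub-Gaussianity of $[0,1]$-bounded loss, Donsker--Varadhan duality for $KL$, and Markov's inequality) give that with probability at least $1-\Delta'$ over $S\sim\mathcal{D}^s$, every $Q$ satisfies
\begin{equation*}
\E_Q[R(\theta;\mathcal{D},\lambda)] - \E_Q[\hat{R}(\theta;S,\lambda)] \;\leq\; \sqrt{\frac{\kld{Q}{P_0} + \log(C s/\Delta')}{s}},
\end{equation*}
for an absolute constant $C$. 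This is the ``square-root'' PAC-Bayes bound; the $\log(5s/\Delta)$ term in the theorem ultimately comes from this step after fixing $\Delta'=\Delta/(\text{const}\cdot s)$.

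Second, I would lift this bound to the data-dependent prior $P$ using the $(\epsilon,\delta)$-DP hypothesis. The tool is \emph{approximate max-information}: an $(\epsilon,\delta)$-DP mechanism $\mathcal{A}$ satisfies $I_\infty^{\beta}(S;\mathcal{A}(S)) \lesssim s\epsilon^2 + s\sqrt{\delta/\epsilon}$ for an appropriate slack $\beta=\beta(\epsilon,\delta,s)$ (this is exactly where the function $\beta$ and the constants $c_1,c_2$ enter, and why the hypothesis restricts $\epsilon\in(0,1/2]$ and $\delta\in(0,\epsilon)$). The standard consequence is a change of measure: for every event $E$ defined jointly in $(S,P)$,
\begin{equation*}
\Pr[(S,P)\in E] \;\leq\; e^{I_\infty^{\beta}}\,\Pr[(S,\tilde{P})\in E] + \beta,
\end{equation*}
where $\tilde{P}$ is an independent copy drawn from the marginal of $P$. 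The exclusion of non-trivial compositions is what allows me to treat $P$ as a single DP release of $S$ rather than tracking composition overhead. Applying this with $E$ equal to the ``failure of the PAC-Bayes bound for $Q$ and prior $\tilde{P}$'' event (which has probability at most $\Delta'$ uniformly in $Q$ by the first step) gives that the same event for the actual DP prior $P$ has probability at most $e^{I_\infty^{\beta}}\Delta' + \beta$.

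Third, I would set $\Delta' \asymp \Delta/s$ (chosen so that $\beta<\Delta'$ by the hypothesis on $\beta$) and absorb the $e^{I_\infty^{\beta}}$ factor into the logarithm inside the square root: $\log(1/\Delta')$ becomes $\log(5s/\Delta) + c_1 s\epsilon^2 + c_2 s\sqrt{\delta/\epsilon}$, and dividing by $s$ produces exactly the extra $c_1\epsilon^2 + c_2\sqrt{\delta/\epsilon}$ inside the square root of the theorem. The hypothesis $\beta<s\exp\{s(c_1\epsilon^2+c_2\sqrt{\delta/\epsilon}-2)\}$ is precisely the quantitative condition that makes the additive $\beta$ term harmless after this rescaling.

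The main obstacle is the bookkeeping for the approximate max-information bound in the $\delta>0$ (not pure-DP) regime, because $I_\infty^{\beta}$ is a joint function of $\epsilon$, $\delta$, $s$, and $\beta$, and one must verify that the slack $\beta(\epsilon,\delta,s)$ used to certify this bound is simultaneously small enough to be absorbed into the probability $\Delta$ and large enough to keep the logarithmic term controlled. Once the max-information bound is in hand in the quantitative form built into the definition of $\beta$, the remainder of the argument is essentially a substitution into the classical PAC-Bayes inequality with $P$ in place of a fixed prior, together with a union bound over a $(1/s)$-net of allowable $Q$'s that contributes the factor of $s$ inside $\log(5s/\Delta)$.
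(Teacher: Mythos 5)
Your architecture differs from the paper's in one structural respect, and that difference is where the proof as written breaks. The paper starts from the generic exponential-moment bound of Theorem~\ref{thm:app_genericbound} with $F(x,y)=(2s-1)(x-y)^2$, and then applies the approximate-max-information change of measure to the nonnegative integrand $e^{f}$ itself (Lemmas~\ref{lma:amshiftbound} and~\ref{lma:maxinfopb}), obtaining $\xi(f,P_S)\le 4s\,e^{I^{\beta}_\infty}+\beta e^{2s}$; the slack $\beta$ therefore lands \emph{inside the logarithm}, and the hypothesis $\beta<s\exp\{s(c_1\epsilon^2+c_2\sqrt{\delta/\epsilon}-2)\}$ is exactly the condition $\beta e^{2s}\le s\,e^{I^{\beta}_\infty}$ that lets them replace $4s e^{I}+\beta e^{2s}$ by $5s e^{I}$ and retain confidence exactly $1-\Delta$. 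You instead apply the change of measure to the \emph{failure event} of a classical fixed-prior bound (the $g=\mathbf{1}_E$ special case of Lemma~\ref{lma:amshiftbound}), which is a legitimate alternative in the spirit of Dwork et al., but it places $\beta$ additively in the failure probability: you get $\Pr[\text{failure}]\le e^{I}\Delta'+\beta$. To conclude confidence $1-\Delta$ from this you need $\beta$ small relative to $\Delta$, and your claim that $\Delta'\asymp\Delta/s$ can be ``chosen so that $\beta<\Delta'$ by the hypothesis on $\beta$'' is not supported: the theorem's condition on $\beta$ is independent of $\Delta$ and is compatible with, say, $\beta=0.5>\Delta$. As written, your route proves the bound only with confidence $1-\Delta-\beta$, or under an extra hypothesis $\beta\lesssim\Delta$ that the theorem does not assume. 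This is the concrete gap; repairing it essentially forces you back to the paper's moment-level change of measure.

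Two smaller points. First, the factor of $s$ inside $\log\frac{5s}{\Delta}$ does not come from ``a union bound over a $(1/s)$-net of allowable $Q$'s'': uniformity over all posteriors $Q$ is obtained for free from the Donsker--Varadhan change of measure, and no net is (or could easily be) taken over the infinite-dimensional space of distributions. The $s$ comes from McAllester's bound $\xi_{bd}(f)\le 4s$ on the data-independent exponential moment, with the extra unit absorbed from the $\beta e^{2s}$ term. Second, your invocation of $I^{\beta}_\infty\lesssim s\epsilon^2+s\sqrt{\delta/\epsilon}$ and its interaction with the non-composition hypothesis matches Theorem~\ref{thm:maxinfobound} and is the same ingredient the paper uses; that part of your plan is sound.
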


If the restriction on $\beta$ assumed in Theorem~\ref{thm:sqrtpbapproxdp} is infeasible, we present a more general version of Theorem~\ref{thm:sqrtpbapproxdp} in Theorem~\ref{thm:app_pbapproxdp} that only requires $\beta(\epsilon, \delta, s) < 1$. We can now exploit the relaxed differential privacy requirements of Theorem~\ref{thm:sqrtpbapproxdp} to define a more informative prior $P$. We define $P$ to be the distribution of the $T$-th iterate of an $(\epsilon, \delta)$-DP SGLD algorithm (Algorithm~\ref{alg:dpsgld_onechain}) applied to the data set $S$, which we denote as $p(\theta^{(\epsilon, \delta)}_T | S, \lambda)$ \cite{wang2015privacy, li2017connecting}. The implementation details and privacy results regarding this $(\epsilon, \delta)$-DP version of SGLD are deferred to Appendix~\ref{sec:sgldappendix}.

Next, we apply the bound from Theorem~\ref{thm:sqrtpbapproxdp} to prove that optimizing Eq.~\ref{eq:ho-stability} minimizes the expected validation risk of parameters drawn from $p(\theta_T | S_\T, \lambda)$. To do so, we replace the missing subscripts with $\V$ and make the following substitutions: $Q \equiv p(\theta_T | S_\T, \lambda)$ and $P \equiv p(\theta^{(\epsilon, \delta)}_T | S_\V, \lambda)$. We assume that the Langevin samplers that yield $P$ and $Q$ are each initialized with $\theta_0 \sim P_0$, where $P_0$ is chosen independently of $S_\V$. Denoting the terms that do not depend on $\lambda$ as $A$, the bound can be rewritten as follows.
\begin{multline}
    \E_{p(\theta_T | S_\T, \lambda)}[R_\V(\theta; \mathcal{D}_\V, \lambda)] \leq \E_{p(\theta_T | S_\T, \lambda)}[\hat{R}_\V(\theta; S_\V, \lambda)] \\ 
    + \left\{ \frac{1}{n_\V} \kld{p(\theta_T | S_\T, \lambda)}{p(\theta^{(\epsilon, \delta)}_T | S_\V, \lambda)} + A \right\}^{1/2}\eqperiod
\label{eq:pb_bound}
\end{multline}
As currently proven, $A$ is large enough to make this bound vacuous. We speculate, however, that this constant can be dramatically reduced; the correlation of this regularizer with the true generalization error in Section~\ref{sec:experiments} lends empirical support to this claim.

Minimizing Eq.~\ref{eq:pb_bound} with respect to $\lambda$ is intractable because of the KL divergence; the normalizing constants of $p(\theta_T | S_\T, \lambda)$ and $p(\theta^{(\epsilon, \delta)}_T | S_\V, \lambda)$ are unknown. Next, we prove an upper bound that can be optimized.
\begin{lemma}[Proposition~2.6 of \citet{negrea2019information}]
Let $Q$ and $P$ be joint distributions of $\theta_0,\dots,\theta_T$, and let $Q_t$ and $P_t$ denote the associated marginal distributions of $\theta_t$. $Q_{t|}$ and $P_{t|}$ are defined as the distributions of $\theta_t$ when $Q_t$ and $P_t$, respectively, are both conditioned on $\theta_0,\dots,\theta_{t - 1} \sim Q_{0:(t-1)}$. Suppose that $Q_0 = P_0$. Then,
\begin{align*}
    \kld{Q_T}{P_T} \leq \sum_{t = 1}^T \E_{Q_{0:(t - 1)}} [\kld{Q_{t|}}{P_{t|}}]\eqperiod
\end{align*}
\vspace{-15pt}
\label{lma:klchainrule}
\end{lemma}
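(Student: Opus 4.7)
The plan is to combine two standard facts about KL divergence: the exact chain rule for joint distributions, and the data-processing (monotonicity) inequality under marginalization.

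First, I would write out the KL divergence between the full joint laws $Q_{0:T}$ and $P_{0:T}$. Factoring each density as $q_{0:T}(\theta_{0:T}) = q_0(\theta_0)\prod_{t=1}^{T} q_{t|}(\theta_t \mid \theta_{0:(t-1)})$, and likewise for $p$, direct substitution into Eq.~\ref{eq:kldiv} yields the exact chain-rule identity
$$\kld{Q_{0:T}}{P_{0:T}} = \kld{Q_0}{P_0} + \sum_{t = 1}^{T} \E_{Q_{0:(t-1)}} \bigl[ \kld{Q_{t|}}{P_{t|}} \bigr].$$
The hypothesis $Q_0 = P_0$ makes the first summand vanish. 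Then I would apply the data-processing inequality: since $\theta_T$ is a deterministic (hence Markov) function of the trajectory $(\theta_0,\dots,\theta_T)$, and KL divergence is non-increasing under push-forward through any Markov kernel, we have
$$\kld{Q_T}{P_T} \leq \kld{Q_{0:T}}{P_{0:T}} = \sum_{t = 1}^{T} \E_{Q_{0:(t-1)}} [\kld{Q_{t|}}{P_{t|}}],$$
which is exactly the claim.

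The only technicalities are bookkeeping. One should verify that $Q_{0:T}$ is absolutely continuous with respect to $P_{0:T}$; otherwise the right-hand side is already $+\infty$ and the bound is vacuously true. One should also confirm that the objects $Q_{t|}$ and $P_{t|}$ as defined in the statement --- both conditioned on the same realization of $\theta_{0:(t-1)}$ drawn from $Q_{0:(t-1)}$ --- match precisely the conditional densities produced by the chain-rule expansion of the joint, which they do by construction. No genuinely hard step arises; the entire content is packaged inside the two invoked facts. The one point worth emphasizing is the asymmetry of KL divergence, namely that the outer expectation is taken under $Q_{0:(t-1)}$ and not $P_{0:(t-1)}$, which is exactly the form that the chain rule delivers and which is crucial for the subsequent application of the lemma to trajectories generated by the ``posterior'' sampler on $S_\T$.
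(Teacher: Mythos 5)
Your proof is correct and is essentially the standard argument for this result, which the paper does not reprove but instead cites from Negrea et al.\ (their Proposition~2.6 is established in exactly this way): the exact chain-rule decomposition of $\kld{Q_{0:T}}{P_{0:T}}$, with the $\kld{Q_0}{P_0}$ term vanishing by hypothesis, followed by the data-processing inequality applied to the marginalization map $(\theta_0,\dots,\theta_T)\mapsto\theta_T$. Your explicit note that the outer expectation is under $Q_{0:(t-1)}$ rather than $P_{0:(t-1)}$ correctly identifies the one point where the definitions in the statement must be matched to the chain-rule factorization.
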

We apply the bound from Lemma~\ref{lma:klchainrule} to the KL divergence in Eq.~\ref{eq:pb_bound}. As we show in Corollary~\ref{cor:klbound}, the conditional distributions in the resulting inequality are Gaussian and, thus, KL divergences between them are tractable.
\begin{corollary}
We assume that the same constant step size $\eta$ is used to define both $p(\theta_T | S_\T, \lambda)$ and $p(\theta^{(\epsilon, \delta)}_T | S_\V, \lambda)$, $\hat{R}_{\V}(\theta; S_\V, \lambda)$ is $\gamma$-Lipschitz, and that both iterative methods are initialized with $\theta_0 \sim P_0$. Then, given that $\nu_t$ is the distribution of the $t$-th iterate of the Langevin sampler applied to $S_\T$,
\begin{multline*}
    \kld{p(\theta_T | S_\T, \lambda)}{p(\theta^{(\epsilon, \delta)}_T | S_\V, \lambda)} \leq B + \frac{n_\V \eta}{4} \times\\
    \sum_{t = 0}^{T - 1} \E_{\nu_t} \left [\norm{\nabla_\theta \hat{R}_\T(\theta; S^{(t)}_\T, \lambda) - \nabla_\theta \hat{R}_\V(\theta; S^{(t)}_\V, \lambda)}_2^2 \right]\eqcomma
\end{multline*}
for some constant $B(\eta,n_\T,n_\V,m)$.
\label{cor:klbound}
\end{corollary}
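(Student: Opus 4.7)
The plan is to combine Lemma~\ref{lma:klchainrule} with the closed-form KL divergence between Gaussians. Define $Q_t = \nu_t$ as the $t$-th marginal of the SGLD chain applied to $S_\T$ (so $Q_T = p(\theta_T|S_\T,\lambda)$) and let $P_t$ be the analogous marginal of the $(\epsilon,\delta)$-DP SGLD chain applied to $S_\V$ (so $P_T = p(\theta^{(\epsilon,\delta)}_T|S_\V,\lambda)$). Both chains start at the same $\theta_0 \sim P_0$, so the hypothesis $Q_0 = P_0$ holds and Lemma~\ref{lma:klchainrule} yields
\begin{equation*}
\kld{p(\theta_T | S_\T, \lambda)}{p(\theta^{(\epsilon,\delta)}_T | S_\V, \lambda)} \leq \sum_{t=1}^{T}\E_{Q_{0:(t-1)}}\kld{Q_{t|}}{P_{t|}}.
\end{equation*}
Because SGLD is a first-order Markov chain, each conditional $Q_{t|}$ (and $P_{t|}$) depends on the past only through $\theta_{t-1}$, so the expectation reduces to $\E_{\nu_{t-1}}$; re-indexing produces the outer sum $\sum_{t=0}^{T-1}\E_{\nu_t}[\cdot]$ required by the statement.

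Next I would expand each conditional KL in closed form. A Langevin step of size $\eta$ targeting $p(\theta|S,\lambda)\propto e^{-|S|\hat{R}(\theta;S,\lambda)}$ produces the Gaussian transition kernel $N(\theta_{t-1}-\eta\nabla_\theta\hat{R}(\theta_{t-1};S^{(t-1)},\lambda),\,\tfrac{2\eta}{|S|}I_m)$. Thus $Q_{t|}$ is centered at $\theta_{t-1}-\eta\nabla_\theta\hat{R}_\T$ with covariance $\tfrac{2\eta}{n_\T}I_m$, while the DP-SGLD conditional $P_{t|}$ is centered at $\theta_{t-1}-\eta\nabla_\theta\hat{R}_\V$ with isotropic covariance $\sigma_P^2 I_m$ for some $\sigma_P^2\geq 2\eta/n_\V$, where the excess variance is the DP noise calibrated by the Gaussian mechanism in Appendix~\ref{sec:sgldappendix}; the $\gamma$-Lipschitz hypothesis on $\hat{R}_\V$ is what bounds the per-sample gradient sensitivity and underwrites this calibration. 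Plugging into the standard formula
\begin{equation*}
\kld{N(\mu_Q,\sigma_Q^2 I_m)}{N(\mu_P,\sigma_P^2 I_m)} = \tfrac{\|\mu_Q-\mu_P\|_2^2}{2\sigma_P^2} + \tfrac{m}{2}\!\left(\tfrac{\sigma_Q^2}{\sigma_P^2} - 1 - \log\tfrac{\sigma_Q^2}{\sigma_P^2}\right),
\end{equation*}
the mean-difference contribution is $\tfrac{\eta^2}{2\sigma_P^2}\|\nabla\hat{R}_\T-\nabla\hat{R}_\V\|_2^2$, which is bounded above by $\tfrac{n_\V\eta}{4}\|\nabla\hat{R}_\T-\nabla\hat{R}_\V\|_2^2$ courtesy of $\sigma_P^2\geq 2\eta/n_\V$, matching the stated coefficient. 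The remaining covariance-mismatch term is a deterministic function of $(\eta,n_\T,n_\V,m)$ only, so summing over $t=1,\dots,T$ aggregates it into the promised constant $B(\eta,n_\T,n_\V,m)$.

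The main obstacle will be verifying that the $(\epsilon,\delta)$-DP variant of SGLD described in Appendix~\ref{sec:sgldappendix} really does produce a Gaussian one-step conditional with exactly the stated mean and an isotropic covariance satisfying $\sigma_P^2 \geq 2\eta/n_\V$ -- in particular, that the DP noise is injected coaxially with the native Langevin noise, rather than, say, onto per-sample clipped gradients in a way that would make the effective covariance anisotropic or $\lambda$-dependent and hence not absorbable into $B$. A secondary bookkeeping issue is that the per-step covariance-mismatch contribution accumulates over $T$ iterations while $B$ is not declared to depend on $T$; this is consistent provided one either takes $\eta = O(1/T)$, as is conventional for Langevin samplers, or treats $T$ as a fixed algorithmic parameter implicit in $B$.
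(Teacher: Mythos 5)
Your proposal is correct and follows essentially the same route as the paper's proof: apply Lemma~\ref{lma:klchainrule}, recognize each one-step conditional as a Gaussian with mean $\theta_{t-1} - \eta\nabla_\theta\hat{R}(\theta_{t-1};S^{(t-1)},\lambda)$ and covariance $\frac{2\eta}{|S|}\mathbb{I}_m$, and expand the closed-form Gaussian KL so that the mean-difference term yields the $\frac{n_\V\eta}{4}$ gradient-incoherence sum and the covariance mismatch is absorbed into $B$. The obstacle you flag resolves in the simplest way: Algorithm~\ref{alg:dpsgld_onechain} achieves $(\epsilon,\delta)$-DP by constraining the step size rather than injecting additional noise, so $\sigma_P^2 = 2\eta/n_\V$ exactly and the conditional is the isotropic Gaussian you need (with the $\gamma$-Lipschitz hypothesis making $\clip_\gamma$ the identity), while your secondary observation that the accumulated covariance-mismatch term really depends on $T$ is a fair criticism of the paper's notation for $B$ but does not affect the argument.
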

The identical step-size and Lipschitz assumptions are unrealistic for practical problems. We derive a tractable upper bound on $\kld{p(\theta_T | S_\T, \lambda)}{p(\theta^{(\epsilon, \delta)}_T | S_\V, \lambda)}$ in Appendix~\ref{sec:genappendix} that does not require either assumption. We discuss connections between this more general bound and hypotheses regarding the superiority of ``flat optima'' in Section~\ref{sec:relatedobs} \cite{hochreiter1997flat}. 

Recognizing that 
\begin{multline*}
    d^2_{2, \nu_t}(p^1(\theta | S^{(t)}_\T, \lambda), p^1(\theta | S^{(t)}_\V, \lambda)) = \\
    \E_{\nu_t} \left [\norm{\nabla_\theta \hat{R}_\T(\theta; S^{(t)}_\T, \lambda) - \nabla_\theta \hat{R}_\V(\theta; S^{(t)}_\V, \lambda)}_2^2 \right]\eqcomma
\end{multline*}
we apply Corollary~\ref{cor:klbound} and the triangle inequality to the right-hand-side of Eq.~\ref{eq:pb_bound} and exclude terms that do not depend on $\lambda$,
\begin{multline}
    \E_{p(\theta_T | S_\T, \lambda)}[\hat{R}_\V(\theta; S_\V, \lambda)] + \\
    \left\{\frac{\eta}{4} \sum_{t = 0}^{T-1} d^2_{2, \nu_t}(p^1(\theta | S^{(t)}_\T, \lambda), p^1(\theta | S^{(t)}_\V, \lambda))\right\}^{1/2}\eqperiod
\label{eq:ho-stability_cat1}
\end{multline}
Then, substituting $\zeta$ for $\sqrt{\frac{\eta}{4}}$ recovers the objective in Eq.~\ref{eq:ho-stability}.

Eq.~\ref{eq:ho-stability_cat1} establishes that the minimization of Eq.~\ref{eq:ho-stability} reduces the expected risk for out-of-sample validation data. Next, we consider the effect of using out-of-sample training data in the parameter optimization. To measure this impact, we would ideally compute the difference between the observed posterior distribution and the expected posterior distribution for unseen training data. We proceed by deriving a tractable, but inexact estimate of this difference. We model the expected posterior distribution by defining $p(\theta | \mathcal{D}_\T, \lambda) \propto \exp \left (-n_\T R_\T(\theta; \mathcal{D}_\T, \lambda) \right)$. The KL divergence, $\kld{p(\theta | S_\T, \lambda)}{p(\theta | \mathcal{D}_\T, \lambda)}$, then approximates the difference between the observed and expected posterior distributions. Using an argument similar to that of \citet{Lever2013}, we show below that minimizing the second term of Eq.~\ref{eq:ho-stability} also minimizes a PAC-Bayes bound on this quantity.

We begin by establishing that if the Langevin sampler for the training posterior is sufficiently converged, the difference between the expected and empirical training risk given $\theta \sim p(\theta_T | S_\T, \lambda)$ upper bounds $\kld{p(\theta | S_\T, \lambda)}{p(\theta | \mathcal{D}_\T, \lambda)}$.
\begin{lemma}
If $d^1_\mathcal{W}(p(\theta_T | S_\T, \lambda), p(\theta | S_\T, \lambda)) \leq \kappa$ and $\ell$ is $\gamma$-Lipschitz ,\footnote{\citet{xu2018global} provide rates of convergence in the 1-Wasserstein distance to the stationary distribution for both SGLD and LD.} then 
\begin{multline*}
     \frac{1}{n_\T}\kld{p(\theta | S_\T, \lambda)}{p(\theta | \mathcal{D}_\T, \lambda)} \leq 2\gamma \kappa +\\
     \E_{p(\theta_T | S_\T, \lambda)} [R_\T(\theta; \mathcal{D}_\T, \lambda) - \hat{R}_\T(\theta; S_\T, \lambda)]\eqperiod
\end{multline*}
\vspace{-20pt}
\label{lma:trainrw}
\end{lemma}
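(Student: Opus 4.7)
The plan is to expand the KL divergence using the closed-form Gibbs densities, control the resulting partition-function ratio via Jensen's inequality, and then use Kantorovich--Rubinstein duality (leveraging the $\gamma$-Lipschitz assumption) to swap the expectation under the true posterior $p(\theta|S_\T,\lambda)$ for one under the sampler's distribution $p(\theta_T|S_\T,\lambda)$, picking up the $2\gamma\kappa$ slack.

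Let $Q \defeq p(\theta|S_\T,\lambda) \propto e^{-n_\T \hat{R}_\T(\theta; S_\T, \lambda)}$ with normalizer $Z_Q$, and $P \defeq p(\theta|\mathcal{D}_\T,\lambda) \propto e^{-n_\T R_\T(\theta; \mathcal{D}_\T, \lambda)}$ with normalizer $Z_P$. First, plugging these densities into the definition of KL divergence and simplifying gives
\begin{align*}
\frac{1}{n_\T}\kld{Q}{P} = \E_Q[R_\T(\theta;\mathcal{D}_\T,\lambda) - \hat{R}_\T(\theta;S_\T,\lambda)] + \frac{1}{n_\T}\log \frac{Z_P}{Z_Q}\eqperiod
\end{align*}
Second, to control the partition-function ratio I would apply Jensen's inequality to the convex map $x \mapsto e^{-n_\T x}$ together with the unbiasedness $\E_{S_\T}[\hat{R}_\T(\theta;S_\T,\lambda)] = R_\T(\theta;\mathcal{D}_\T,\lambda)$, yielding $e^{-n_\T R_\T(\theta;\mathcal{D}_\T,\lambda)} \leq \E_{S_\T}[e^{-n_\T \hat{R}_\T(\theta;S_\T,\lambda)}]$, so after integrating over $\theta$ we obtain $Z_P \leq \E_{S_\T}[Z_Q]$ and hence $\log(Z_P/Z_Q) \leq 0$ in expectation over the training sample. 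This is the Lever (2013)-style step the paper alludes to.

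Third, I would invoke Kantorovich--Rubinstein duality. Since $\ell(\cdot;x,y,\lambda)$ is $\gamma$-Lipschitz in $\theta$, so are both $R_\T(\theta;\mathcal{D}_\T,\lambda)$ and $\hat{R}_\T(\theta;S_\T,\lambda)$, making their difference $2\gamma$-Lipschitz. Combined with the Wasserstein hypothesis, this gives
\begin{align*}
\E_Q[R_\T - \hat{R}_\T] \leq \E_{p(\theta_T|S_\T,\lambda)}[R_\T - \hat{R}_\T] + 2\gamma \cdot d^1_{\mathcal{W}}(p(\theta_T|S_\T,\lambda), Q) \leq \E_{p(\theta_T|S_\T,\lambda)}[R_\T - \hat{R}_\T] + 2\gamma\kappa\eqperiod
\end{align*}
Combining the three steps and dividing by $n_\T$ yields the claim.

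The main obstacle will be step two: Jensen's immediately delivers $\E_{S_\T}[\log(Z_P/Z_Q)] \leq 0$, but the lemma is stated for a \emph{fixed} $S_\T$, so some additional concentration or ``typical-sample'' argument is required to convert the in-expectation bound into a deterministic one. I would expect the appendix proof to either fold this subtlety into the Lever-style chain of inequalities or to rely on a variational identity that sidesteps a direct bound on $Z_P/Z_Q$; this is the step I would scrutinize most carefully when reading the authors' version.
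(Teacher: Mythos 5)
Your step one is the correct expansion of the KL divergence, and your step three (the Kantorovich--Rubinstein swap: $R_\T-\hat R_\T$ is $2\gamma$-Lipschitz, so the change of measure from $p(\theta\,|\,S_\T,\lambda)$ to $p(\theta_T\,|\,S_\T,\lambda)$ costs at most $2\gamma\kappa$) is exactly the paper's argument. The genuine problem is step two, and your instinct to scrutinize it is right --- but the fix you propose does not work, even in expectation. Jensen applied to $x\mapsto e^{-n_\T x}$ gives $Z_P \le \E_{S_\T}[Z_Q]$ and hence $\log Z_P \le \log \E_{S_\T}[Z_Q]$; what you would need for ``$\log(Z_P/Z_Q)\le 0$ in expectation'' is $\log Z_P \le \E_{S_\T}[\log Z_Q]$, and since $\E_{S_\T}[\log Z_Q]\le \log\E_{S_\T}[Z_Q]$ (Jensen again), your two inequalities point the same way and cannot be chained. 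The standard Lever-style repair is different: write $Z_Q/Z_P = \E_{P}\left[e^{n_\T(R_\T-\hat R_\T)}\right] \ge e^{n_\T\E_P[R_\T-\hat R_\T]}$, which gives $\log(Z_P/Z_Q)\le n_\T\,\E_P[\hat R_\T - R_\T]$ and hence $\kld{Q}{P}\le n_\T\left(\E_Q[R_\T-\hat R_\T]+\E_P[\hat R_\T-R_\T]\right)$ (equivalently, upper-bound $\kld{Q}{P}$ by the symmetrized divergence, in which the partition functions cancel). The extra term $\E_P[\hat R_\T - R_\T]$ is not zero for a fixed $S_\T$; it has mean zero over the draw of $S_\T$ because $P$ is data-independent, and would have to be controlled by a separate concentration argument. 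The lemma as stated carries no such term.

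For what it is worth, the paper's own proof does not resolve this either: it asserts the identity $n_\T\,\E_{p(\theta|S_\T,\lambda)}[R_\T(\theta;\mathcal{D}_\T,\lambda)-\hat R_\T(\theta;S_\T,\lambda)] = \kld{p(\theta|S_\T,\lambda)}{p(\theta|\mathcal{D}_\T,\lambda)}$ outright, which holds only if $Z_P=Z_Q$ --- i.e., it silently drops the very $\log(Z_P/Z_Q)$ term you flagged. So your decomposition and Wasserstein step match the published argument essentially verbatim; the partition-function ratio is a real gap in both your write-up and the paper's, and closing it requires either the symmetrization above together with a concentration bound on $\E_P[\hat R_\T - R_\T]$, or an additional assumption guaranteeing $Z_P\le Z_Q$ for the realized sample.
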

We can apply Theorem~\ref{thm:sqrtpbapproxdp} to minimize the bound in Lemma~\ref{lma:trainrw}. We move the empirical risk to the left-hand-side of the bound in Theorem~\ref{thm:sqrtpbapproxdp}, replace the missing subscripts with $\T$, and make the following substitutions: $Q \equiv p(\theta_T | S_\T, \lambda)$ and $P \equiv p(\theta^{(\epsilon, \delta)}_T | S_\V, \lambda)$. Denoting the terms that do not depend on $\lambda$ as $C$ results in the following bound:
\begin{multline}
    \E_{p(\theta_T | S_\T, \lambda)}[R_\T(\theta; \mathcal{D}_\T, \lambda) - \hat{R}_\T(\theta; S_\T, \lambda)] \leq\\
    \sqrt{ \frac{1}{n_\T} \kld{p(\theta_T | S_\T, \lambda)}{p(\theta^{(\epsilon, \delta)}_T | S_\V, \lambda)}  + C}\eqperiod
\label{eq:traingenbound}
\end{multline}
Corollary~\ref{cor:trainbound} follows by applying Lemma~\ref{lma:trainrw}, Corollary~\ref{cor:klbound}, and the triangle inequality to Eq.~\ref{eq:traingenbound}.

\begin{corollary}
Retain the assumptions of Corollary~\ref{cor:klbound} and Lemma~\ref{lma:trainrw}. Then, minimizing 
\begin{align*}
    \sqrt{\sum_{t = 0}^{T - 1} d^2_{2, \nu_t}(p^1(\theta | S^{(t)}_\T, \lambda), p^1(\theta | S^{(t)}_\V, \lambda))}
\end{align*}
is equivalent to minimizing an upper bound on 
\begin{align*}
    \kld{p(\theta | S_\T, \lambda)}{p(\theta | \mathcal{D}_\T, \lambda)}
\end{align*}
with respect to $\lambda$.
\label{cor:trainbound}
\end{corollary}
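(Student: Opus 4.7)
The plan is to chain Lemma~\ref{lma:trainrw}, the PAC-Bayes inequality of Eq.~\ref{eq:traingenbound} (itself a consequence of Theorem~\ref{thm:sqrtpbapproxdp}), and Corollary~\ref{cor:klbound} so that the only $\lambda$-dependent term in the resulting upper bound on $\kld{p(\theta | S_\T, \lambda)}{p(\theta | \mathcal{D}_\T, \lambda)}$ is a constant multiple of the sum of squared $(2,\nu_t)$-Fisher distances.

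First I would apply Lemma~\ref{lma:trainrw} to upper bound $\frac{1}{n_\T}\kld{p(\theta | S_\T, \lambda)}{p(\theta | \mathcal{D}_\T, \lambda)}$ by $2\gamma\kappa + \E_{p(\theta_T | S_\T, \lambda)}[R_\T(\theta; \mathcal{D}_\T, \lambda) - \hat{R}_\T(\theta; S_\T, \lambda)]$. The additive term $2\gamma\kappa$ is independent of $\lambda$ (under the Lipschitz and convergence hypotheses inherited from Lemma~\ref{lma:trainrw}), so it will eventually be absorbed into a constant. Next I would invoke Eq.~\ref{eq:traingenbound}, which is exactly the instance of Theorem~\ref{thm:sqrtpbapproxdp} with $Q \equiv p(\theta_T | S_\T, \lambda)$ and $P \equiv p(\theta^{(\epsilon,\delta)}_T | S_\V, \lambda)$, to control the expected generalization gap by $\sqrt{\frac{1}{n_\T}\kld{p(\theta_T | S_\T, \lambda)}{p(\theta^{(\epsilon,\delta)}_T | S_\V, \lambda)} + C}$, where $C$ collects the terms from Theorem~\ref{thm:sqrtpbapproxdp} that do not depend on $\lambda$.

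Third, I would substitute the Corollary~\ref{cor:klbound} bound $B + \frac{n_\V \eta}{4}\sum_{t=0}^{T-1}\E_{\nu_t}[\lVert \nabla_\theta \hat{R}_\T - \nabla_\theta \hat{R}_\V\rVert_2^2]$ into this square root. Using the identification
\begin{equation*}
\E_{\nu_t}\bigl[\lVert \nabla_\theta \hat{R}_\T(\theta;S^{(t)}_\T,\lambda) - \nabla_\theta \hat{R}_\V(\theta;S^{(t)}_\V,\lambda)\rVert_2^2\bigr] = d^2_{2,\nu_t}\bigl(p^1(\theta|S^{(t)}_\T,\lambda), p^1(\theta|S^{(t)}_\V,\lambda)\bigr),
\end{equation*}
the quantity under the square root becomes a linear combination of the regularizer of interest and the $\lambda$-free constants $B$ and $C$. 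Finally, I would use subadditivity of the square root, $\sqrt{a+b} \le \sqrt{a}+\sqrt{b}$ (the ``triangle inequality'' cited in the text for Eq.~\ref{eq:ho-stability_cat1}), to split the bound into a term proportional to $\sqrt{\sum_t d^2_{2,\nu_t}(\cdot,\cdot)}$ and a $\lambda$-free additive constant; adding the already-isolated $2\gamma\kappa$ from step one and multiplying through by $n_\T$ yields the claimed equivalence.

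The main obstacle is bookkeeping: one has to verify that the constants $B$, $C$, $\kappa$, and $\gamma$ arising from the three invoked results are genuinely free of $\lambda$. In particular, Corollary~\ref{cor:klbound} and Lemma~\ref{lma:trainrw} both assume $\gamma$-Lipschitzness, and the argument requires this constant (and the 1-Wasserstein convergence slack $\kappa$) to hold uniformly over $\lambda \in \Lambda$; otherwise the ``constant'' additive term silently reintroduces $\lambda$-dependence and the equivalence breaks. Once uniform Lipschitzness and the step-size assumption of Corollary~\ref{cor:klbound} are in force, the rest of the proof is a mechanical chaining of the three bounds and a single application of $\sqrt{a+b}\le\sqrt{a}+\sqrt{b}$.
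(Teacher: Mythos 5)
Your proposal is correct and follows essentially the same route as the paper's proof: chain Lemma~\ref{lma:trainrw}, the PAC-Bayes bound of Eq.~\ref{eq:traingenbound} (equivalently Eq.~\ref{eq:pb_bound}), and Corollary~\ref{cor:klbound}, then split off the $\lambda$-free constants via $\sqrt{a+b}\le\sqrt{a}+\sqrt{b}$. Your closing remark that $\gamma$, $\kappa$, $B$, and $C$ must be uniform in $\lambda$ for the "equivalence" to hold is a fair caveat that the paper leaves implicit, but it does not change the argument.
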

Corollary~\ref{cor:trainbound} establishes the desired result: Minimizing the regularizer in Eq.~\ref{eq:ho-stability} also minimizes our approximation to the difference between the in-sample and out-of-sample training posterior distributions.

\section{Algorithm} \label{sec:algorithm}
Here we show how to modify gradient-based optimization algorithms from the literature in order to minimize Eq.~\ref{eq:ho-stability}. To simplify comparison to prior work, we construct a one-sample estimator of the expectation in Eq.~\ref{eq:ho-stability} by only running a single optimization. Several previously developed methods can be used to efficiently minimize the empirical validation risk  \cite{domke12, maclaurin15, franceschi17a, shaban2018truncated, lorraine2019optimizing}. These methods can be used without modification to optimize the first term of Eq.~\ref{eq:ho-stability}.

Optimizing the regularizer in Eq.~\ref{eq:ho-stability} presents additional challenges. In our analysis below, we assume the typical setting in which the number of hyperparameters, $n$, does not dominate the number of parameters, $m$. Naively applying reverse-mode differentiation (RMD) to Eq.~\ref{eq:ho-stability} results in a prohibitively expensive algorithm with $O(mT + n)$ space complexity and $O(mT^2 + nT)$ time complexity \cite{rumelhart1986learning}. In comparison, methods developed to minimize the empirical validation risk can have as low as $O(m + n)$ space complexity and $O(mT + n)$ time complexity \cite{lorraine2019optimizing}. 

We make two modifications to the standard RMD algorithm to make computing a hyperparameter gradient feasible. To reduce the time complexity of computing a gradient with respect to the regularizer, we apply $K$-truncated-RMD to each summand in the regularizer \cite{shaban2018truncated}. Instead of backpropagating through all previous steps of the parameter optimization, we end the backpropagation at $\theta_{t - K}$ for the $(2,\nu_t)$-Fisher distance evaluated at $\theta_t$. This approximation reduces the time complexity of optimizing Eq.~\ref{eq:ho-stability} to $O(mKT + nT)$.

The space complexity of the algorithm, however, remains prohibitive; applying truncated RMD to each summand in the regularizer requires the storage of all optimization iterates. We make the following observation: The final hyperparameter gradient of the regularizer can be expressed as a scaled sum of gradients computed with respect to each summand. We can then compute the hyperparameter gradient by accumulating intermediate values in two auxiliary variables of dimension $n$ and $1$, respectively, at each iteration of the parameter optimization. 

\begin{algorithm}[ht!]
   \caption{Optimization of Eq.~\ref{eq:ho-stability}}
   \label{alg:ho_stability_alg_mt}
\begin{algorithmic}
   \STATE {\bfseries Input:} $S_\T$, $S_\V$, $T$, $\{\eta_t\}_{t = 0}^{T-1}$, $P_0$, $\lambda_0$, $K$
   \STATE {\bfseries Output:} $\lambda$
   \WHILE{not converged}
   \STATE $\theta_0 \sim P_0$
   \STATE $X \leftarrow 0, Y \leftarrow 0$
   \FOR{$t=0$ {\bfseries to} $T-1$}
   \STATE $\theta_{t + 1} \leftarrow \theta_t - \eta_t \nabla_\theta \hat{R}_\T(\theta_t; S^{(t)}_\T, \lambda)$
   \STATE $X \leftarrow X + \frac{\zeta}{2} \widetilde{\nabla}^K_\lambda \left\lVert \nabla_\theta \hat{R}_\T(\theta_t; S^{(t)}_\T, \lambda) \right .$ \\
   \hfill$ \left . - \nabla_\theta \hat{R}_\V(\theta_t; S^{(t)}_\V, \lambda) \right \rVert_2^2$
   \STATE $Y \leftarrow Y + \left \lVert\nabla_\theta \hat{R}_\T(\theta_t; S^{(t)}_\T, \lambda)  \right.$ \\
   \hfill $\left . - \nabla_\theta \hat{R}_\V(\theta_t; S^{(t)}_\V, \lambda) \right \rVert_2^2$
   \ENDFOR
   \STATE $\lambda \leftarrow \lambda - \widetilde{\nabla}^K_\lambda \hat{R}_\V(\theta_{T}; S_\V, \lambda) - X / \sqrt{Y}$ 
   \ENDWHILE
\end{algorithmic}
\end{algorithm}

Algorithm~\ref{alg:ho_stability_alg_mt} summarizes the steps required for optimization of Eq.~\ref{eq:ho-stability}. We denote the auxiliary variables using $X$ and $Y$ and the $K$-truncated-RMD gradient using $\widetilde{\nabla}_\lambda^K$. \citet{shaban2018truncated} show that under certain regularity conditions, even choosing $K=1$ yields gradient estimates that are sufficient descent directions for hyperparameter optimization. In Figure~\ref{fig:k_comp}, we observe that either selecting $K = 1$ or ignoring the implicit dependence of the summands on the inner optimization still minimizes the regularizer. Both choices result in space and time complexities of $O(m + n)$ and $O((m + n)T)$, respectively. 

Despite the apparent computational burden of optimizing a PAC-Bayes bound, this algorithm matches the optimal space and time complexity of any gradient-based algorithm that only minimizes the empirical validation risk. When we do not anticipate re-using hyperparameters for additional training set optimizations, we can further accelerate the optimization and reduce memory overhead by way of online learning of the hyperparameters. We show in Appendix~\ref{sec:algappendix} how one can view optimization of the regularizer as an instance of online convex optimization \cite{hazan2016introduction}.
\section{Results and Discussion}
Code to reproduce all experiments discussed is available at \href{https://github.com/DEShawResearch/PACBayesHyperOpt}{https://github.com/DEShawResearch/PACBayesHyperOpt}. 
\label{sec:experiments}
\subsection{Feature Selection} \label{sec:featselection}
Freedman's paradox illustrates the difficulty of preventing overfitting in hyperparameter optimization even for problems that do not involve deep neural networks \cite{freedman1983note}. We consider two versions of Freedman's problem. In the first, we generate $500$ input-label pairs, $\{(x_i, y_i)\}_{i = 1}^{500}$ where $x_i \sim \mathcal{N}(\mathbf{0_{500}}, \mathbb{I}_{\mathbf{500}})$ and $y_i \sim \mathcal{N}(0,1)$, which we split into equally sized training and validation sets. This data-generation procedure ensures that no model fit to this data set can be accurate for additional data sampled from the data-generating distribution. Our second version is more realistic: We introduce two input features to the prior experiment that have \emph{true} nonzero correlation with the labels. Details of this data-generating process are in Appendix~\ref{sec:featappendix}.

In both variations of the experiment, we do not utilize a gradient-based algorithm for hyperparameter optimization, and instead perform stepwise forward selection on the features of a linear model. Forward selection using Eq.~\ref{eq:bilevel-ho} results in a model that includes an arbitrarily large set of predictors; spuriously correlated features improve validation set goodness-of-fit. In Figure~\ref{fig:baddecision_null} (for the first experiment) and Figure~\ref{fig:baddecision} (for the second experiment), we compare the optimization objective to test set mean-squared-error (MSE) for models selected using Eq.~\ref{eq:bilevel-ho}. The supposed ``best'' model performs poorly, and, in fact, better model performance on the validation data (as measured by the hyperparameter objective, the right axis) correlates with worse performance on out-of-sample data (test set MSE, the left axis).

In contrast, when we use Eq.~\ref{eq:ho-stability} to select features, we choose models with smaller out-of-sample error than those chosen using Eq.~\ref{eq:bilevel-ho}. For both experiments, we set $\zeta = \eta/4$. For the first experiment, we show in Figure~\ref{fig:gooddecision_null} that the regularized objective is correlated with the test set MSE. In Figure~\ref{fig:gooddecision}, the optimal model includes only the truly predictive features of the second experiment. Forward selection using the regularized objective results in an accurate assessment of the poor out-of-sample performance of models containing additional features.

\begin{figure}[ht!]
     \centering
     \begin{subfigure}[b]{\columnwidth}
         \centering
         \includegraphics[width=\columnwidth]{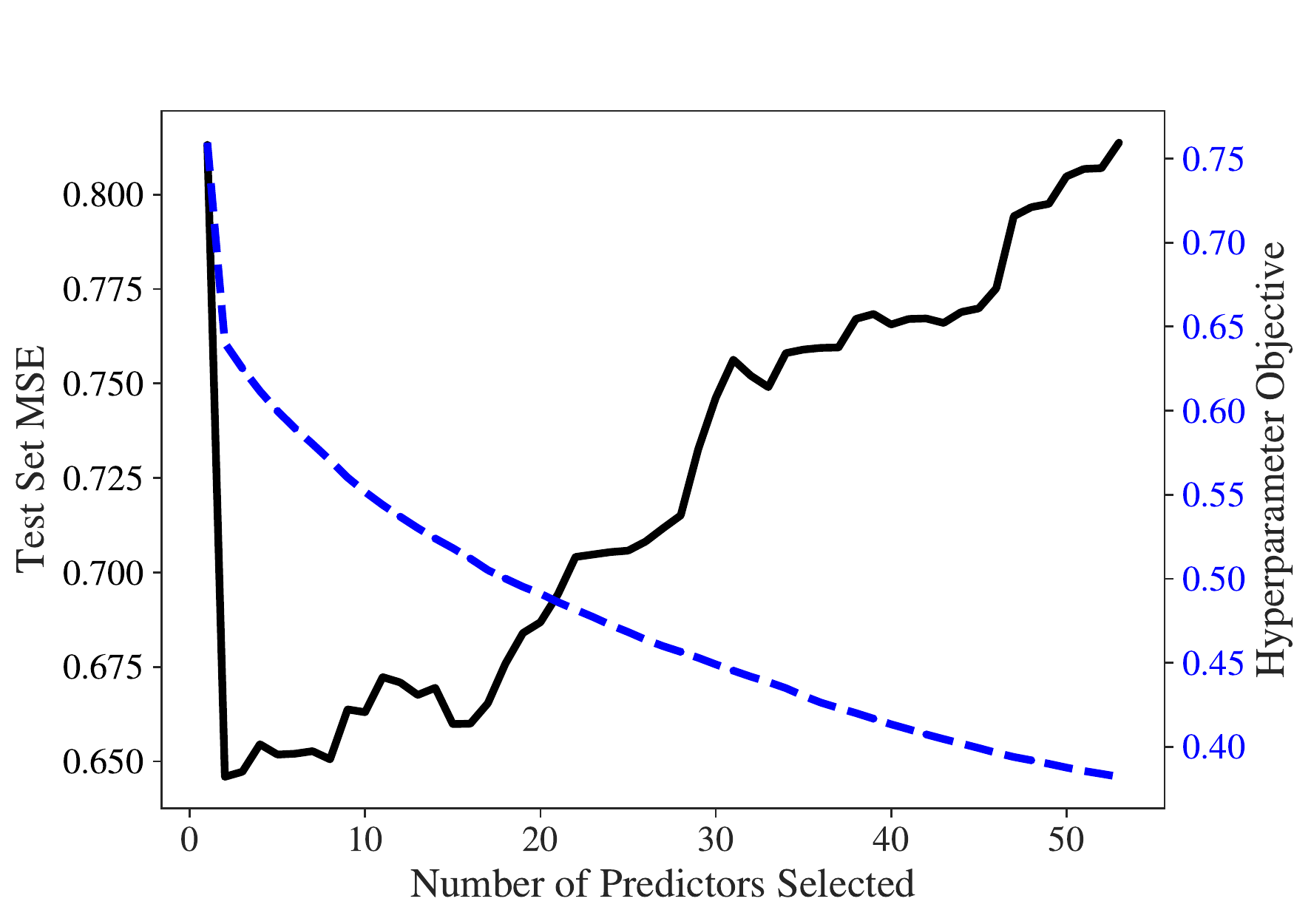}
         \caption{Test set MSE vs. objective for Eq.~\ref{eq:bilevel-ho}.}
         \label{fig:baddecision}
     \end{subfigure}
     \hfill
     \begin{subfigure}[b]{\columnwidth}
         \centering
         \includegraphics[width=\columnwidth]{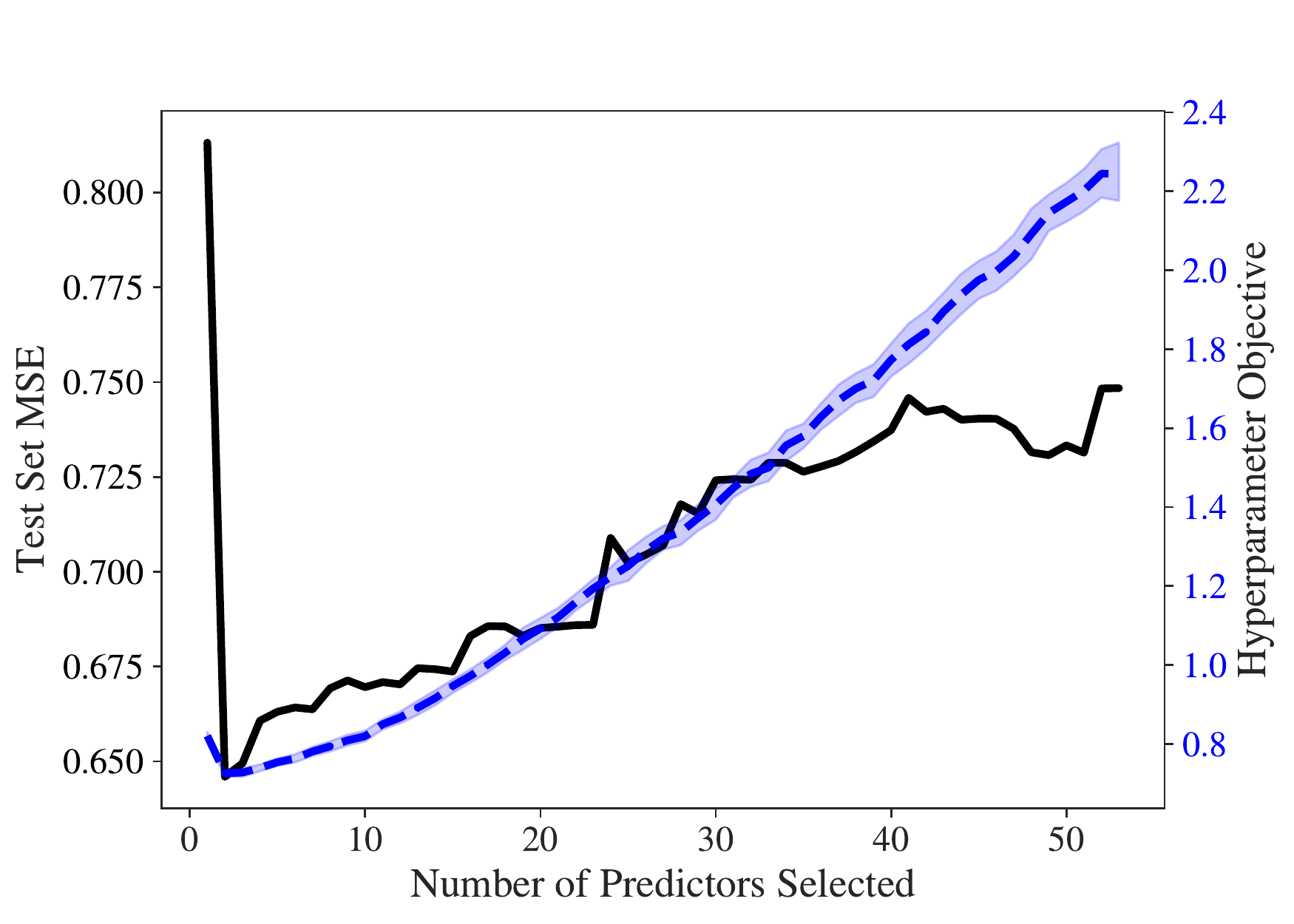}
         \caption{Test set MSE vs. objective for Eq.~\ref{eq:ho-stability}. The shaded regions are $95$\% confidence intervals constructed from $50$ optimizations initialized from randomly sampled parameters.}
         \label{fig:gooddecision}
     \end{subfigure}
     \caption{Comparing objectives to test set MSE for Freedman's paradox with two true predictors. }
     \label{fig:freedman_nonnull}
\end{figure}
\subsection{Regularization Penalty} \label{sec:regpenalty}
Next, we study an example of validation set overfitting first described in \citet{lorraine2019optimizing}. For both the MNIST and CIFAR-10 data sets, we construct a training and validation data set that includes $50$ randomly sampled images each \cite{lecun1998gradient, krizhevsky2009learning}. We then fit several classifiers to these training sets; in this section, we present results for a one-layer fully connected network (i.e., a linear classifier) and ResNet-18 \cite{he2016deep}. For each parameter in these classifiers, we introduce a weight decay hyperparameter \cite{loshchilov2018fixing}. For ResNet-18, this results in the optimization of over $12$ million hyperparameters. We provide additional details regarding the experimental setup in Appendix~\ref{sec:regappendix}.

For both the regularized and unregularized objective, we compute the hyperparameter gradient of the empirical validation risk using the $T1-T2$ approximation proposed in \citet{Luketina16}. We then minimize Eq.~\ref{eq:bilevel-ho} using gradient descent. To optimize Eq.~\ref{eq:ho-stability}, we apply Algorithm~\ref{alg:ho_stability_alg_mt} with $K = 0$. We use grid search to select the $\zeta$ that minimizes out-of-sample error, though we show in Appendix~\ref{sec:regappendix} that our results are qualitatively unchanged for a wide range of penalties. We describe the remaining optimizer settings in Appendix~\ref{sec:regappendix}.

By using small training and validation data sets, we guarantee that minimization of the empirical validation risk alone leads to dramatic overfitting. All optimized models achieve near-zero validation loss and at least 98\% top-1 validation accuracy, but the true out-of-sample accuracy of these models is substantially lower.

Evaluation on out-of-sample data reveals substantial differences between the models that result from Eq.~\ref{eq:bilevel-ho} and from Eq.~\ref{eq:ho-stability}.  In the former case, overfitting to the validation set causes substantial degradation in test set accuracy (Figure~\ref{fig:weight_decay}, dashed lines). Conversely, the models resulting from Eq.~\ref{eq:ho-stability} (Figure~\ref{fig:weight_decay}, solid lines) exhibit superior test set accuracy, which remains stable over the course of optimization. Figures~\ref{fig:linear_mnist_reg_offline_mt} and \ref{fig:resnet18_cifar10_reg_offline_mt} corroborate the theoretical connection we make between Eq.~\ref{eq:ho-stability} and generalization error bounds. These plots show a strong positive correlation between the value of the regularizer and the generalization error; the correlation remains positive even for the largest values of $\zeta$ tested. Appendix~\ref{sec:regappendix} includes additional results for experiments with other classifier-dataset pairs. 

We also compare Eq.~\ref{eq:ho-stability} to a simpler, problem-specific heuristic for preventing overfitting when optimizing Eq.~\ref{eq:bilevel-ho}; namely, we select the classifier with smallest weight-norm that also achieves the maximum top-1 validation accuracy. Averaging across all 6 classifier-dataset pairs that we evaluate, we show in Table~\ref{tbl:baseline} that the optimization of Eq.~\ref{eq:ho-stability} improves final test accuracy by 12\% when compared to this heuristic.
\begin{figure}[ht!]
     \centering
     \begin{subfigure}{\columnwidth}
         \centering
         \includegraphics[width=\columnwidth]{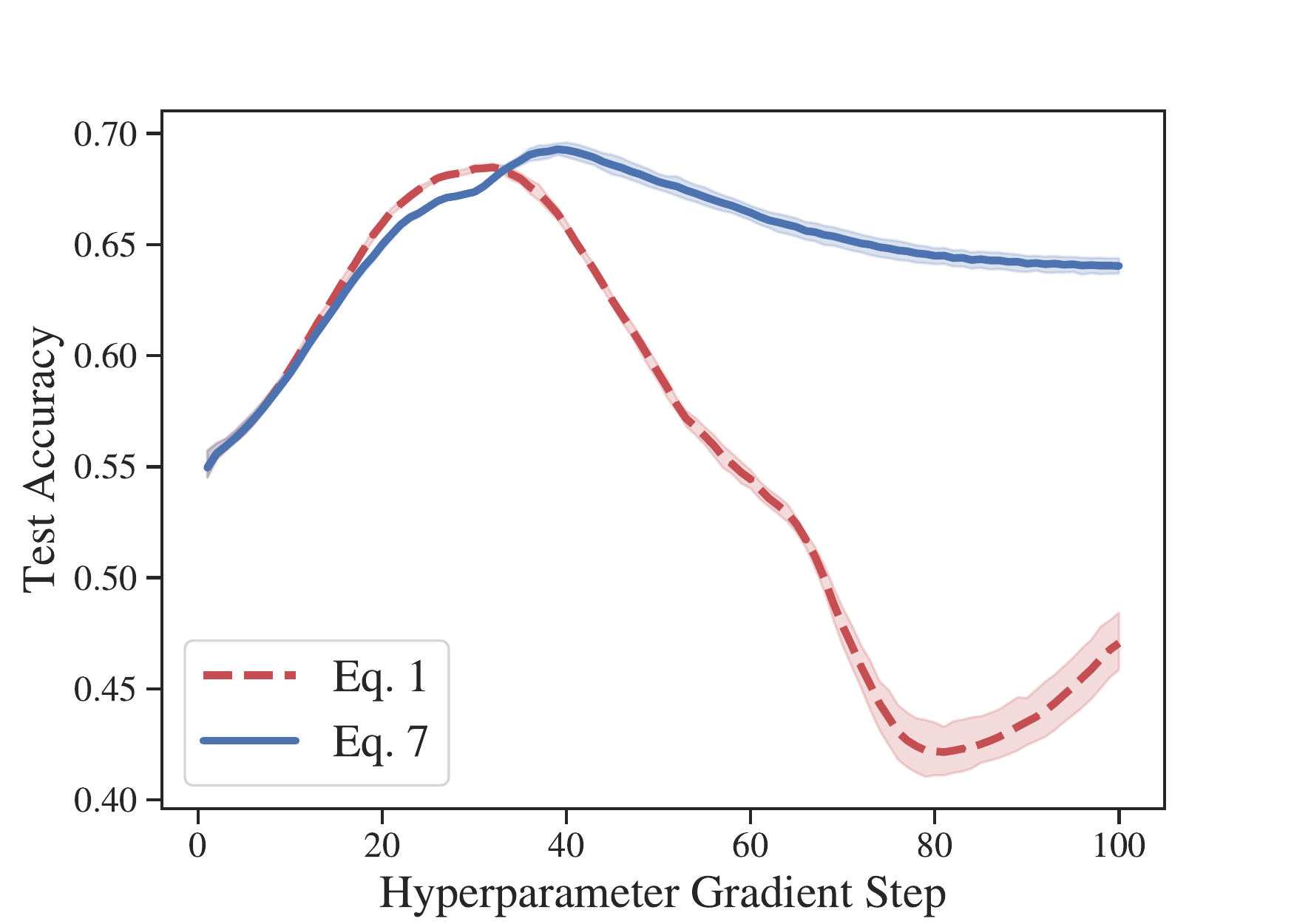}
         \caption{Fitting weight decays for a linear classifier on MNIST.}
         \label{fig:linear_mnist}
     \end{subfigure}
     \hfill
     \begin{subfigure}{\columnwidth}
         \centering
         \includegraphics[width=\columnwidth]{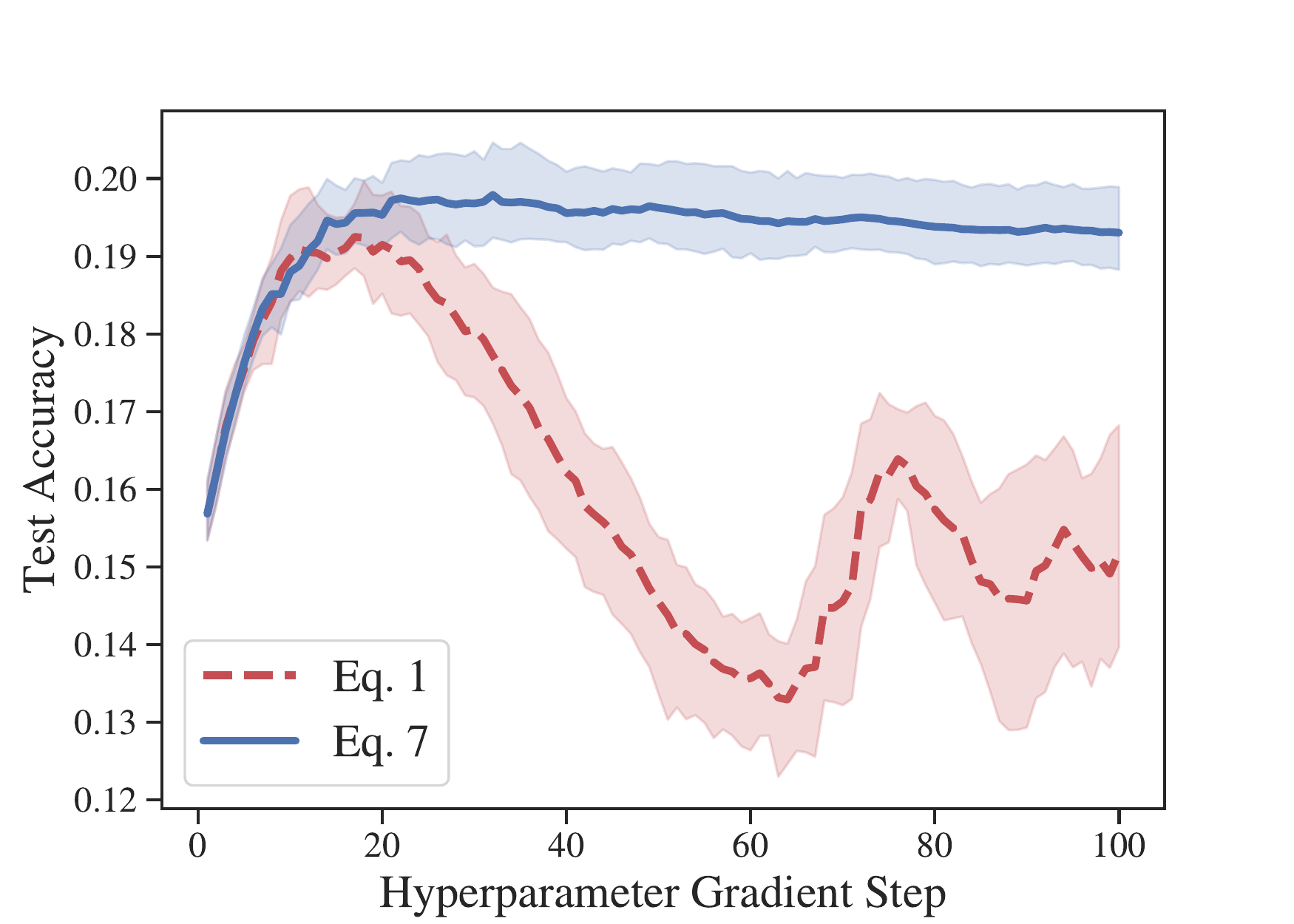}
         \caption{Fitting weight decays for ResNet-18 on CIFAR-10.}
         \label{fig:resnet18_cifar-10}
     \end{subfigure}
        \caption{Overfitting a validation set with per-parameter weight decays. The shaded regions are $95$\% confidence intervals constructed from five optimizations initialized from randomly sampled parameters.}
        \label{fig:weight_decay}
\end{figure}
\begin{figure}[ht!]
     \centering
     \begin{subfigure}{\columnwidth}
         \centering
         \includegraphics[width=\columnwidth]{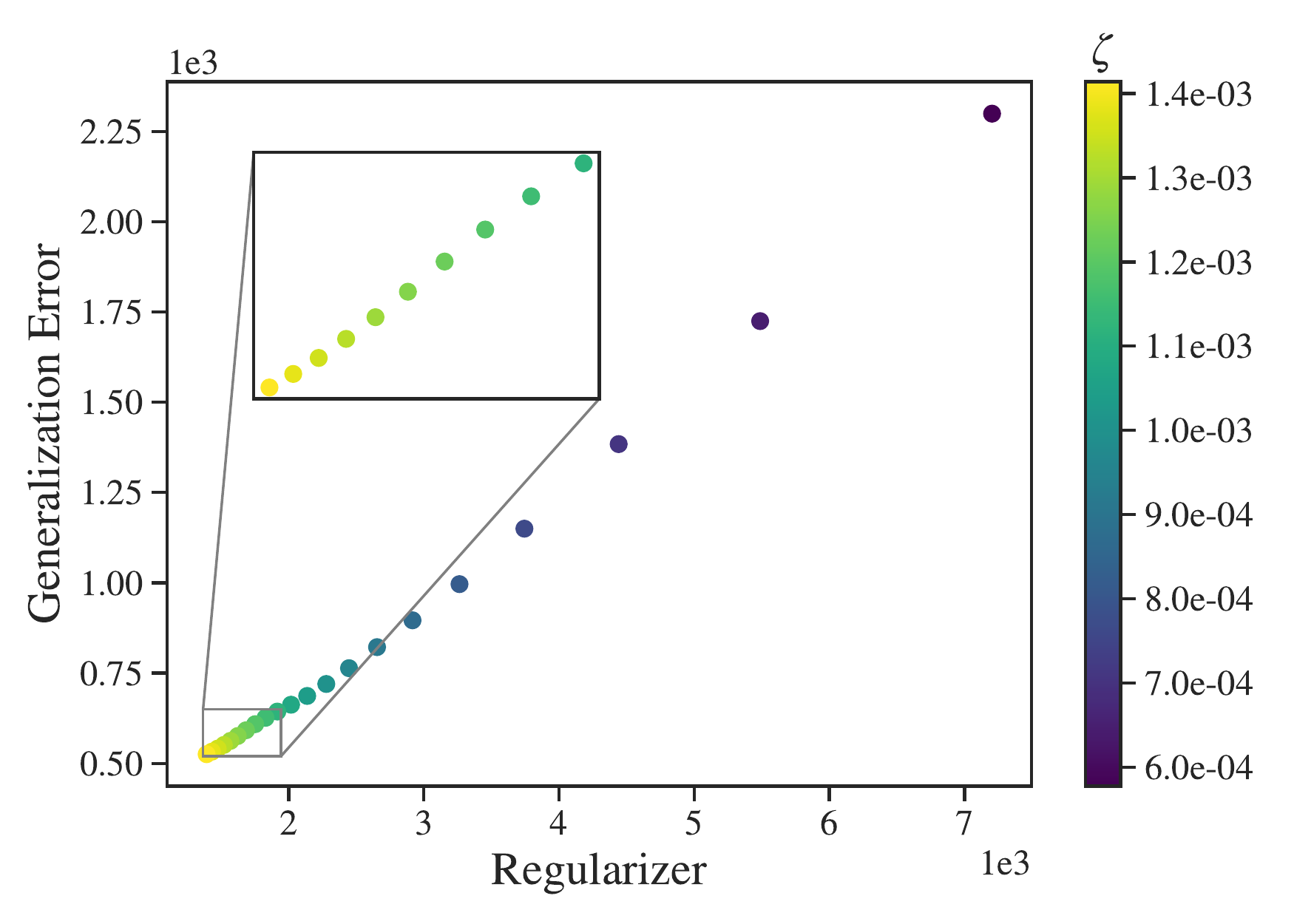}
         \caption{Error-regularizer correlation for a linear classifier on MNIST.}
         \label{fig:linear_mnist_reg_offline_mt}
     \end{subfigure}
     \hfill
     \begin{subfigure}{\columnwidth}
         \centering
         \includegraphics[width=\columnwidth]{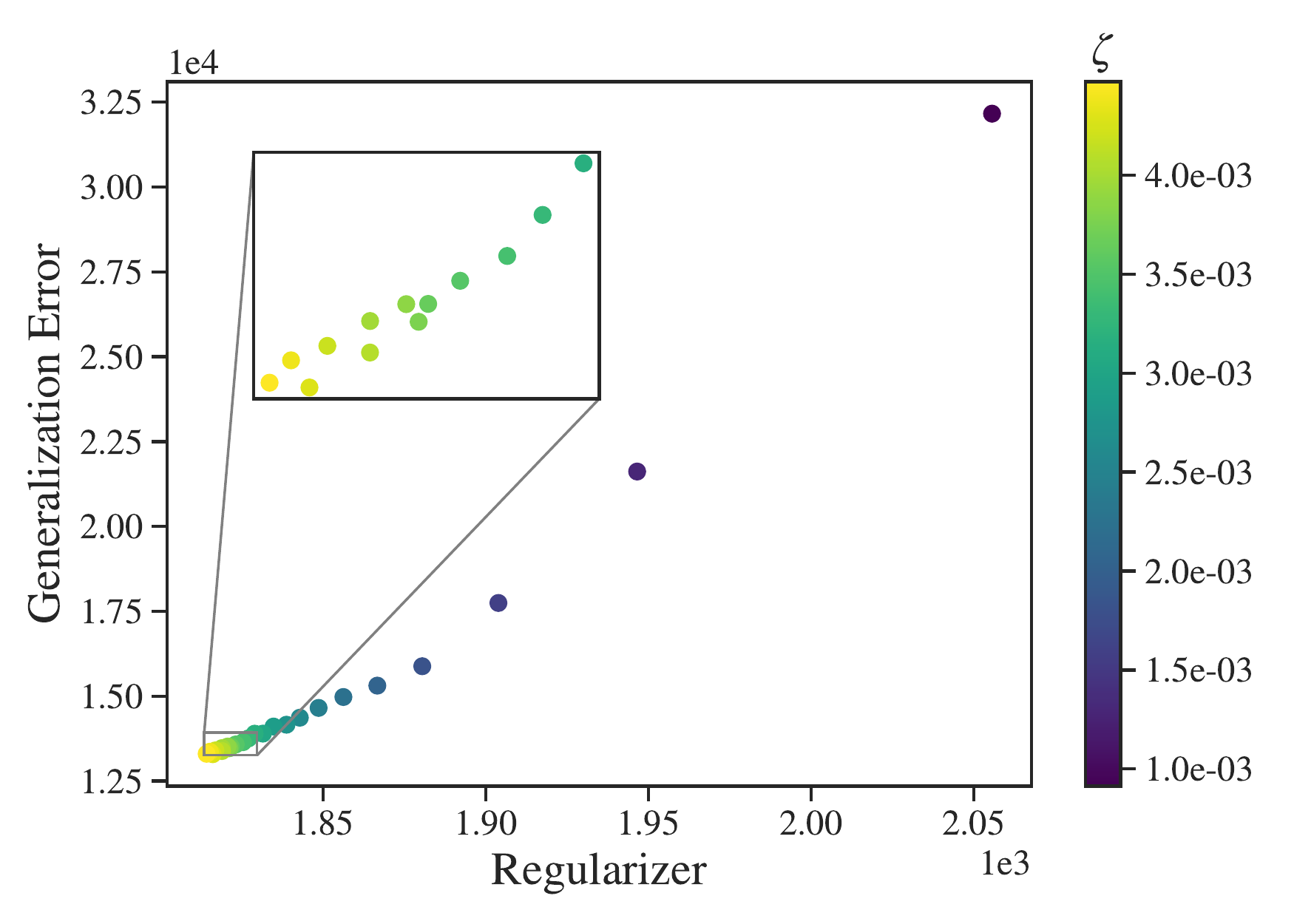}
         \caption{Error-regularizer correlation for ResNet-18 on CIFAR-10.}
         \label{fig:resnet18_cifar10_reg_offline_mt}
     \end{subfigure}
     \caption{Comparing generalization error to the value of the regularizer in Eq.~\ref{eq:ho-stability} using Algorithm~\ref{alg:ho_stability_alg_mt}.}
     \label{fig:reg_gen_offline}
\end{figure}

\subsection{Related Observations} \label{sec:relatedobs}
Our PAC-Bayes bound can be used to explain the success of recently developed methods in neural architecture search and meta-learning. Although gradient-based neural architecture search methods can be used to minimize the empirical validation risk, the architectures discovered by way of these methods do not generalize well to unseen data \cite{LiTalwalkar19}. \citet{zela2020understanding} identify a strong positive correlation between the dominant eigenvalue of the validation Hessian at the training optimum (i.e., a proxy for optima sharpness) and generalization error of the trained architecture. They then advocate for the use of heuristic methods that resemble the minimization of a PAC-Bayes bound with a Gaussian prior centered at 0 in order to reduce this eigenvalue and improve generalization.

Observation~\ref{obs:app_trhessian} provides an explanation for \citet{zela2020understanding}'s findings. When the privacy parameters, $\epsilon$ and $\delta$ are chosen to be near $0$, we show that minimizing the PAC-Bayes bound in Theorem~\ref{thm:sqrtpbapproxdp} is equivalent to minimizing another measure of loss curvature: the trace of the training Hessian. The method we describe in Algorithm~\ref{alg:ho_stability_alg_mt} can either be modified to minimize this quantity or can be used without modification to minimize Eq.~\ref{eq:ho-stability}. By directly optimizing the quantity of interest, the use of Algorithm~\ref{alg:ho_stability_alg_mt} could further reduce overfitting in network architecture search.

\citet{guiroy2019towards} study meta-learning problems in which the hyperparameter learned is the parameter initialization for unseen tasks. They evaluate several proxies for generalization to motivate their choice of regularizer. We show that these proxies can be motivated using the data-dependent PAC-Bayesian bound derived in Theorem~\ref{thm:sqrtpbapproxdp}. Rigorously proving that the methods presented by \citet{guiroy2019towards} are \emph{approximations} to PAC-Bayes bounds would require extending our analysis to the meta-learning PAC-Bayes bound derived by \citet{AmitMeir2017}. Though we leave this extension to future work, we believe that the regularizer presented in \citet{guiroy2019towards} can be improved by taking an approach similar to that of Section~\ref{sec:gentheory}. 

\section{Conclusion}
As large-scale hyperparameter optimization becomes a routine part of machine learning practice, it will also become increasingly important to prevent validation set overfitting. In this paper, we introduce a theoretical framework and practical method for selecting hyperparameters that generalize. We propose a new hyperparameter optimization objective, which we show to be a data-dependent PAC-Bayes bound on the generalization error to unseen training and validation data. The empirical results we obtain from minimizing this tractable measure of gradient incoherence indicate that we have developed a promising method for hyperparameter optimization problems susceptible to overfitting. The connections we draw between our framework and previously implemented heuristics suggest that our approach can explain and improve upon existing methods.

\section*{Acknowledgments}
We thank Kayvon Tabrizi, Hunter Nisonoff, Rian Kormos, and Benjamin Cherian for helpful discussions; and Berkman Frank for editorial assistance.
\bibliography{efficient_hyper}
\bibliographystyle{icml2020}
\onecolumn
\appendix
\section{Generalization Theory} \label{sec:genappendix}
In this section, we prove the results that we present in Section~\ref{sec:gentheory}. We first prove the data-dependent bound that we state in Theorem~\ref{thm:sqrtpbapproxdp} of the main text in Appendix~\ref{sec:dpappendix}. We then describe the prior-selection algorithm we use to derive the objective in Eq.~\ref{eq:ho-stability} and establish how one might compute the privacy parameters, $\epsilon$ and $\delta$, for this algorithm (Appendix~\ref{sec:sgldappendix}). These results then allow us to prove the remaining claims of Section~\ref{sec:gentheory} in Appendix~\ref{sec:resultappendix}. We elaborate upon the implications of our work in Appendix~\ref{sec:obsappendix}.

\subsection{Data-Dependent Bounds} \label{sec:dpappendix}
We first restate a generic PAC-Bayes bound that makes no assumptions about either the data-dependence of the PAC-Bayes prior, $P$, or boundedness of the loss function, $\ell$. This generic PAC-Bayes bound is defined in terms of the so-called ``exponential moment.'' We define this quantity as a function of both $f: (\mathcal{X} \times \mathcal{Y})^s \times \Theta \to \mathbb{R}$ and some distribution $P$ over $\Theta$:
\begin{align}
    \xi(f, P) \defeq \iint \exp \left \{ f(\theta, S) \right \} P(d\theta) \mathcal{D}^s(dS) \label{eq:xidef}\eqperiod
\end{align}
If $P$ depends on $S$, we denote the probability distribution as $P_S(d\theta)$.
Many popular PAC-Bayes bounds can be re-derived by bounding $\xi(f, P)$ for different choices of $f$ and $P$ and applying the following result.
\begin{theorem}[Theorem~2 in \citet{rivasplata19}]
Let $f(\theta, S) \defeq F(R(\theta; \mathcal{D}, \lambda), \hat{R}(\theta; S, \lambda))$ for any convex function $F$: $\mathbb{R}^2 \to \mathbb{R}$. Given some distribution $P$ over $\Theta$ and $S \sim \mathcal{D}^s$,
\begin{align*}
    F\left (\E_Q [R(\theta; \mathcal{D}, \lambda)], \E_Q[\hat{R}(\theta; S, \lambda)] \right) \leq \kld{Q}{P} + \log \left ( \frac{\xi(f, P)}{\Delta} \right)
\end{align*}
holds with probability at least $1 - \Delta$. 
\label{thm:app_genericbound}
\end{theorem}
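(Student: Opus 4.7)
The plan is to prove this by combining three standard ingredients: the Donsker--Varadhan change-of-measure inequality, Jensen's inequality applied to the convex function $F$, and a single application of Markov's inequality on the exponential moment $\xi(f,P)$. The key design choice is the order in which these are applied, which is what yields a bound that holds uniformly over $Q$.

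First I would introduce the nonnegative random variable $W(S) \defeq \int \exp\{f(\theta, S)\} P(d\theta) = \E_P[\exp\{f(\theta, S)\}]$, observe by Fubini that $\E_{S \sim \mathcal{D}^s}[W(S)] = \xi(f, P)$, and apply Markov's inequality: with probability at least $1 - \Delta$ over $S \sim \mathcal{D}^s$,
\begin{align*}
\log W(S) \;\leq\; \log\left(\xi(f, P)/\Delta\right).
\end{align*}
Critically, this high-probability event depends only on $S$ (and the fixed $P$), not on $Q$. Next, I would condition on this event and invoke the Donsker--Varadhan variational representation of KL: for any $Q \ll P$ and measurable $g$, $\E_Q[g] \leq \kld{Q}{P} + \log \E_P[e^{g}]$. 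Taking $g(\theta) = f(\theta, S)$ at the realized $S$ gives
\begin{align*}
\E_Q[f(\theta, S)] \;\leq\; \kld{Q}{P} + \log W(S) \;\leq\; \kld{Q}{P} + \log\left(\xi(f, P)/\Delta\right).
\end{align*}
Finally, convexity of $F$ and Jensen's inequality for the $Q$-expectation yield
\begin{align*}
F\!\left(\E_Q[R(\theta; \mathcal{D}, \lambda)],\, \E_Q[\hat{R}(\theta; S, \lambda)]\right) \;\leq\; \E_Q\!\left[F(R(\theta; \mathcal{D}, \lambda), \hat{R}(\theta; S, \lambda))\right] \;=\; \E_Q[f(\theta, S)],
\end{align*}
and chaining the three inequalities delivers the claim.

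The main subtlety, and the step most likely to be gotten wrong, is establishing uniformity over $Q$. A naive Markov argument applied to $\E_Q[f] - \kld{Q}{P}$ for each fixed $Q$ would give a bound whose failure probability has to be union-bounded over $Q$, which is hopeless for an uncountable family. The resolution is exactly the ordering above: Markov is invoked at the level of the $Q$-free quantity $W(S)$, so the single event of probability at least $1 - \Delta$ simultaneously validates the Donsker--Varadhan step for every $Q$, including $Q$ that depends on $S$. A secondary remark is that the pipeline extends without change to data-dependent priors $P_S$: one redefines $W(S) = \int \exp\{f(\theta,S)\}\,P_S(d\theta)$, observes that $\E_S[W(S)] = \xi(f, P)$ remains the correct joint expectation by the definition given in the paper, and runs the same Markov$\,\to\,$Donsker--Varadhan$\,\to\,$Jensen chain, which is the entry point the paper subsequently uses to derive the differentially-private bound of Theorem~\ref{thm:sqrtpbapproxdp}.
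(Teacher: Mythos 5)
Your proposal is correct. The paper does not actually prove this statement---it imports it verbatim as Theorem~2 of \citet{rivasplata19}---and your Markov $\to$ Donsker--Varadhan $\to$ Jensen chain, with Markov applied to the $Q$-free quantity $W(S)=\int e^{f(\theta,S)}P(d\theta)$ so that a single event of probability $1-\Delta$ validates the change of measure uniformly over all $Q$ (including $S$-dependent ones), is exactly the standard argument behind that cited result; your closing remark that the same pipeline applies to data-dependent priors $P_S$ is also precisely the entry point the paper exploits via Lemma~\ref{lma:maxinfopb}.
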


When the prior $P$ is chosen independently of the data set $S$, deriving a bound on $\xi(f, P)$ for bounded $\ell$ is straightforward; regularizing the hyperparameters, however, so that the training posterior $Q$ is similar to data-independent $P$ can significantly harm the in-sample performance of parameters sampled from $Q$ \cite{dziugaite2018data}. Allowing $P$ to depend on $S$ would limit the negative effect of regularization on the empirical validation risk, but it would also invalidate the proof strategies used in prior work for bounding $\xi(f, P)$. 

Earlier work in data-dependent PAC-Bayes bounds required $P$ to be $(\epsilon,0)$-DP with respect to $S$ \cite{dziugaite2018data}. Using the approach suggested by \citet{rivasplata19}, we relax this requirement and only assume that samples $\theta \sim P$ are $(\epsilon,\delta)$-DP with respect to $S$. The approach we take mirrors that of \citet{rivasplata19}, though we first must reconsider their Lemma~7, leading to a correction of their result statement and proof.. Before we can derive a data-dependent PAC-Bayes bound, we thus require several preliminary definitions and results.

\begin{definition}[\citet{dwork2015generalization}]
Let $\beta \geq 0$ and let $X$ and $Y$ be random variables in arbitrary measurable spaces, and let $X^\prime$ be independent of $Y$ and equal in distribution to $X$. The $\beta$-approximate max-information between $X$ and $Y$, denoted $I^{\beta}_\infty(X; Y)$ is the smallest value $k$ such that, for all product-measurable events $E$,
\begin{align*}
    P((X, Y) \in E) \leq e^k P((X^\prime, Y) \in E) + \beta\eqperiod
\end{align*}
Then, for an algorithm $\mathcal{A}$ mapping from $(\mathcal{X} \times \mathcal{Y})^s \to T$, the $\beta$-approximate max-information of $\mathcal{A}$, denoted $I^{\beta}_\infty(\mathcal{A}, s)$, is the least value $k$ such that for all probability measures on $\mathcal{X} \times \mathcal{Y}$, $I_\infty^{\beta}(S; \mathcal{A}_S) \leq k$ when $S \sim \mathcal{D}^s$. 
\label{def:maxinfo}
\end{definition}
\citet{dwork2015generalization} show that an $(\epsilon,0)$-DP algorithm necessarily has bounded $\beta$-approximate max-information. \citet{dziugaite2018data} then use this result to derive a data-dependent PAC-Bayes bound. To prove our data-dependent PAC-Bayes result, we rely on a max-information bound for $(\epsilon, \delta)$-DP algorithms. 

Theorem~\ref{thm:maxinfobound} restates a max-information bound for $(\epsilon,\delta)$-DP algorithms. We follow \citet{rogers2016max} and define 
\begin{align}
    \beta(\epsilon, \delta, s) \defeq \exp\left\{-s \epsilon^2 \right\} + O\left(s\sqrt{\frac{\delta}{\epsilon}}\right) \label{eq:betadef}\eqperiod
\end{align}

\begin{theorem}[Theorem~3.1 in \citet{rogers2016max}]
For $\epsilon \in (0, 1/2]$ and $\delta \in (0, \epsilon)$, let $\mathcal{A}: \left(\mathcal{X} \times \mathcal{Y} \right)^s \to \Theta$ be an $(\epsilon, \delta)$-DP algorithm. Then, 
\begin{align*}
    I^{\beta(\epsilon, \delta, s)}_{\infty}(\mathcal{A}, s) \leq O\left (s\epsilon^2  + s \sqrt{\frac{\delta}{\epsilon}} \right)\eqcomma
\end{align*}
so long as the input data set, $S$, is sampled from a product distribution.
\label{thm:maxinfobound}
\end{theorem}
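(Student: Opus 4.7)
The plan is to bound the approximate max-information of an $(\epsilon,\delta)$-DP algorithm $\mathcal{A}$ by establishing a tail bound on the log-likelihood ratio between the joint distribution of $(S,\mathcal{A}(S))$ and the product of its marginals. By the standard equivalence implicit in Definition~\ref{def:maxinfo}, it suffices to exhibit a threshold $k$ and an exceptional set of mass at most $\beta(\epsilon,\delta,s)$ outside of which the log-ratio is at most $k$; Definition~\ref{def:maxinfo} then gives the claimed inequality. The target values are $k = O(s\epsilon^2 + s\sqrt{\delta/\epsilon})$ and $\beta = \exp(-s\epsilon^2) + O(s\sqrt{\delta/\epsilon})$.

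First, I would treat the pure-DP case ($\delta = 0$) as a warm-up. Because $S$ is drawn from a product distribution and $\mathcal{A}$ is $(\epsilon,0)$-DP, the per-record contribution to the log-ratio is bounded in $[-\epsilon,\epsilon]$ almost surely, with conditional mean of order $\epsilon(e^\epsilon - 1) = O(\epsilon^2)$ by the standard ``privacy loss has small mean'' calculation. Expressing the full log-ratio as a martingale with $s$ bounded increments and applying Azuma--Hoeffding then yields a tail of $\exp(-s\epsilon^2)$ above a mean of order $s\epsilon^2$, recovering the $\delta = 0$ case.

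Second, I would extend to $(\epsilon,\delta)$-DP via a privacy-loss truncation. The key ingredient, essentially due to Kasiviswanathan and Smith, is that every $(\epsilon,\delta)$-DP mechanism agrees with some $(\epsilon,0)$-DP mechanism on an output event of mass at least $1 - \delta/\epsilon$. Applying this per-coordinate and union-bounding across the $s$ records (for which the product-distribution hypothesis is essential) produces a joint ``good event'' on which the martingale concentration from Step~1 runs verbatim; its complement contributes the $O(s\sqrt{\delta/\epsilon})$ correction that appears both inside $k$ and inside $\beta$.

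The chief technical obstacle is calibrating this truncation so that the two sources of slack balance. Aggressive truncation tightens $\beta$ but worsens $k$, and conservative truncation does the reverse; the appearance of $\sqrt{\delta/\epsilon}$ rather than $\delta/\epsilon$ emerges from optimizing this trade-off and requires careful tracking of how the approximate-DP slack compounds under the $s$-fold composition across coordinates. A further subtlety is that the exceptional set produced by per-coordinate truncation must remain measurable in the product $\sigma$-algebra so that the resulting bound applies to arbitrary product-measurable events $E$, as Definition~\ref{def:maxinfo} demands; handling this correctly is what distinguishes the approximate-DP analysis from the cleaner pure-DP one, and is where I would expect to spend the most effort.
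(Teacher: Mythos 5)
The paper does not prove this statement; it is imported verbatim as Theorem~3.1 of \citet{rogers2016max}, so there is no internal proof to compare against. Your outline does follow the strategy of that cited source: reduce the $\beta$-approximate max-information bound to a high-probability bound on the log-likelihood ratio between the joint law of $(S,\mathcal{A}(S))$ and the product of its marginals (your reduction via an exceptional set of mass $\beta$ is the correct and standard one), decompose that ratio over the $s$ coordinates of the product distribution, bound each increment's conditional mean by $O(\epsilon^2)$ via the usual privacy-loss calculation, and concentrate with Azuma; the $\delta=0$ case indeed recovers the bound of \citet{dwork2015generalization}.

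As a proof, however, the sketch stops at precisely the step that carries the content of the theorem. Two gaps remain. First, the per-coordinate quantity you must control is not the privacy loss between adjacent datasets but the posterior-to-prior ratio $P(S_i = s_i \mid \mathcal{A}(S)=a,\, S_{<i}=s_{<i})/P(S_i=s_i)$; for pure DP, Bayes' rule transfers the $e^{\pm\epsilon}$ bound cleanly, but for $(\epsilon,\delta)$-DP the $\delta$ slack interacts with the conditioning on the output and the prefix, and the Kasiviswanathan--Smith-style conversion you invoke is stated for fixed adjacent inputs, not for this conditional object. Showing that the per-coordinate bad events have mass $O(\sqrt{\delta/\epsilon})$ \emph{under the conditioned laws that the martingale argument actually uses}, and that the truncated increments still satisfy the supermartingale property Azuma requires, is the bulk of the work in \citet{rogers2016max} and is not supplied here. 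Second, you assert that $\sqrt{\delta/\epsilon}$ ``emerges from optimizing the trade-off'' without exhibiting the trade-off; concretely one needs something like the fact that $(\epsilon,\delta)$-DP forces the positive part of the excess privacy loss to have expectation $O(\delta)$, a Markov bound at a threshold $t$ giving exceptional mass $O(\delta/t)$ and an $O(t)$ contribution to the conditional mean, and the choice $t\asymp\sqrt{\delta\epsilon}$, summed over $s$ coordinates to produce the $O(s\sqrt{\delta/\epsilon})$ terms in both $k$ and $\beta$. Until that calculation is carried out, the claimed constants in the theorem statement are not established. (The measurability worry you raise at the end is, by contrast, not a real obstacle: the exceptional set is a sublevel set of the likelihood ratio, which is product-measurable by construction.)
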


\begin{lemma}[Lemma~7 in \citet{rivasplata19}]
Fix $f: (\mathcal{X} \times \mathcal{Y})^s \times \Theta \to \mathbb{R}$. Then for any data-dependent distribution $P_S$ over parameters $\Theta$, and for any $\beta \in (0, 1)$, the following bound on $\xi(f, P_S)$ holds,
\begin{align*}
    \xi(f, P_S) \leq \xi_{bd}(f) \exp \left \{I^{\beta}_\infty (P_S, s) \right \} + \beta \exp \left \{||f(S, \theta)||_\infty \right \}\eqcomma
\end{align*}
given $\xi_{bd}(f) \defeq \sup_{P^\prime} \xi(f, P^\prime)$ where the supremum is taken over data-independent distributions $P^\prime$.
\label{lma:maxinfopb}
\end{lemma}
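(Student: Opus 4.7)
The plan is to interpret $\xi(f, P_S)$ as an expectation over the joint distribution of $(S, \theta)$ induced by $S \sim \mathcal{D}^s$ and $\theta \sim P_S$, and then to use the $\beta$-approximate max-information bound from Definition~\ref{def:maxinfo} to compare this ``dependent'' expectation with the analogous expectation in which $\theta$ is decoupled from $S$. The decoupled expectation will then be controlled by the supremum over data-independent priors, which is exactly $\xi_{bd}(f)$.

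First I would apply the layer-cake identity to the nonnegative random variable $\exp\{f(\theta, S)\}$, which is essentially bounded by $\exp\{\|f\|_\infty\}$, to write
$$
\xi(f, P_S) \;=\; \int_{0}^{\exp\{\|f\|_\infty\}} \Pr\!\big(\exp\{f(\theta, S)\} > t\big)\, dt,
$$
where the probability is under the joint law $S \sim \mathcal{D}^s,\; \theta \mid S \sim P_S$. Each event $\{\exp\{f\} > t\}$ is product-measurable on $(\mathcal{X} \times \mathcal{Y})^s \times \Theta$, so Definition~\ref{def:maxinfo} applied to the randomized map $S \mapsto P_S$ with $k = I_\infty^\beta(P_S, s)$ yields
$$
\Pr\!\big(\exp\{f(\theta, S)\} > t\big) \;\leq\; e^{I_\infty^\beta(P_S, s)}\, \Pr\!\big(\exp\{f(\theta, S')\} > t\big) + \beta,
$$
where $S'$ is an independent copy of $S$ drawn from $\mathcal{D}^s$ and independent of $\theta$.

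Integrating this termwise in $t$ and reassembling the right-hand side into an expectation gives
$$
\xi(f, P_S) \;\leq\; e^{I_\infty^\beta(P_S, s)}\, \E_{S' \sim \mathcal{D}^s,\; \theta \sim \bar{P}}\!\big[\exp\{f(\theta, S')\}\big] \;+\; \beta \exp\{\|f\|_\infty\},
$$
where $\bar{P}(d\theta) \defeq \int P_S(d\theta)\, \mathcal{D}^s(dS)$ is the marginal law of $\theta$ after integrating out the data. Since $\bar{P}$ is manifestly data-independent, it competes in the supremum defining $\xi_{bd}(f)$, so the expectation on the right equals $\xi(f, \bar{P}) \leq \xi_{bd}(f)$, which delivers the stated inequality.

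The main conceptual care needed is justifying that $P_S$ viewed as a conditional kernel fits into Definition~\ref{def:maxinfo} (the definition quantifies over exactly the product-measurable events we need) and confirming that the decoupled marginal $\bar{P}$ is genuinely data-independent and hence admissible in the supremum defining $\xi_{bd}$. Everything else---the tail integral, Fubini, and the use of $\|f\|_\infty < \infty$ to keep the additive $\beta$-term finite---is routine bookkeeping, which is presumably the point of view that causes the proof in \citet{rivasplata19} to require the small correction the paper alludes to.
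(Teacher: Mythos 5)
Your proposal is correct and takes essentially the same route as the paper: the layer-cake representation of $\xi(f,P_S)$ combined with the max-information definition applied to the tail events (which the paper isolates as the separate Lemma~\ref{lma:amshiftbound}), followed by bounding the decoupled expectation by $\xi_{bd}(f)$. The only cosmetic difference is that you collapse the integration over the independent copy $S^\prime$ into the marginal prior $\bar{P}$, whereas the paper keeps the triple integral and bounds the inner part by $\xi_{bd}(f)$ for each fixed $S^\prime$; both correctly produce the $\beta \exp\{||f||_\infty\}$ prefactor that constitutes the stated correction to Lemma~7 of \citet{rivasplata19}.
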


Before we prove Lemma~\ref{lma:maxinfopb}, we establish a preliminary result that addresses the error made in the statement and proof of Lemma~7 in \citet{rivasplata19}.
\begin{lemma}
If $I^{\beta}_\infty(X; Y) \leq k$, then for any function $g$: $\mathcal{X} \times \mathcal{Y} \to \mathbb{R}^+ \cup \{0\}$,
\begin{align*}
    \int_{\mathcal{X}} \int_{\mathcal{Y}} g(x, y) P_x(dy)P(dx) \leq e^k \int_{\mathcal{X}} \int_{\mathcal{X}} \int_\mathcal{Y} g(x, y) P_{x^\prime}(dy)P(dx) P(dx^\prime) + \beta ||g||_\infty \eqperiod
\end{align*}
Let $||g||_\infty \defeq \sup_{\mathcal{X} \times \mathcal{Y}} g(x, y)$. 
\label{lma:amshiftbound}
\end{lemma}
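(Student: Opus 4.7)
The plan is to reduce the integral inequality to the event-wise defining inequality of Definition~\ref{def:maxinfo} by a layer-cake decomposition of the non-negative integrand $g$. We may assume $||g||_\infty < \infty$ since the bound is vacuous otherwise.

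First I would introduce, for each $t \geq 0$, the sublevel-set event
\[ E_t \defeq \{(x,y) \in \mathcal{X} \times \mathcal{Y} : g(x, y) > t\}\eqperiod \]
Because $g$ is measurable with respect to the product $\sigma$-algebra on $\mathcal{X} \times \mathcal{Y}$, each $E_t$ is product-measurable, so the hypothesis $I^{\beta}_\infty(X; Y) \leq k$ applied through Definition~\ref{def:maxinfo} yields, for every $t \geq 0$,
\[ P((X, Y) \in E_t) \leq e^k \, P((X', Y) \in E_t) + \beta\eqcomma \]
where $X'$ is an independent copy of $X$. Integrating this pointwise inequality over $t \in [0, ||g||_\infty]$ and invoking the layer-cake identity $g(x, y) = \int_0^{||g||_\infty} \mathbf{1}\{g(x,y) > t\}\, dt$ together with Tonelli's theorem yields
\[ \E[g(X, Y)] \leq e^k \, \E[g(X', Y)] + \beta \, ||g||_\infty\eqperiod \]

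Second, I would match each term to the integrals appearing in the statement. Under the interpretation $(X, Y) \sim P(dx)\, P_x(dy)$, the left-hand expectation equals $\int_{\mathcal{X}} \int_{\mathcal{Y}} g(x, y)\, P_x(dy)\, P(dx)$. The expectation $\E[g(X', Y)]$ is taken under the product of marginals of $X$ and $Y$; the marginal of $Y$ is obtained by integrating $P_{x'}(dy)$ against $P(dx')$, so that
\[ \E[g(X', Y)] = \int_\mathcal{X} \int_\mathcal{X} \int_\mathcal{Y} g(x, y)\, P_{x'}(dy)\, P(dx)\, P(dx')\eqcomma \]
recovering the right-hand side of the claim exactly.

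The main obstacle is entirely bookkeeping rather than deep analysis: one must carefully distinguish the joint law of $(X, Y)$, in which $Y$ is drawn conditionally on $X$ via $P_x$, from the product of the marginals, in which $Y$'s marginal arises by averaging $P_{x'}$ over an independent $x' \sim P$. This conflation is precisely what caused the error in the statement and proof of Lemma~7 of \citet{rivasplata19}, so I would keep the two distributions visibly separate at the moment the max-information definition is invoked.
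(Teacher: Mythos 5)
Your proposal is correct and follows essentially the same route as the paper's proof: a layer-cake decomposition of the non-negative $g$, an application of the max-information inequality to each super-level event $E_t$, integration over $t\in[0,\norm{g}_\infty]$ via Tonelli, and identification of $\E[g(X',Y)]$ with the triple integral by noting that $Y$'s marginal is $\int_\mathcal{X} P_x(dy)P(dx)$ while $X'\sim P$ independently. The only cosmetic difference is your use of strict super-level sets $\{g>t\}$ where the paper writes $P(g\geq t)$, which does not affect the argument.
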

\begin{proof}
Note that, for any $\beta \geq 0$,
\begin{align*}
    \int_{\mathcal{X}} \int_{\mathcal{Y}} g(x, y) P_x(dy)P(dx) &= \int_{0}^{||g||_\infty} P(g(X, Y) \geq t) dt  \\
    &\leq \int_{0}^{||g||_\infty} \left(e^k P\left (g(X^\prime, Y) \geq t \right) + \beta \right)  dt \\
    &= e^k \int_{0}^{||g||_\infty} P\left (g(X^\prime, Y) \geq t \right) dt + \beta ||g||_\infty \\
    &= e^k \int_{\mathcal{X}} \int_{\mathcal{Y}} g(x^\prime, y) \left (\int_{\mathcal{X}} P_x(dy) P(dx)\right) P(dx^\prime) + \beta ||g||_\infty \\
    &= e^k \int_{\mathcal{X}} \int_{\mathcal{X}} \int_{\mathcal{Y}} g(x, y) P_{x^\prime}(dy) P(dx) P(dx^\prime) + \beta ||g||_\infty\eqperiod
\end{align*}
The first inequality follows from the definition of the max-information. The last equality follows from Tonelli's theorem and the fact that $X^\prime \stackrel{d}{=} X$.
\end{proof}

We can now prove Lemma~\ref{lma:maxinfopb} using Lemma~\ref{lma:amshiftbound}.
\begin{proof}[Proof of Lemma~\ref{lma:maxinfopb}]
For notational convenience, we define $\mathcal{Z} \defeq (\mathcal{X} \times \mathcal{Y})^s$. We motivate our proof strategy as follows. Classical PAC-Bayes bounds on $\xi(f, P)$ are derived by interchanging the order of integration in $\xi(f, P)$'s definition (Eq.~\ref{eq:xidef}). For data-dependent PAC-Bayes bounds, this interchange is not possible as $P$ depends on $S$. Instead, we use the definition of the max-information to bound $\xi(f, P_S)$ in terms of an integral in which the data, $S$, upon which the prior depends is replaced with an independent and identically-distributed $S^\prime$. This replacement allows us to apply the interchange of integral approach to bounding $\xi(f, P_S)$.
\begin{align*}
\xi(f, P_S) &= \int_{\mathcal{Z}} \int_\Theta e^{f(S, \theta)} P_S(d\theta) \mathcal{D}^s(dS) \\
&\leq e^{I^\beta(P_S, s)} \int_{\mathcal{Z}} \int_{\mathcal{Z}} \int_{\Theta} e^{f(S, \theta)} P_{S^\prime}(d\theta) \mathcal{D}^s(dS)\mathcal{D}^s(dS^\prime) + \beta e^{||f(S, \theta)||_\infty} \\
&\leq e^{I^\beta(P_S, s)} \int_\mathcal{Z} \xi_{bd}(f) \mathcal{D}^s(dS^\prime) + \beta e^{||f(S, \theta)||_\infty} \\
&= e^{I^\beta(P_S, s)} \xi_{bd}(f) + \beta e^{||f(S, \theta)||_\infty}\eqperiod
\end{align*}
The first inequality follows from Lemma~\ref{lma:amshiftbound}. The second follows from the definition of $\xi_{bd}(f)$ in the lemma statement.
\end{proof}

Note that this result differs from the result proven in Lemma~7 of \citet{rivasplata19} due to the change in the prefactor applied to $\beta$ from $1$ to $e^{||f(S, \theta)||_\infty}$. 

In Theorem~\ref{thm:app_pbapproxdp}, we use Theorem~\ref{thm:maxinfobound} and Lemma~\ref{lma:maxinfopb} to prove a more general version of the data-dependent PAC-Bayes bound stated in Theorem~\ref{thm:sqrtpbapproxdp}. 

\begin{theorem}
We assume that $\ell$ is bounded in $[0,1]$ and that $P$ is chosen such that samples drawn from it are $(\epsilon, \delta)$-DP with respect to $S \sim \mathcal{D}^s$ without the use of any non-trivial composition. We require $\epsilon \in \left(0,\frac{1}{2}\right]$ and $\delta \in (0, \epsilon)$ such that $\beta(\epsilon, \delta, s) < 1$.  Then, for $S \sim \mathcal{D}^s$ and for all distributions $Q$ over $\Theta$,
\begin{multline*}
    \E_Q [R(\theta; \mathcal{D}, \lambda)] \leq \E_Q[\hat{R}(\theta; S, \lambda)] + \\ \left\{\frac{1}{2s-1} \left (\kld{Q}{P} + \log \frac{4s\exp \left \{s \left ( c_1 \epsilon^2 + c_2 \sqrt{\frac{\delta}{\epsilon}} \right) \right \} + \beta(\epsilon, \delta, s) \exp\{2s\}}{\Delta} \right) \right\}^{1/2}
\end{multline*}
holds with probability at least $1 - \Delta$ for some positive constants $c_1$ and $c_2$.

If we further choose $\epsilon \in \left(0,\frac{1}{2}\right]$ and $\delta \in (0, \epsilon)$ such that $\beta(\epsilon, \delta, s) < \min \left(1, s \exp \left \{s \left( c_1\epsilon^2 + c_2 \sqrt{\frac{\delta}{\epsilon}} - 2\right)\right \} \right)$, then for $S \sim \mathcal{D}^s$ and for all distributions $Q$ over $\Theta$:
\begin{align*}
    \E_Q [R(\theta; \mathcal{D}, \lambda)] \leq \E_Q[\hat{R}(\theta; S, \lambda)] + \left\{\frac{1}{2s-1} \left (\kld{Q}{P} + \log \frac{5s}{\Delta} \right) + c_1 \epsilon^2 + c_2 \sqrt{\frac{\delta}{\epsilon}} \right\}^{1/2}
\end{align*}
holds with probability at least $1 - \Delta$ for some positive constants $c_1$ and $c_2$.
\label{thm:app_pbapproxdp}
\end{theorem}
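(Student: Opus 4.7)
\textbf{Proof proposal for Theorem~\ref{thm:app_pbapproxdp}.}

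The plan is to instantiate the generic PAC-Bayes master inequality of Theorem~\ref{thm:app_genericbound} with a suitable convex function $F$, and then control the exponential moment $\xi(f, P_S)$ via the data-dependent Lemma~\ref{lma:maxinfopb}, using the $(\epsilon,\delta)$-DP hypothesis on the prior. Specifically, I would choose the convex function $F(a, b) = (2s-1)(a - b)^2$, so that $f(\theta, S) = (2s-1)\bigl(R(\theta;\mathcal{D},\lambda) - \hat{R}(\theta; S,\lambda)\bigr)^2$. Applying Theorem~\ref{thm:app_genericbound} with this $F$ immediately yields, with probability at least $1-\Delta$,
\begin{align*}
(2s-1)\bigl(\E_Q[R] - \E_Q[\hat{R}]\bigr)^2 \leq \kld{Q}{P_S} + \log\frac{\xi(f,P_S)}{\Delta}\eqperiod
\end{align*}
Taking square roots and rearranging will give a bound of exactly the shape claimed, so the entire proof reduces to bounding $\xi(f, P_S)$.

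Next I would invoke Lemma~\ref{lma:maxinfopb} to write $\xi(f, P_S) \leq \xi_{bd}(f)\,e^{I^{\beta}_\infty(P_S, s)} + \beta \exp\{\|f\|_\infty\}$. Each of the three quantities on the right then has to be controlled separately. For $\xi_{bd}(f)$, the sup is over data-\emph{independent} priors, so I can interchange the $S$ and $\theta$ integrals and apply a standard Maurer/Hoeffding-type concentration bound for the deviation of empirical means of $[0,1]$-valued random variables, which yields $\xi_{bd}(f) \leq 4s$ (up to the universal constant). For $\|f\|_\infty$, since $\ell \in [0,1]$ implies $(R-\hat{R})^2 \in [0,1]$, we get $\|f\|_\infty \leq 2s-1 \leq 2s$, hence $\exp\{\|f\|_\infty\} \leq e^{2s}$. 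For the max-information term, the hypothesis that samples from $P$ are $(\epsilon,\delta)$-DP with respect to $S$ without non-trivial composition means Theorem~\ref{thm:maxinfobound} applies directly (avoiding the composition does precisely this), giving $I^{\beta(\epsilon,\delta,s)}_\infty(P_S, s) \leq c_1 s \epsilon^2 + c_2 s \sqrt{\delta/\epsilon}$ for absolute constants $c_1, c_2$.

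Plugging these three estimates into Lemma~\ref{lma:maxinfopb} yields
\begin{align*}
\xi(f, P_S) \leq 4s \exp\bigl\{c_1 s \epsilon^2 + c_2 s \sqrt{\delta/\epsilon}\bigr\} + \beta(\epsilon,\delta,s)\, e^{2s}\eqcomma
\end{align*}
which combined with the master inequality and a square root gives the first (more general) bound in the theorem statement. For the strengthened bound, the hypothesis $\beta(\epsilon,\delta,s) < s \exp\{s(c_1 \epsilon^2 + c_2 \sqrt{\delta/\epsilon} - 2)\}$ is precisely what is needed so that $\beta\,e^{2s} \leq s \exp\{c_1 s \epsilon^2 + c_2 s \sqrt{\delta/\epsilon}\}$, letting me collapse the two summands in $\xi(f,P_S)$ into a single term of the form $5s \exp\{c_1 s \epsilon^2 + c_2 s \sqrt{\delta/\epsilon}\}$. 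Taking logarithms and dividing by $2s-1$, the $s$-proportional pieces of the exponent become $\tfrac{s}{2s-1}(c_1\epsilon^2 + c_2\sqrt{\delta/\epsilon}) \leq c_1\epsilon^2 + c_2\sqrt{\delta/\epsilon}$ (after relabeling the absolute constants), producing the tighter form with $\log(5s/\Delta)$ inside the $1/(2s-1)$ factor and the privacy-dependent terms outside it.

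The main technical obstacle is the careful bookkeeping in Lemma~\ref{lma:maxinfopb}: one must make sure Theorem~\ref{thm:maxinfobound} is applied to the genuine $(\epsilon,\delta)$-DP sampling mechanism (hence the explicit exclusion of non-trivial compositions, which would otherwise force worse privacy parameters via advanced composition), and one must use the corrected prefactor $e^{\|f\|_\infty}$ rather than $1$ in front of $\beta$, since $f$ is of order $s$ and the $e^{2s}$ blow-up is exactly what motivates the additional restriction on $\beta$ in the second, sharper statement. Beyond this, the argument is a direct assembly of the three ingredients (Maurer concentration, DP-to-max-information, and the generic PAC-Bayes inequality).
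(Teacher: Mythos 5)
Your proposal matches the paper's proof essentially step for step: the same choice $F(x,y)=(2s-1)(x-y)^2$ in Theorem~\ref{thm:app_genericbound}, the same decomposition of $\xi(f,P_S)$ via Lemma~\ref{lma:maxinfopb} with $\xi_{bd}(f)\leq 4s$, $\exp\{\|f\|_\infty\}\leq e^{2s}$, and Theorem~\ref{thm:maxinfobound} for the max-information term, and the same collapse of the two summands into $5s\exp\{\cdot\}$ under the extra restriction on $\beta$. The argument is correct as written and requires no changes.
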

\begin{proof}
We prove this result as a special case of Theorem~\ref{thm:app_genericbound} where $F(x,y) \defeq (2s - 1)(x - y)^2$. Then, $f(\theta, S) \defeq (2s - 1)\left (R(\theta; \mathcal{D}, \lambda) -  \hat{R}(\theta; S, \lambda)\right)^2$. Using this definition of $f$ and rearranging terms yields the following bound:
\begin{align*}
    \E_Q [R(\theta; \mathcal{D}, \lambda)] \leq \E_Q[\hat{R}(\theta; S, \lambda)] + \left\{\frac{1}{2s-1} \left (\kld{Q}{P} + \log \frac{\xi(f, P)}{\Delta} \right)\right\}^{1/2}\eqperiod
\end{align*}
We can then use Lemma~\ref{lma:maxinfopb} to bound $\xi(f, P)$. Lemma~2 of \citet{mcallester2003pac} shows that $\xi_{bd}(f) \leq 4s$ for this choice of $f$. We also observe that $\norm{\exp\{f\}}_\infty = \sup_{x \in [0,1], y \in [0,1]} \exp \{(2s - 1)(x - y)^2\}$ is bounded above by $\exp\{2s\}$. We can then rewrite the bound above as follows:
\begin{multline*}
    \E_Q [R(\theta; \mathcal{D}, \lambda)] \leq \E_Q[\hat{R}(\theta; S, \lambda)] + \\ \left\{\frac{1}{2s-1} \left (\kld{Q}{P} + 
    \log \frac{4s\exp\left \{I^{\beta(\epsilon, \delta, s)}_\infty(P_S, s) \right\} + \beta(\epsilon, \delta, s) \exp \{2s\}}{\Delta} \right)\right\}^{1/2}   \eqperiod 
\end{multline*}
Applying the bound in Theorem~\ref{thm:maxinfobound} to $I^{\beta(\epsilon, \delta, s)}_\infty(P_S, s)$ proves the first result of the theorem statement. The second result follows if it is possible to choose $\epsilon, \delta$ such that $\beta(\epsilon, \delta, s) \exp \{2s\} \leq s\exp\left \{I^{\beta(\epsilon, \delta, s)}_\infty(P_S, s) \right\}$. If that condition is met, then we can upper bound the numerator in that term by $5s \exp\left \{I^{\beta(\epsilon, \delta, s)}_\infty(P_S, s) \right\}$. The second result then immediately follows.
\end{proof}
\begin{theorem}
We assume that $\ell$ is bounded in $[a,b]$ and that $P$ is chosen such that samples drawn from it are $(\epsilon, \delta)$-DP with respect to $S \sim \mathcal{D}^s$ without the use of any non-trivial composition. We require $\epsilon \in \left(0,\frac{1}{2}\right]$ and $\delta \in (0, \epsilon)$ such that $\beta(\epsilon, \delta, s) < 1$.  Then, for $S \sim \mathcal{D}^s$ and for all distributions $Q$ over $\Theta$,
\begin{multline*}
    \E_Q [R(\theta; \mathcal{D}, \lambda)] \leq \E_Q[\hat{R}(\theta; S, \lambda)] + \\ \left\{\frac{(b - a)^2}{2s-1} \left (\kld{Q}{P} + \log \frac{4s\exp \left \{s \left ( c_1 \epsilon^2 + c_2 \sqrt{\frac{\delta}{\epsilon}} \right) \right \} + \beta(\epsilon, \delta, s) \exp\{2s\}}{\Delta} \right) \right\}^{1/2}
\end{multline*}
holds with probability at least $1 - \Delta$ for some positive constants $c_1$ and $c_2$.

If we further choose $\epsilon \in \left(0,\frac{1}{2}\right]$ and $\delta \in (0, \epsilon)$ such that $\beta(\epsilon, \delta, s) < \min \left(1, s \exp \left \{s \left( c_1\epsilon^2 + c_2 \sqrt{\frac{\delta}{\epsilon}} - 2\right)\right \} \right)$, then for $S \sim \mathcal{D}^s$ and for all distributions $Q$ over $\Theta$:
\begin{align*}
    \E_Q [R(\theta; \mathcal{D}, \lambda)] \leq \E_Q[\hat{R}(\theta; S, \lambda)] + (b - a)\left\{\frac{1}{2s-1} \left (\kld{Q}{P} + \log \frac{5s}{\Delta} \right) + c_1 \epsilon^2 + c_2 \sqrt{\frac{\delta}{\epsilon}} \right\}^{1/2}
\end{align*}
holds with probability at least $1 - \Delta$ for some positive constants $c_1$ and $c_2$.
\label{thm:app_pbapproxdpbd}
\end{theorem}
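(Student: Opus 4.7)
The plan is to reduce Theorem~\ref{thm:app_pbapproxdpbd} to Theorem~\ref{thm:app_pbapproxdp} by a normalization argument, since the only places boundedness of $\ell$ entered the proof of Theorem~\ref{thm:app_pbapproxdp} were (i) the bound $\xi_{bd}(f)\leq 4s$ from Lemma~2 of \citet{mcallester2003pac}, and (ii) the control $\|\exp\{f\}\|_\infty\leq\exp\{2s\}$. Both are statements about a loss taking values in $[0,1]$, and both scale transparently under affine rescaling of the loss.

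First, I would introduce the normalized loss
\[
\tilde{\ell}(\theta;x,y,\lambda)\defeq\frac{\ell(\theta;x,y,\lambda)-a}{b-a}\in[0,1],
\]
with corresponding risks $\tilde{R}(\theta;\mathcal{D},\lambda)=(R(\theta;\mathcal{D},\lambda)-a)/(b-a)$ and $\tilde{\hat{R}}(\theta;S,\lambda)=(\hat{R}(\theta;S,\lambda)-a)/(b-a)$. The prior $P$ and posterior $Q$ are distributions over $\Theta$ and do not depend on the loss, so the $(\epsilon,\delta)$-DP hypothesis on samples from $P$ with respect to $S$ is preserved; likewise the constraints on $\epsilon$, $\delta$, and $\beta(\epsilon,\delta,s)$ are unchanged because they reference only the sample size $s$.

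Next, I would apply Theorem~\ref{thm:app_pbapproxdp} to $\tilde{\ell}$ and to the same prior $P$ and posterior $Q$. This yields, with probability at least $1-\Delta$,
\[
\E_Q[\tilde{R}(\theta;\mathcal{D},\lambda)]\leq\E_Q[\tilde{\hat{R}}(\theta;S,\lambda)]+\left\{\frac{1}{2s-1}\left(\kld{Q}{P}+\log\frac{4s\exp\{s(c_1\epsilon^2+c_2\sqrt{\delta/\epsilon})\}+\beta(\epsilon,\delta,s)\exp\{2s\}}{\Delta}\right)\right\}^{1/2},
\]
and the corresponding tighter bound under the stronger $\beta$-constraint of Theorem~\ref{thm:app_pbapproxdp}.

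Finally, I would rearrange by multiplying through by $b-a$ and adding $a$ to both sides, using $\E_Q[R]-a=(b-a)\E_Q[\tilde{R}]$ and similarly for $\hat{R}$. The additive constants cancel, and the multiplicative factor $(b-a)$ can be absorbed into the square root as $(b-a)^2$ in the first form, or left outside as $(b-a)$ in the second form, giving exactly the two inequalities in the theorem statement. No step in this plan is genuinely difficult; the only thing to be careful about is confirming that neither the max-information bound (Theorem~\ref{thm:maxinfobound}) nor Lemma~\ref{lma:maxinfopb} introduces any hidden dependence on the range of $\ell$, which they do not, since both concern properties of the algorithm producing $\theta$ from $S$.
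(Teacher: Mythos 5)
Your proposal is correct and is essentially the paper's own argument: the paper plugs $F(x,y) = (2s-1)\left(\frac{x-a}{b-a} - \frac{y-a}{b-a}\right)^2$ into the generic bound of Theorem~\ref{thm:app_genericbound} and then reuses the proof of Theorem~\ref{thm:app_pbapproxdp}, which is the same affine-rescaling reduction you describe, just phrased at the level of the convex function $F$ rather than of a normalized loss $\tilde{\ell}$. Your observation that neither the max-information bound nor the DP hypothesis depends on the range of $\ell$ is the right thing to check and holds as you say.
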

\begin{proof}
Let $F(x, y) \defeq (2s - 1)\left(\frac{x - a}{b - a} - \frac{y - a}{b - a} \right)^2$. The LHS of the bound in Theorem~\ref{thm:app_genericbound} can then be rewritten as $\frac{2s - 1}{(b - a)^2} \left (\E_Q[R(\theta; \mathcal{D}, \lambda)] - \E_Q[R(\theta; S, \lambda)] \right)^2$. The remainder of the proof follows from the proof of Theorem~\ref{thm:app_pbapproxdp}.
\end{proof}
\subsection{Differentially Private SGLD} \label{sec:sgldappendix}
The results in Section~\ref{sec:gentheory} assume the use of a particular $(\epsilon, \delta)$-DP algorithm that we introduce here. To define the algorithm that satisfies the assumptions of Theorem~\ref{thm:app_pbapproxdp}, we make three modifications to standard SGLD \cite{welling2011bayesian}. The first two modifications were used by \citet{li2017connecting} for their $(\epsilon,\delta)$-DP SGLD algorithm.

We first scale the gradient computed at each step by ``clipping'' it to have norm no greater than some input parameter $\gamma$. For the purpose of computing the privacy parameters, $\epsilon$ and $\delta$, gradient clipping is equivalent to $\gamma$-Lipschitz continuity for the optimized function. $(\epsilon,\delta)$-DP gradient-based optimization algorithms commonly assume Lipschitz continuity because the gradients of such functions have bounded sensitivity to modifications of the data set. We will denote gradient clipping by
\begin{align*}
    \clip_\gamma(g) = \frac{g}{\max\left(1, \frac{\norm{g}_2}{\gamma} \right)}\eqperiod
\end{align*}
In addition to gradient clipping, we bound the step size, $\eta_t$, of SGLD. Consider the limit in which $\eta_t$ approaches $0$; then SGLD will simply output samples from its initial (data-independent) distribution. More generally, the smaller the step sizes are, the closer the distribution of $\theta_T$ is to the initial distribution. The step size bound thus makes the output sample less sensitive to the data used at each iteration.

Last, to satisfy the product distribution assumption of Theorem~\ref{thm:maxinfobound}, we sample $h$ data points \emph{without} replacement at each iteration; the algorithm can then be run for no longer than one epoch (i.e., one full pass through the data set). If the same data points were to be reused in later iterations of Algorithm~\ref{alg:dpsgld_onechain}, the distribution of those data points, conditioned on prior iterates, would no longer be a product distribution. We speculate that this restriction is not necessary (i.e., Theorem~\ref{thm:maxinfobound} can be proven with a weaker assumption), but we leave a proof of this claim to future work. If this conjecture is true, we could select a PAC-Bayes prior by running the $(\epsilon,\delta)$-DP SGLD algorithm introduced by \citet{li2017connecting} for as many epochs as we desire.

\begin{algorithm}[ht!]
   \caption{$(\epsilon,\delta)$-DP SGLD Algorithm}
   \label{alg:dpsgld_onechain}
\begin{algorithmic}
   \STATE {\bfseries Input:} $S \sim \mathcal{D}^s$, $T$, $\epsilon$, $\delta$, $\{\eta_t\}_{t = 0}^{T-1}$, $P_0$, $h$, $\gamma$
   \STATE {\bfseries Output:} $\theta_T$
   \STATE $\theta_0 \sim P_0$
   \FOR{$t=0$ {\bfseries to} $T-1$}
   \STATE $J_t \leftarrow$ simple random sample of size $h$ (without replacement) of $[0,\dots,s]$ 
   \STATE $z_t \leftarrow \mathcal{N}\left (\mathbf{0}_m, \frac{2 \eta_t}{s} \mathbb{I}_m \right)$
   \STATE $\theta_{t + 1} \leftarrow \theta_t - \frac{\eta_t}{h} \clip_\gamma\left (\nabla_\theta \hat{R}(\theta_t; S_{J_t}) \right)  + z_t$
   \ENDFOR
\end{algorithmic}
\end{algorithm}

Theorem~\ref{thm:sgld_guarantees} describes how $\gamma$ and $\eta_t$ affect the privacy parameters $\epsilon$ and $\delta$ of Algorithm~\ref{alg:dpsgld_onechain}.
\begin{theorem}
Algorithm~\ref{alg:dpsgld_onechain} is $(\epsilon, \delta)$-DP with respect to $S$ if 
\begin{align*}
    \eta_t \leq \frac{h^2\epsilon^2 }{s \gamma^2 \log \frac{1.25}{\delta}}\eqperiod
\end{align*}
\label{thm:sgld_guarantees}
\end{theorem}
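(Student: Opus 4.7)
The proof plan rests on two structural observations about Algorithm~\ref{alg:dpsgld_onechain}: (i) because minibatches $J_t$ are drawn without replacement over at most one epoch, each data point in $S$ influences at most one gradient evaluation; and (ii) each iteration consists of a clipped-gradient update followed by an injection of isotropic Gaussian noise, which is a textbook instance of the Gaussian mechanism. Together these let me reduce a $T$-step privacy analysis to the privacy loss of a single step.

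First I would fix a pair of adjacent data sets $S \sim S'$ differing in exactly one index $i$, and condition on the realization of the index sets $\{J_t\}$ (which are sampled independently of $S$). Because the $J_t$ are pairwise disjoint, the index $i$ lies in at most one batch $J_{t^\star}$; if it lies in none, the two output trajectories are identically distributed and the DP inequality holds trivially. Otherwise the iterates $\theta_0,\ldots,\theta_{t^\star}$ depend only on data indexed outside $\{i\}$, and the iterates $\theta_{t^\star+2},\ldots,\theta_T$ are measurable functions of $\theta_{t^\star+1}$ together with batches disjoint from $i$. Closure of $(\epsilon,\delta)$-DP under post-processing thus reduces the problem to bounding the privacy loss of the single step producing $\theta_{t^\star+1}$.

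At step $t^\star$, the non-noise part of the update is $-(\eta_{t^\star}/h)\,\clip_\gamma(\nabla_\theta \hat{R}(\theta_{t^\star}; S_{J_{t^\star}}))$, whose $\ell_2$-sensitivity to changing element $i$ of the batch is $O(\eta_{t^\star}\gamma/h)$ by the clipping bound. The added noise $z_{t^\star} \sim \mathcal{N}(\mathbf{0}_m,(2\eta_{t^\star}/s)\,\mathbb{I}_m)$ then makes this application of the Gaussian mechanism $(\epsilon,\delta)$-DP provided the per-coordinate standard deviation satisfies $\sqrt{2\eta_{t^\star}/s}\,\epsilon \geq (\eta_{t^\star}\gamma/h)\,\sqrt{2\log(1.25/\delta)}$, which is the standard Gaussian-mechanism calibration from Theorem~A.1 of \citet{dwork2014algorithmic}. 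Squaring and rearranging this inequality yields exactly the stated bound $\eta_{t^\star} \leq h^2\epsilon^2/(s\gamma^2\log(1.25/\delta))$; imposing it for every $t$ and then invoking post-processing completes the proof.

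The main subtlety is justifying this parallel-composition-style argument despite the sequential dependence among iterates: although $\theta_{t^\star+1}$ propagates into every later $\theta_{t^\star+2},\ldots,\theta_T$, those later updates use minibatches entirely disjoint from index $i$, so they are post-processing of a quantity that is already $(\epsilon,\delta)$-DP with respect to $i$, and no further privacy degradation accrues. Without the sampling-without-replacement, one-epoch restriction that Algorithm~\ref{alg:dpsgld_onechain} enforces, one would instead have to invoke strong composition across iterations and would pick up additional factors polynomial in $T$ in the privacy parameters—so this restriction is exactly what makes the clean, $T$-independent bound possible.
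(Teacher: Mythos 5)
Your proposal is correct and follows essentially the same route as the paper: the paper likewise reduces the $T$-step analysis to a single Gaussian-mechanism step with noise $\mathcal{N}(\mathbf{0}_m, \tfrac{2\eta_t}{s}\mathbb{I}_m)$ calibrated against the clipped-gradient sensitivity $\eta_t\gamma/h$, and handles the disjoint, without-replacement minibatches by citing the parallel composition theorem, which your adjacent-datasets/post-processing argument simply proves inline. The resulting algebra and the final bound on $\eta_t$ are identical.
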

\vspace{-15pt}Proving Theorem~\ref{thm:sgld_guarantees} requires the following results from the differential privacy literature. 

\begin{lemma}[The Parallel Composition Theorem \cite{barthe2012probabilistic}]
If $T$ $(\epsilon_t, \delta_t)$-DP algorithms $\mathcal{A}_t$ are applied to disjoint subsets of $S$, then $\mathcal{A} \defeq \mathcal{A}_1 \circ \cdots \circ \mathcal{A}_T$ is $(\max_t \epsilon_t, \max_t \delta_t)$-DP with respect to $S$.
\label{lma:parallelcomp}
\end{lemma}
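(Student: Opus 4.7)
The plan is to reduce the joint privacy guarantee to the privacy guarantee of the single algorithm whose input is actually perturbed. Let $S_1$ and $S_2$ be adjacent data sets differing in one element $x^\ast$, and let $S_{\cdot, t}$ denote the restriction of a data set to the $t$-th subset. Because the subsets are disjoint, there is exactly one index $j$ with $x^\ast \in S_{1,j}$; for every $t \neq j$, the restrictions $S_{1,t}$ and $S_{2,t}$ are identical, so $\mathcal{A}_t(S_{1,t})$ and $\mathcal{A}_t(S_{2,t})$ are equal in distribution. I would interpret the composition $\mathcal{A} \defeq \mathcal{A}_1 \circ \cdots \circ \mathcal{A}_T$ as the joint map $S \mapsto (\mathcal{A}_1(S_{\cdot,1}), \ldots, \mathcal{A}_T(S_{\cdot,T}))$ into $U_1 \times \cdots \times U_T$, with the internal randomness of different $\mathcal{A}_t$'s taken to be independent.

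Fix a measurable event $I \subseteq U_1 \times \cdots \times U_T$ and, for each $u_{-j} \in \prod_{t \neq j} U_t$, define the slice $I_{u_{-j}} \defeq \{u_j \in U_j : (u_1, \ldots, u_T) \in I\}$. By Fubini together with the independence of the $\mathcal{A}_t$'s,
\begin{align*}
P(\mathcal{A}(S_1) \in I) = \int P\bigl(\mathcal{A}_j(S_{1,j}) \in I_{u_{-j}}\bigr) \prod_{t \neq j} P_{\mathcal{A}_t(S_{1,t})}(du_t).
\end{align*}
Now apply the $(\epsilon_j, \delta_j)$-DP guarantee of $\mathcal{A}_j$ slicewise to obtain $P(\mathcal{A}_j(S_{1,j}) \in I_{u_{-j}}) \leq e^{\epsilon_j} P(\mathcal{A}_j(S_{2,j}) \in I_{u_{-j}}) + \delta_j$. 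Integrating both sides against $\prod_{t \neq j} P_{\mathcal{A}_t(S_{1,t})}(du_t)$, using that this product measure equals $\prod_{t \neq j} P_{\mathcal{A}_t(S_{2,t})}(du_t)$ because $S_{1,t} = S_{2,t}$ for $t \neq j$, and bounding $\epsilon_j \leq \max_t \epsilon_t$ and $\delta_j \leq \max_t \delta_t$, I obtain
\begin{align*}
P(\mathcal{A}(S_1) \in I) \leq e^{\max_t \epsilon_t} P(\mathcal{A}(S_2) \in I) + \max_t \delta_t,
\end{align*}
which is precisely the definition of $(\max_t \epsilon_t, \max_t \delta_t)$-DP.

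The proof is essentially bookkeeping, so the main subtleties are structural rather than mathematical: one must (i) interpret $\circ$ as parallel composition that outputs a tuple rather than as sequential application, (ii) ensure the internal randomness of the $\mathcal{A}_t$'s can be coupled independently so that Fubini applies to the product of marginals, and (iii) handle a generic measurable $I$, which is cleanest by first proving the bound on product rectangles $I_1 \times \cdots \times I_T$ and extending via a monotone class argument. No new differential-privacy machinery is required beyond the definition and the disjointness of the subsets.
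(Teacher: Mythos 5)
The paper does not prove this lemma---it is imported from \citet{barthe2012probabilistic} without proof---so your argument can only be judged on its own terms and against the way the lemma is actually invoked. For the interpretation you chose, namely $\mathcal{A}$ as the non-adaptive joint map $S \mapsto (\mathcal{A}_1(S_{\cdot,1}),\dots,\mathcal{A}_T(S_{\cdot,T}))$ with independent internal randomness, your Fubini-and-slice argument is correct and complete: only one factor sees the perturbed element, the product measure over the remaining coordinates is unchanged and integrates to one (so the additive $\delta_j$ survives intact), and the monotone-class remark in (iii) is not even needed since the slicing already handles arbitrary measurable $I$.

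The gap is that this is not the composition the paper uses. In the proof of Theorem~\ref{thm:sgld_guarantees}, $\mathcal{A}_{t+1}$ takes the \emph{output} $\theta_t$ of the previous algorithm together with its disjoint mini-batch $S_{J_t}$, and the composed algorithm releases only $\theta_T$; the coordinates are a Markov chain, not independent, so the joint law is not a product measure and your Fubini step does not apply. To cover the adaptive case you need three additional ingredients: (i) condition on the prefix $\theta_0,\dots,\theta_{j-1}$, whose law is identical under $S_1$ and $S_2$ because those steps touch only unperturbed subsets; (ii) apply the $(\epsilon_j,\delta_j)$-DP guarantee of $\mathcal{A}_j$ \emph{for each fixed auxiliary input} $\theta_{j-1}$, upgraded from indicator events to bounded nonnegative test functions via the layer-cake identity $\E[G] = \int_0^{\norm{G}_\infty} P(G \geq u)\,du$ (the same device the paper uses in Lemma~\ref{lma:amshiftbound}), where $G(\theta_j)$ is the conditional probability that the remaining trajectory lands in $I$; and (iii) observe that $G$ is the same function under $S_1$ and $S_2$ because the suffix algorithms are randomized post-processing of $\theta_j$ using unperturbed data, and differential privacy is closed under post-processing. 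With those modifications the conclusion $(\max_t\epsilon_t,\max_t\delta_t)$-DP follows as you intend; without them, your proof establishes a strictly weaker statement than the one Algorithm~\ref{alg:dpsgld_onechain} relies on.
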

The next result establishes the differential privacy parameters of an algorithm that computes a bounded function of a data set and then adds Gaussian noise. This algorithm for producing differentially private outputs of a bounded function is often referred to as the ``Gaussian mechanism.''
\begin{lemma}[\citet{dwork2014algorithmic}]
If $f: (\mathcal{X} \times \mathcal{Y})^s \to \mathbb{R}^m$ and $\norm{f}_2 \leq 1$, then if
\begin{align*}
    \sigma^2 \geq \frac{2 \log \frac{1.25}{\delta}}{\epsilon^2}\eqcomma
\end{align*}
a sample from $f(S) + \mathcal{N}(0, \sigma^2 \mathbb{I}_m)$ is $(\epsilon, \delta)$-DP.
\label{lma:gaussianmech}
\end{lemma}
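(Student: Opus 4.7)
The plan is to prove this Gaussian-mechanism guarantee by a direct analysis of the privacy-loss random variable, which is the classical template of Dwork--Roth. I interpret the hypothesis $\norm{f}_2 \leq 1$ as the $L_2$-sensitivity bound $\norm{f(S_1) - f(S_2)}_2 \leq 1$ for adjacent $S_1, S_2$; write $\mu_i \defeq f(S_i)$ and let $p_i$ be the density of $\mathcal{N}(\mu_i, \sigma^2 \mathbb{I}_m)$. It then suffices to control the privacy loss $L(x) \defeq \log(p_1(x)/p_2(x))$ under $x \sim p_1$, since the standard equivalence converts a tail bound $P_{x \sim p_1}(L(x) > \epsilon) \leq \delta$ into $(\epsilon, \delta)$-DP.

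First I would reduce to one dimension using rotational invariance of the isotropic Gaussian: $L$ depends on $x$ only through its projection onto the direction $\mu_1 - \mu_2$. Expanding the log-ratio of Gaussian densities and completing squares gives $L(x) = \langle x - (\mu_1 + \mu_2)/2,\, \mu_1 - \mu_2\rangle/\sigma^2$, and under $x \sim p_1$ this is an affine function of a Gaussian, hence $L \sim \mathcal{N}(c^2/2, c^2)$ with $c \defeq \norm{\mu_1 - \mu_2}/\sigma \leq 1/\sigma$. The desired tail estimate becomes $P(L > \epsilon) = P(Z > \epsilon/c - c/2)$ for standard $Z$, to which I would apply the Mills-ratio inequality $P(Z > t) \leq \phi(t)/t$ together with the calibration $\sigma^2 \geq 2 \log(1.25/\delta)/\epsilon^2$ to conclude $P(L > \epsilon) \leq \delta$.

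Finally I would convert this into the DP statement in the usual way. Let $B \defeq \{x : L(x) \leq \epsilon\}$; by construction $p_1(x) \leq e^\epsilon p_2(x)$ pointwise on $B$, so for any measurable $I \subseteq \mathbb{R}^m$,
\begin{align*}
P(x_1 \in I) \leq P(x_1 \in I \cap B) + P(x_1 \notin B) \leq e^\epsilon P(x_2 \in I \cap B) + \delta \leq e^\epsilon P(x_2 \in I) + \delta\eqperiod
\end{align*}
The main obstacle, and the only genuinely delicate step, is the Mills-ratio calculation that produces the specific constant $1.25$ rather than a cleaner factor like $\sqrt{2}$: one has to track $\epsilon/c - c/2$ uniformly over the permissible range of $c \in (0, 1/\sigma]$ and verify that the Gaussian-tail prefactor $\phi(t)/t$ can be absorbed so that the right-hand side equals $\delta$ exactly when $\sigma^2 = 2\log(1.25/\delta)/\epsilon^2$. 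Everything else is algebraic and follows the standard template.
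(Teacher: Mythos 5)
The paper offers no proof of this lemma---it is quoted verbatim from \citet{dwork2014algorithmic} (Theorem~A.1 there)---and your argument is exactly the standard proof from that reference: reduce the privacy loss to a one-dimensional Gaussian $\mathcal{N}(c^2/2, c^2)$ with $c = \norm{\mu_1 - \mu_2}_2/\sigma$, bound its upper tail at $\epsilon$ by $\delta$ via a Mills-ratio estimate, and convert the pointwise density ratio on the good event $B$ into $(\epsilon,\delta)$-DP; that conversion step and the computation of $L$ are both correct as written. Two caveats are worth recording. First, you silently (and, for the constant to come out right, necessarily) read the hypothesis $\norm{f}_2 \leq 1$ as the $L_2$-\emph{sensitivity} bound $\norm{f(S_1) - f(S_2)}_2 \leq 1$ for adjacent inputs; taken literally, a bound on the output norm only yields sensitivity $2$ and would cost a factor of $4$ in $\sigma^2$. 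The sensitivity reading is the one consistent with Dwork--Roth and with how the paper applies the lemma after rescaling the clipped gradient by $1/\gamma$, but the mismatch between the lemma's wording and the quantity actually needed is a genuine ambiguity you were right to resolve explicitly. Second, the Mills-ratio calibration that produces the constant $1.25$ requires $\epsilon \leq 1$ (a hypothesis present in Theorem~A.1 of \citet{dwork2014algorithmic} but dropped from the restatement here); without it the term $\epsilon/c - c/2$ need not be controlled as required. This is harmless for the paper, which only invokes the lemma with $\epsilon \in \left(0, \frac{1}{2}\right]$, but your ``uniformly over the permissible range of $c$'' step does depend on it and should say so.
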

\begin{proof}[Proof of Theorem~\ref{thm:sgld_guarantees}]
Let $\mathcal{A}_{t+1}$ denote the noisy gradient step taken at iteration $t$: $\mathcal{A}_{t + 1}$ takes $\theta_t$ and $S_{J_t}$ as inputs and outputs $\theta_{t + 1}$. $\theta_T$ (i.e., the output of Algorithm~\ref{alg:dpsgld_onechain}) is then the result of composing $\mathcal{A}_1 \circ \cdots \circ \mathcal{A}_T$. Because we assume $J_t$ is sampled without replacement, Lemma~\ref{lma:parallelcomp} implies that this algorithm is $(\epsilon, \delta)$-DP if every $\mathcal{A}_t$ is $(\epsilon, \delta)$-DP with respect to $S_{J_t}$. To show this, we rewrite the gradient update in $\mathcal{A}_{t+1}$ (i.e., the only part of the algorithm that depends on $S_{J_t}$) and prove that it is an instance of the Gaussian mechanism analyzed in Lemma~\ref{lma:gaussianmech}: 
\begin{align*}
    -\frac{\eta_t \gamma}{h} \left [\frac{1}{\gamma} \clip_\gamma\left (\nabla_\theta \hat{R}(\theta_t; S_{J_t}) \right) + \mathcal{N}\biggl (0, \underbrace{\frac{2 \eta_t}{s} \frac{h^2}{\eta^2_t \gamma^2}}_{\sigma^2_t} \mathbb{I}_m \biggl) \right] \eqperiod
\end{align*}
Factoring out $\left(-\frac{\eta_t \gamma}{h}\right)$ ensures that the clipped gradient has norm no greater than $1$; the gradient thus satisfies the assumption regarding $f$ in Lemma~\ref{lma:gaussianmech}. Lemma~\ref{lma:gaussianmech} implies that we can only guarantee $(\epsilon, \delta)$-DP for $\mathcal{A}_t$ if the following bound on $\sigma^2_t$ holds:
\begin{align*}
    \frac{2 h^2}{s \eta_t \gamma^2} \geq \frac{2 \log \frac{1.25}{\delta}}{\epsilon^2}\eqperiod
\end{align*}
Solving for $\eta_t$ yields the desired result
\begin{equation*}
    \frac{\epsilon^2 h^2}{s\gamma^2 \log \frac{1.25}{\delta}} \geq \eta_t \eqperiod\shoveright{\qedhere}
\end{equation*}
\end{proof}
Note that Theorem~\ref{thm:sgld_guarantees} does not establish a single pair of privacy parameters, $\epsilon$ and $\delta$, given a user's choice of $\eta_t$ and $\gamma$. We prove that there exists a \emph{set} of $\epsilon$ and $\delta$ for which the inequality in Theorem~\ref{thm:sgld_guarantees} is tight. This set is akin to a Pareto frontier because any other choice of $\epsilon$ and $\delta$ results in a PAC-Bayes bound that can be improved by simply reducing either $\epsilon$ or $\delta$.

Algorithm~\ref{alg:dpsgld_onechain} and Theorem~\ref{thm:sgld_guarantees} establish a method for producing one sample from an $(\epsilon, \delta)$-DP version of SGLD. We could collect more samples by running $C$ independent instantiations of Algorithm~\ref{alg:dpsgld_onechain}. Inspired by the Markov Chain Monte Carlo literature, we refer to these parallel samplers as chains. Although running $C$ chains ostensibly reuses the data set  $C$ times, this composition is \emph{parallel}, not sequential. Conditioning on the output of the first chain does not affect the distribution of the output of any other chain. The algorithm we describe is then $(C\epsilon, C\delta)$-DP and operates on a product distribution, implying that it satisfies the assumptions of Theorem~\ref{thm:maxinfobound}.
\subsection{Proof of Remaining Results in Section~\ref{sec:gentheory}} \label{sec:resultappendix}
We now use the theory established in the prior sections of this appendix to prove the remaining results we describe in Section~\ref{sec:gentheory}. We first recall the well-known KL divergence between two multivariate Gaussian distributions.
\begin{lemma}[\citet{duchi}]
Given two multivariate Gaussian distributions $\mathcal{N}\left ( \mu_1, \Sigma_1 \right)$ and $\mathcal{N}\left ( \mu_2, \Sigma_2 \right)$ where $\mu_{(\cdot)} \in \mathbb{R}^m$ and $\Sigma_{(\cdot)} \in \mathbb{R}^{m \times m}$,
\begin{align*}
    \kld{\mathcal{N}\left ( \mu_1, \Sigma_1 \right)}{\mathcal{N}\left ( \mu_2, \Sigma_2) \right)} &= \frac{1}{2} \left (\tr\left(\Sigma_1^{-1} \Sigma_0 \right) + (\mu_1 - \mu_0)^T \Sigma_1^{-1} (\mu_1 - \mu_0) - m +  \log {\left ( \frac{\det \Sigma_1}{\det \Sigma_0} \right)}\right)\eqperiod 
\end{align*}
\label{lma:gaussian}
\end{lemma}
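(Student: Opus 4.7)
The plan is a direct computation from the definition of KL divergence. I would first fix notation (the statement has a typo in which the Gaussian parameters on the left appear as $\mu_1,\mu_2,\Sigma_1,\Sigma_2$ but the right-hand side uses $\mu_0,\mu_1,\Sigma_0,\Sigma_1$) by taking $Q \defeq \mathcal{N}(\mu_0,\Sigma_0)$ and $P \defeq \mathcal{N}(\mu_1,\Sigma_1)$, consistent with the right-hand side. Then I would expand $\kld{Q}{P} = \E_Q[\log q(x) - \log p(x)]$ by substituting the explicit multivariate Gaussian density. The normalization constants $(2\pi)^{-m/2} (\det \Sigma_{(\cdot)})^{-1/2}$ cancel the $(2\pi)$ factors and collect into $\tfrac{1}{2}\log(\det\Sigma_1/\det\Sigma_0)$, so what remains to evaluate is the expectation of
\[
\tfrac{1}{2}\bigl[(x-\mu_1)^T \Sigma_1^{-1}(x-\mu_1) \;-\; (x-\mu_0)^T\Sigma_0^{-1}(x-\mu_0)\bigr]
\]
under $Q$.

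The substantive work is the two quadratic-form expectations. For each I would apply the trace trick $v^T A v = \tr(A v v^T)$ and then swap trace with expectation by linearity. The $\Sigma_0^{-1}$ term collapses immediately via $\E_Q[(x - \mu_0)(x - \mu_0)^T] = \Sigma_0$, giving $\tr(\Sigma_0^{-1}\Sigma_0) = m$. For the $\Sigma_1^{-1}$ term I would decompose $x - \mu_1 = (x - \mu_0) + (\mu_0 - \mu_1)$, expand the outer product into four pieces, and use $\E_Q[x - \mu_0] = 0$ to annihilate the two cross terms; the two surviving terms yield $\tr(\Sigma_1^{-1}\Sigma_0) + (\mu_0 - \mu_1)^T \Sigma_1^{-1}(\mu_0 - \mu_1)$.

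Collecting the three contributions (the log-determinant ratio, the $-\tfrac{m}{2}$ from the $\Sigma_0^{-1}$ expectation, and the $\tfrac{1}{2}[\tr(\Sigma_1^{-1}\Sigma_0) + (\mu_0-\mu_1)^T\Sigma_1^{-1}(\mu_0-\mu_1)]$ from the $\Sigma_1^{-1}$ expectation) immediately reproduces the claimed identity. There is no real conceptual obstacle -- this is a textbook identity -- and the only care required is bookkeeping of signs when expanding the quadratic form about $\mu_1$ and applying cyclicity of the trace. The reason the lemma is included here is presumably to apply it verbatim inside the Gaussian-conditional KL decomposition used in Corollary~\ref{cor:klbound}, so an explicit and clean statement is more useful than a slick derivation.
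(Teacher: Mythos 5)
Your derivation is correct and is essentially the same computation carried out in the cited reference (the paper itself states this lemma without proof, importing it from \citet{duchi}): expand the log-density ratio, apply the trace trick, and split $x-\mu_1$ about $\mu_0$ to kill the cross terms. You are also right that the lemma statement mixes the subscripts $1,2$ on the left with $0,1$ on the right, and your choice of $Q=\mathcal{N}(\mu_0,\Sigma_0)$, $P=\mathcal{N}(\mu_1,\Sigma_1)$ is the reading consistent with how the lemma is used in Corollary~\ref{cor:klbound}.
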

We next prove Corollary~\ref{cor:klbound}, which establishes a tractable upper bound on the original PAC-Bayes bound. Because we assume that the validation risk is $\gamma$-Lipschitz in the statement of Corollary~\ref{cor:klbound}, $\clip_\gamma(\cdot)$ becomes the identity function. Even though Algorithm~\ref{alg:dpsgld_onechain} ostensibly still includes gradient clipping, this assumption allows us to omit $\clip_\gamma(\cdot)$ from the derived bound.

For convenience, we will use the following symbols for the mean of Gaussians defined below,
\begin{equation}
    \begin{split}
    \mu_{t}^{\T} &\defeq \theta_{t-1} - \eta^{\T}_t \nabla_\theta \hat{R}(\theta_{t-1}; S^{(t-1)}_{\T},\lambda)\eqcomma\\
    \mu_{t}^{\V} &\defeq \theta_{t-1} - \eta^{\V}_t \clip_\gamma\left(\nabla_\theta \hat{R}(\theta_{t-1}; S^{(t-1)}_{\V})\right)\eqperiod
\end{split}
\label{eq:gaussianparams}
\end{equation}
As mentioned above, when we assume the validation risk is $\gamma$-Lipschitz, we can suppress the clipping function in Eq.~\ref{eq:gaussianparams}.
{\renewcommand{\thesection}{2}
\setcounter{restatecorollary}{0}
\begin{restatecorollary}
We assume that the same constant step size $\eta$ is used to define both $p(\theta_T | S_\T, \lambda)$ and $p(\theta^{(\epsilon, \delta)}_T | S_\V, \lambda)$, $\hat{R}_\V(\theta; S_\V, \lambda)$ is $\gamma$-Lipschitz, and that both iterative methods are initialized with $\theta_0 \sim P_0$. Then, given that $\nu_t$ is the distribution of the $t$-th iterate of the Langevin sampler applied to $S_\T$,
\begin{align*}
    \kldbig{p(\theta_T | S_\T, \lambda)}{p\left (\theta^{(\epsilon, \delta)}_T | S_\V, \lambda \right )} \leq B + \sum_{t = 0}^{T - 1} \frac{n_\V \eta}{4} \E_{\nu_t} \left [\norm{\nabla_\theta \hat{R}_\T(\theta; S^{(t)}_\T, \lambda) - \nabla_\theta \hat{R}_\V(\theta; S^{(t)}_\V, \lambda)}_2^2 \right]\eqcomma
\end{align*}
for some constant $B(\eta,n_\T,n_\V,m)$.
\label{cor:app_klbound}
\end{restatecorollary}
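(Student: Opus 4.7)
}

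The plan is to combine the chain rule for KL divergence from Lemma~\ref{lma:klchainrule} with the closed-form KL between Gaussians from Lemma~\ref{lma:gaussian}, exploiting the fact that both chains are Langevin-type iterations whose one-step transition kernels are Gaussian. First, I would identify $Q \equiv p(\theta_{0:T} \mid S_\T, \lambda)$ and $P \equiv p(\theta^{(\epsilon,\delta)}_{0:T} \mid S_\V, \lambda)$ as joint distributions over the full optimization trajectories. Because both samplers are initialized with the same data-independent $P_0$, we have $Q_0 = P_0$ and Lemma~\ref{lma:klchainrule} applies, giving
\begin{align*}
\kldbig{p(\theta_T \mid S_\T, \lambda)}{p(\theta^{(\epsilon,\delta)}_T \mid S_\V, \lambda)} \leq \sum_{t=1}^{T} \E_{Q_{0:(t-1)}}\left[\kld{Q_{t\mid}}{P_{t\mid}}\right].
\end{align*}

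Next, I would write out the one-step conditionals. Conditional on $\theta_{t-1}$, the training iterate is $\theta_t \sim \mathcal{N}(\mu_t^\T, \sigma_\T^2 \mathbb{I}_m)$ and the validation iterate is $\theta_t \sim \mathcal{N}(\mu_t^\V, \sigma_\V^2 \mathbb{I}_m)$, with means defined as in Eq.~\ref{eq:gaussianparams}. The $\gamma$-Lipschitz hypothesis on $\hat{R}_\V(\cdot; S_\V, \lambda)$ ensures that $\clip_\gamma$ acts as the identity on the validation gradient, so $\mu_t^\T - \mu_t^\V = -\eta\bigl(\nabla_\theta \hat{R}_\T(\theta_{t-1}; S_\T^{(t-1)}, \lambda) - \nabla_\theta \hat{R}_\V(\theta_{t-1}; S_\V^{(t-1)}, \lambda)\bigr)$ under the shared constant step size $\eta$. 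Applying Lemma~\ref{lma:gaussian} to these two isotropic Gaussians yields
\begin{align*}
\kld{Q_{t\mid}}{P_{t\mid}} = \frac{1}{2\sigma_\V^2}\norm{\mu_t^\T - \mu_t^\V}_2^2 + \frac{m}{2}\left(\frac{\sigma_\T^2}{\sigma_\V^2} - 1 - \log\frac{\sigma_\T^2}{\sigma_\V^2}\right),
\end{align*}
where the second group of terms is independent of $\lambda$ and also of $\theta_{t-1}$.

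Substituting the means and using the validation noise variance $\sigma_\V^2 = 2\eta/n_\V$ from Algorithm~\ref{alg:dpsgld_onechain} converts the squared-mean-difference into
\begin{align*}
\frac{1}{2\sigma_\V^2}\norm{\mu_t^\T - \mu_t^\V}_2^2 = \frac{n_\V \eta}{4}\norm{\nabla_\theta \hat{R}_\T(\theta_{t-1}; S_\T^{(t-1)}, \lambda) - \nabla_\theta \hat{R}_\V(\theta_{t-1}; S_\V^{(t-1)}, \lambda)}_2^2.
\end{align*}
Taking the expectation under $Q_{0:(t-1)}$, recognizing that $\theta_{t-1}$ then has marginal $\nu_{t-1}$, reindexing the sum, and collecting every $\lambda$-independent contribution (the $T$ copies of the variance-mismatch term plus any constants arising from matching the indexing conventions) into a single constant $B(\eta, n_\T, n_\V, m)$ produces exactly the claimed inequality.

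The main obstacle I anticipate is bookkeeping around the noise scales. If the training chain uses a different prefactor on its Brownian increment than the DP-SGLD chain on the validation side, the trace and log-determinant pieces of the Gaussian KL do not vanish, and I have to verify that they genuinely do not depend on $\lambda$ so that they can be absorbed into $B$. A secondary concern is the clip function in the validation update: the Lipschitz assumption is precisely what lets me drop it, but I would want to be explicit that this also means the DP-SGLD iterates satisfy the same recursion used to define $\mu_t^\V$ in Eq.~\ref{eq:gaussianparams}.
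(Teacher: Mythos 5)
Your proposal is correct and follows essentially the same route as the paper's proof: apply the chain rule of Lemma~\ref{lma:klchainrule} to the joint trajectory distributions, recognize the one-step conditionals as isotropic Gaussians with means given by Eq.~\ref{eq:gaussianparams} (with $\clip_\gamma$ reducing to the identity under the Lipschitz hypothesis), and expand each conditional KL via Lemma~\ref{lma:gaussian} so that the mean-difference term yields the $\frac{n_\V\eta}{4}$-weighted gradient-incoherence summand while the variance-mismatch terms, which depend only on $\eta$, $n_\T$, $n_\V$, and $m$, are absorbed into $B$. Your bookkeeping of the noise scales and the reindexing from $t=1,\dots,T$ to $t=0,\dots,T-1$ matches the paper's computation exactly.
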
}
\begin{proof}
We observe that the distribution of $\theta_t$ conditioned on $\theta_0,\dots,\theta_{t - 1}$ is the following Gaussian:
\begin{align*}
    \mathcal{N} \left (\mu_{t}^{(\cdot)}, \frac{2\eta}{n_{(\cdot)}} \mathbb{I}_m\right)\eqperiod
\end{align*}
Applying Lemma~\ref{lma:klchainrule} to $\kldbig{p(\theta_T | S_\T, \lambda)}{p\left(\theta^{(\epsilon, \delta)}_T | S_\V, \lambda, \lambda \right)}$ then yields the following bound.
\begin{align*}
    \kld{p(\theta_T | S_\T, \lambda)}{p(\theta^{(\epsilon, \delta)}_T | S_\V, \lambda, \lambda)} \leq \sum_{t = 1}^T \E_{\theta_0,...,\theta_{t - 1}} \left [\kldbig{\mathcal{N}\left (\mu_{t}^\T, \frac{2\eta}{n_{\T}}\mathbb{I}_m\right)}{\mathcal{N} \left ( \mu_{t}^\V, \frac{2\eta}{n_{\V}}\mathbb{I}_m \right)} \right]\eqperiod
\end{align*}
Applying Lemma~\ref{lma:gaussian} to expand the KL divergence summands on the right-hand-side of the inequality yields:
\begin{align*}
    \text{RHS} &= \sum_{t = 1}^T \frac{n_\V \eta}{4} \E_{\nu_{t - 1}} \left [ \norm{\nabla_\theta \hat{R}_\T(\theta_{t-1}; S^{(t - 1)}_\T, \lambda) - \nabla_\theta \hat{R}_\V(\theta_{t-1}; S^{(t - 1)}_\V, \lambda)}_2^2 \right] + B(\eta, n_\T, n_\V, m) \\
    &= \sum_{t = 0}^{T-1} \frac{n_\V \eta}{4} \E_{\nu_t} \left [ \norm{\nabla_\theta \hat{R}_\T(\theta_{t}; S^{(t)}_\T, \lambda) - \nabla_\theta \hat{R}_\V(\theta_t; S^{(t)}_\V, \lambda)}_2^2 \right]+ B(\eta, n_\T, n_\V, m)\eqperiod\qedhere
\end{align*}
\end{proof}

We generalize Corollary~\ref{cor:app_klbound} by relaxing two assumptions. Because we may wish to use training gradient step sizes that exceed the bound required for $(\epsilon,\delta)$-DP SGLD on the validation set, we allow for different step sizes on the training and validation samplers. We also relax the $\gamma$-Lipschitz assumption on the validation risk by using clipped gradients as defined in Section~\ref{sec:sgldappendix}.
\begin{corollary}
We assume that $p(\theta_T | S_\T, \lambda)$ is defined by a Langevin sampler with step sizes $\{\eta^{\T}_t\}_{t = 0}^{T - 1}$ and that $p(\theta^{(\epsilon, \delta)}_T | S_\V, \lambda)$ is defined by a Langevin sampler with step sizes $\{\eta^{\V}_t\}_{t = 0}^{T - 1}$. As in Corollary~\ref{cor:app_klbound}, both iterative methods are initialized with $\theta_0 \sim P_0$. Then, $\kld{p(\theta_T | S_\T, \lambda)}{p(\theta^{(\epsilon, \delta)}_T | S_\V, \lambda)}$ is less than or equal to
\begin{align*}
   \sum_{t = 0}^{T-1} \frac{n_\V }{4\eta^{\V}_t} \E_{\nu_t} \left [\norm{\eta^{\T}_t \nabla_\theta \hat{R}_\T(\theta_{t}; S^{(t)}_\T, \lambda) - \eta^{\V}_t \clip_\gamma\left (\nabla_\theta \hat{R}_\V(\theta_t; S^{(t)}_\V, \lambda) \right)}_2^2 \right] + m \left [ \frac{n_\V\eta_t^{\T}}{n_\T \eta_t^{\V}}  - \log \left (\frac{n_\V \eta_t^{\T}}{n_\T \eta_t^{\V}} \right) - 1\right]\eqperiod
\end{align*}
\label{cor:app_klgenbound}
\end{corollary}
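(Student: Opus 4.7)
The plan is to mirror the three-step strategy used in the proof of Corollary~\ref{cor:app_klbound}, but to carry the two distinct step-size sequences $\{\eta^\T_t\}$, $\{\eta^\V_t\}$ and the clipping nonlinearity through the calculation rather than letting them cancel. First I would apply Lemma~\ref{lma:klchainrule} to the joint laws of the two trajectories $\theta_0,\dots,\theta_T$ produced by the two Langevin samplers. Since both samplers are initialised from the common distribution $P_0$, the hypothesis $Q_0 = P_0$ of the chain rule is satisfied, and the marginal KL at time $T$ is bounded by the sum $\sum_{t=1}^T \E_{Q_{0:(t-1)}}[\kld{Q_{t|}}{P_{t|}}]$.

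Next I would identify each conditional as an explicit Gaussian. A single step of Algorithm~\ref{alg:dpsgld_onechain} (and of its unclipped training analogue) adds spherical Gaussian noise to a deterministic update, so conditionally on $\theta_0,\dots,\theta_{t-1}$,
\begin{align*}
Q_{t|} = \mathcal{N}\bigl(\mu^\T_t,\, \tfrac{2\eta^\T_t}{n_\T}\mathbb{I}_m\bigr), \qquad P_{t|} = \mathcal{N}\bigl(\mu^\V_t,\, \tfrac{2\eta^\V_t}{n_\V}\mathbb{I}_m\bigr),
\end{align*}
with the means given by Eq.~\ref{eq:gaussianparams}. Unlike in Corollary~\ref{cor:app_klbound}, I cannot drop $\clip_\gamma(\cdot)$ from $\mu^\V_t$ since $\hat{R}_\V$ is no longer assumed $\gamma$-Lipschitz; instead I simply keep the clipping operator inside the mean.

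Finally I would invoke Lemma~\ref{lma:gaussian} on each summand. Because both covariances are scalar multiples of $\mathbb{I}_m$, the trace and log-determinant pieces collapse into a single function of the ratio $r_t \defeq \tfrac{n_\V \eta^\T_t}{n_\T \eta^\V_t}$, producing the dimensional term $m[r_t - \log r_t - 1]$ that appears in the statement. The Mahalanobis term becomes $\tfrac{n_\V}{4\eta^\V_t}\norm{\mu^\T_t - \mu^\V_t}_2^2$, which expands to exactly the squared norm of the difference of the two (differently-scaled, partially-clipped) gradient steps claimed in the bound. Because each summand depends on the trajectory only through the previous iterate $\theta_{t-1}$, the outer expectation $\E_{Q_{0:(t-1)}}$ reduces to $\E_{\nu_{t-1}}$; shifting the summation index from $t=1,\dots,T$ to $t=0,\dots,T-1$ then matches the statement exactly.

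The argument is essentially bookkeeping, so there is no single hard step; the only real obstacle is tracking which step size multiplies which gradient inside $\norm{\mu^\T_t - \mu^\V_t}_2^2$, since the asymmetry between $\eta^\T_t$ and $\eta^\V_t$ (and the presence of $\clip_\gamma$ on only one side) destroys the cancellation of trace and log-determinant terms that absorbed the dimensional correction into the constant $B$ of Corollary~\ref{cor:app_klbound}.
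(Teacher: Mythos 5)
Your proposal is correct and follows essentially the same route as the paper's proof: apply Lemma~\ref{lma:klchainrule}, identify each conditional as an isotropic Gaussian with mean $\mu^{\T}_t$ or $\mu^{\V}_t$ (retaining $\clip_\gamma$ on the validation side) and covariance $\tfrac{2\eta^{\T}_t}{n_\T}\mathbb{I}_m$ or $\tfrac{2\eta^{\V}_t}{n_\V}\mathbb{I}_m$, then expand via Lemma~\ref{lma:gaussian} and reduce the outer expectation to $\E_{\nu_{t-1}}$ before reindexing. The only remark worth making is that the exact Gaussian KL computation yields $\tfrac{m}{2}\left[r_t - \log r_t - 1\right]$ rather than $m\left[r_t - \log r_t - 1\right]$, but since that bracket is nonnegative the stated (slightly looser) bound still holds; this slack is present in the paper's own statement and is not a defect of your argument.
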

\begin{proof}
Applying Lemma~\ref{lma:klchainrule} to $\kld{p(\theta_T | S_\T, \lambda)}{p(\theta^{(\epsilon, \delta)}_T | S_\V, \lambda)}$ yields the following bound.
\begin{align*}
    \kld{p(\theta_T | S_\T, \lambda)}{p(\theta^{(\epsilon, \delta)}_T | S_\V, \lambda)} \leq \sum_{t = 1}^T \E_{\theta_0,\ldots,\theta_{t - 1}} \left [ \kldbig{\mathcal{N}\left (\mu_{t}^\T, \frac{2\eta^\T_t}{n_\T}\mathbb{I}_m \right)}{\mathcal{N} \left ( \mu_t^\V, \frac{2\eta^{\V}_t}{n_\V}\mathbb{I}_m\right)} \right]\eqperiod
\end{align*}
Substituting the KL divergence between multivariate Gaussians and simplifying yields:
\begin{align*}
    \sum_{t = 0}^{T-1} \frac{n_\V }{4 \eta^{\V}_t} \E_{\nu_t} \left [ \norm{\eta^{\T}_t \nabla_\theta \hat{R}_\T(\theta_{t}; S^{(t)}_\T, \lambda) - \eta^{\V}_t \clip_\gamma \left (\nabla_\theta \hat{R}_\V(\theta_t; S^{(t)}_\V, \lambda) \right)}_2^2 \right] + m \left [ \frac{n_\V}{n_\T} \frac{\eta_t^{\T}}{\eta_t^{\V}}  - \log \left (\frac{n_\V}{n_\T} \frac{\eta_t^{\T}}{\eta_t^{\V}} \right) - 1\right]\eqperiod\mbox{\qedhere}
\end{align*}
\end{proof}

Last, we prove the results presented in Section~\ref{sec:gentheory} that describe how the objective in Eq.~\ref{eq:ho-stability} also bounds the generalization error attributable to the use of out-of-sample training data.
{\renewcommand{\thesection}{2}
\setcounter{restatelemma}{1}
\begin{restatelemma}
If $\ell$ is $\gamma$-Lipschitz and $d^1_{\mathcal{W}}(p(\theta_T | S_\T, \lambda), p(\theta | S_\T, \lambda)) \leq \kappa$, then 
\begin{align*}
    \frac{1}{n_\T} \kld{p(\theta | S_\T, \lambda)}{p(\theta | \mathcal{D}_\T, \lambda)} \leq \E_{p(\theta_T | S_\T, \lambda)} [R_\T(\theta; \mathcal{D}_\T, \lambda) - \hat{R}_\T(\theta; S_\T, \lambda)] + 2\gamma \kappa\eqperiod
\end{align*}
\label{lma:app_trainrw}
\end{restatelemma}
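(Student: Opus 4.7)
The plan is to decompose the KL divergence into a generalization-gap term plus a log-partition-function ratio, argue that the latter is (approximately) non-positive, and then transfer the remaining expectation from the true Gibbs posterior onto the sampler output via the Wasserstein hypothesis.

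I would first unfold the KL using the explicit Gibbs forms $p(\theta \mid S_\T, \lambda) = Z_S^{-1} e^{-n_\T \hat{R}_\T(\theta; S_\T, \lambda)}$ and $p(\theta \mid \mathcal{D}_\T, \lambda) = Z_\mathcal{D}^{-1} e^{-n_\T R_\T(\theta; \mathcal{D}_\T, \lambda)}$. Taking logs and an expectation under $p(\theta \mid S_\T, \lambda)$ gives the exact identity
\[
\kld{p(\theta \mid S_\T, \lambda)}{p(\theta \mid \mathcal{D}_\T, \lambda)} = n_\T \E_{p(\theta \mid S_\T, \lambda)}[R_\T - \hat{R}_\T] + \log(Z_\mathcal{D}/Z_S).
\]
The first term on the right is already what appears in the claim (once divided by $n_\T$), so the rest of the proof must dispense with the partition-function residual and then move the expectation.

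Next I would control $\log(Z_\mathcal{D}/Z_S)$ via the Gibbs variational principle, which characterises $p(\theta \mid S_\T, \lambda)$ as the minimiser of $Q \mapsto n_\T \E_Q[\hat{R}_\T] + \kld{Q}{\pi}$ with optimal value $-\log Z_S$. Evaluating this functional at $Q = p(\theta \mid \mathcal{D}_\T, \lambda)$ and combining with the analogous exact identity for $\log Z_\mathcal{D}$ gives $\log(Z_\mathcal{D}/Z_S) \leq n_\T \E_{p(\theta \mid \mathcal{D}_\T, \lambda)}[\hat{R}_\T - R_\T]$. Since $R_\T$ is by definition the expectation of $\hat{R}_\T$ over $S_\T \sim \mathcal{D}_\T^{n_\T}$, this upper bound has mean zero in $S_\T$; consistent with the paper's ``inexact estimate'' caveat I would then drop it to conclude $\log(Z_\mathcal{D}/Z_S) \leq 0$.

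Finally, I would transfer the expectation in the generalization-gap term from $p(\theta \mid S_\T, \lambda)$ to $p(\theta_T \mid S_\T, \lambda)$. Because $\ell$ is $\gamma$-Lipschitz in $\theta$, both $R_\T(\cdot; \mathcal{D}_\T, \lambda)$ and $\hat{R}_\T(\cdot; S_\T, \lambda)$ are $\gamma$-Lipschitz, so $R_\T - \hat{R}_\T$ is at worst $2\gamma$-Lipschitz. Applying the Kantorovich--Rubinstein dual characterisation of the 1-Wasserstein distance (Eq.~\ref{eq:wassdist}) together with the hypothesis $d^1_\mathcal{W}(p(\theta_T \mid S_\T, \lambda), p(\theta \mid S_\T, \lambda)) \leq \kappa$ then yields $\E_{p(\theta \mid S_\T, \lambda)}[R_\T - \hat{R}_\T] \leq \E_{p(\theta_T \mid S_\T, \lambda)}[R_\T - \hat{R}_\T] + 2\gamma\kappa$. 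Substituting into the identity from the first step produces the claimed bound.

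The main obstacle is the middle step: a strict pointwise inequality $\log(Z_\mathcal{D}/Z_S) \leq 0$ does not hold for every fixed $S_\T$ (one can construct toy examples in which $Z_\mathcal{D} > Z_S$), and the Gibbs variational principle only supplies the bound up to the mean-zero residual $-\E_{p(\theta \mid \mathcal{D}_\T, \lambda)}[R_\T - \hat{R}_\T]$. The paper signals exactly this gap by classifying the derived bound as an approximation; a fully rigorous treatment would either propagate that residual into the conclusion or impose additional hypotheses (for instance uniform concentration of $\hat{R}_\T$ around $R_\T$ on $\Theta$) sufficient to discharge it.
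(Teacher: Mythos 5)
Your proposal follows the same route as the paper's proof: the $2\gamma\kappa$ slack comes from Kantorovich--Rubinstein duality applied to the $2\gamma$-Lipschitz function $R_\T - \hat{R}_\T$ together with the hypothesis $d^1_{\mathcal{W}}(p(\theta_T\mid S_\T,\lambda),p(\theta\mid S_\T,\lambda))\le\kappa$, and the KL divergence is related to $n_\T\,\E_{p(\theta\mid S_\T,\lambda)}[R_\T-\hat{R}_\T]$ by unfolding the two Gibbs densities. The one substantive difference is the middle step, and there your extra care has exposed a gap in the paper's own argument rather than introduced one. Writing $Z_S$ and $Z_{\mathcal{D}}$ for the normalizing constants of $p(\theta\mid S_\T,\lambda)$ and $p(\theta\mid\mathcal{D}_\T,\lambda)$, the exact identity is
\begin{align*}
    \kld{p(\theta\mid S_\T,\lambda)}{p(\theta\mid\mathcal{D}_\T,\lambda)} = n_\T\,\E_{p(\theta\mid S_\T,\lambda)}\left[R_\T(\theta;\mathcal{D}_\T,\lambda)-\hat{R}_\T(\theta;S_\T,\lambda)\right] + \log\frac{Z_{\mathcal{D}}}{Z_S}\eqcomma
\end{align*}
whereas the paper asserts this identity \emph{without} the $\log(Z_{\mathcal{D}}/Z_S)$ term, implicitly treating the two partition functions as equal. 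Your Gibbs-variational bound $\log(Z_{\mathcal{D}}/Z_S)\le n_\T\,\E_{p(\theta\mid\mathcal{D}_\T,\lambda)}[\hat{R}_\T-R_\T]$ is correct, and so is your closing caveat: that residual is mean-zero over $S_\T\sim\mathcal{D}_\T^{n_\T}$ but not pointwise nonpositive, so neither your write-up nor the paper's establishes the stated inequality for every fixed training sample. A fully rigorous version would either carry the residual into the conclusion or assert the bound only in expectation over $S_\T$ (consistent with the paper's own framing of this KL divergence as an ``inexact estimate'' of the quantity of interest). The Wasserstein-transfer and Lipschitz steps are exactly as in the paper and are fine as written.
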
}
\begin{proof}\vspace{-15pt}
Applying the definition of the 1-Wasserstein distance (Eq.~\ref{eq:wassdist}) to 
\begin{align*}
\left|\E_{p(\theta_T | S_\T, \lambda)} [R_\T(\theta; \mathcal{D}_\T, \lambda) - \hat{R}_\T(\theta; S_\T, \lambda)] - \E_{p(\theta | S_\T, \lambda)} [R_\T(\theta; \mathcal{D}_\T, \lambda) - \hat{R}_\T(\theta; S_\T, \lambda)]\right|\eqcomma
\end{align*}
as well as the $\gamma$-Lipschitz assumption for $\ell$ imply the following inequality:
\begin{align*}
    \left|\E_{p(\theta_T | S_\T, \lambda)} [R_\T(\theta; \mathcal{D}_\T, \lambda) - \hat{R}_\T(\theta; S_\T, \lambda)] - \E_{p(\theta | S_\T, \lambda)} [R_\T(\theta; \mathcal{D}_\T, \lambda) - \hat{R}_\T(\theta; S_\T, \lambda)]\right| \leq 2 \gamma \kappa\eqperiod
\end{align*}
This inequality allows us to compute a bound on the generalization error under samples from $p(\theta | S_\T, \lambda)$.
\begin{align*}
    \E_{p(\theta | S_\T, \lambda)} [R_\T(\theta; \mathcal{D}_\T, \lambda) - \hat{R}_\T(\theta; S_\T, \lambda)] \leq \E_{p(\theta_T | S_\T, \lambda)} [R_\T(\theta; \mathcal{D}_\T, \lambda) - \hat{R}_\T(\theta; S_\T, \lambda)] + 2\gamma\kappa\eqperiod
\end{align*}
We also show that $\E_{p(\theta | S_\T, \lambda)} [R(\theta; \mathcal{D}_\T, \lambda) - \hat{R}(\theta; S_\T, \lambda)]$ can be rewritten as the KL divergence we aim to bound:
\begin{align*}
    n_\T \left (\E_{p(\theta | S_T, \lambda)}[R_\T(\theta ; \mathcal{D}_\T, \lambda) - \hat{R}_\T(\theta ; S_\T, \lambda)] \right)&= \E_{p(\theta | S_\T, \lambda)} \left [ \log \frac{p(\theta | S_\T, \lambda)}{p(\theta | \mathcal{D}_\T, \lambda)} \right] \\
    &= \kld{p(\theta | S_\T, \lambda)}{p(\theta | \mathcal{D}_\T, \lambda)}\eqperiod
\end{align*}
Substituting $\frac{1}{n_\T}\kld{p(\theta | S_\T, \lambda)}{p(\theta | \mathcal{D}_\T, \lambda)}$ into the generalization error bound above yields the desired result.
\end{proof}
{\renewcommand{\thesection}{2}
\setcounter{restatecorollary}{1}
\begin{restatecorollary}
Retain the assumptions of Corollary~\ref{cor:app_klbound} and Lemma~\ref{lma:app_trainrw}. Then, minimizing 
\begin{align*}
    \sqrt{\sum_{t = 0}^{T - 1} d^2_{2, \nu_t}(p^1(\theta | S^{(t)}_\T, \lambda), p^1(\theta | S^{(t)}_\V, \lambda))}
\end{align*}
is equivalent to minimizing an upper bound on 
\begin{align*}
    \kld{p(\theta | S_\T, \lambda)}{p(\theta | \mathcal{D}_\T, \lambda)}
\end{align*}
with respect to $\lambda$.
\label{cor:app_trainbound}
\end{restatecorollary}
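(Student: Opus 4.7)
The plan is to chain together the three earlier ingredients: Lemma~\ref{lma:app_trainrw}, Theorem~\ref{thm:sqrtpbapproxdp} (applied with training subscripts), and Corollary~\ref{cor:app_klbound}, and then invoke the subadditivity of the square root to isolate the $\lambda$-dependent piece.

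First I would use Lemma~\ref{lma:app_trainrw} to convert the target KL divergence into a statement about the training generalization error of the posterior samples:
\begin{align*}
    \frac{1}{n_\T}\kld{p(\theta | S_\T, \lambda)}{p(\theta | \mathcal{D}_\T, \lambda)} \leq \E_{p(\theta_T | S_\T, \lambda)}[R_\T(\theta; \mathcal{D}_\T, \lambda) - \hat{R}_\T(\theta; S_\T, \lambda)] + 2\gamma\kappa\eqperiod
\end{align*}
Next, I would bound the expected generalization gap on the right-hand side by applying Theorem~\ref{thm:sqrtpbapproxdp} with $Q \equiv p(\theta_T | S_\T, \lambda)$ and the data-dependent prior $P \equiv p(\theta^{(\epsilon,\delta)}_T | S_\V, \lambda)$ (which is admissible since $S_\V$ is independent of $S_\T$), yielding Eq.~\ref{eq:traingenbound}, namely an upper bound of $\sqrt{\frac{1}{n_\T}\kldbig{p(\theta_T | S_\T, \lambda)}{p(\theta^{(\epsilon,\delta)}_T | S_\V, \lambda)} + C}$ for a $\lambda$-independent constant $C$.

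Then I would apply Corollary~\ref{cor:app_klbound} to the inner KL divergence, which (under the Corollary~\ref{cor:app_klbound} assumptions, which the statement retains) bounds it by a $\lambda$-independent constant $B$ plus $\tfrac{n_\V \eta}{4}\sum_{t=0}^{T-1} d^2_{2,\nu_t}(p^1(\theta|S^{(t)}_\T,\lambda), p^1(\theta|S^{(t)}_\V,\lambda))$, using the identification of the Fisher distance with the expected squared gradient gap already noted in the main text. Combining the previous three displays gives an upper bound on $\tfrac{1}{n_\T}\kld{p(\theta|S_\T,\lambda)}{p(\theta|\mathcal{D}_\T,\lambda)}$ of the form $\sqrt{A' + D \sum_t d^2_{2,\nu_t}(\cdot,\cdot)} + \kappa'$, where $A'$ and $\kappa'$ are constants independent of $\lambda$ and $D>0$.

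Finally, I would apply the elementary subadditivity inequality $\sqrt{a+b} \leq \sqrt{a} + \sqrt{b}$ for $a,b\geq 0$ to split the square root into a $\lambda$-independent term $\sqrt{A'}$ plus $\sqrt{D}\,\sqrt{\sum_t d^2_{2,\nu_t}(p^1(\theta|S^{(t)}_\T,\lambda), p^1(\theta|S^{(t)}_\V,\lambda))}$. Since $\sqrt{D}$ is a positive constant independent of $\lambda$, minimizing the Fisher-distance sum (equivalently, its square root) minimizes this overall upper bound on the target KL divergence, which is exactly the claim. There is no real obstacle here beyond bookkeeping; the only mildly delicate step is verifying that Theorem~\ref{thm:sqrtpbapproxdp} applies with the training subscripts and the validation-derived DP prior, but this is immediate because $P$ depends only on $S_\V$, which is drawn independently of $S_\T$, so $P$ is trivially $(\epsilon,\delta)$-DP with respect to $S_\T$.
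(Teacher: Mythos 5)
Your proposal is correct and follows essentially the same route as the paper's proof: Lemma~\ref{lma:app_trainrw}, then the PAC-Bayes bound of Theorem~\ref{thm:sqrtpbapproxdp} (Eq.~\ref{eq:traingenbound}) with $Q \equiv p(\theta_T | S_\T, \lambda)$ and $P \equiv p(\theta^{(\epsilon,\delta)}_T | S_\V, \lambda)$, then Corollary~\ref{cor:app_klbound} together with $\sqrt{a+b} \leq \sqrt{a}+\sqrt{b}$ to isolate the $\lambda$-dependent Fisher-distance sum. Your remark that the validation-derived prior is admissible because $S_\V$ is independent of $S_\T$ is exactly the justification the paper relies on.
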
}
\begin{proof}
\begin{align*}
    \kld{p(\theta | S_\T, \lambda)}{p(\theta | \mathcal{D}_\T, \lambda)} &\leq n_\T \E_{p(\theta_T | S_\T, \lambda)} \left [ R_\T(\theta; \mathcal{D}_\T, \lambda) - \hat{R}_\T(\theta; S_\T, \lambda)\right] + 2 n_\T \gamma \kappa \\
    &\leq  n_\T \left \{\frac{1}{2n_\V} \kld{p(\theta_T | S_\T, \lambda)}{p(\theta^{(\epsilon, \delta)}_T | S_\V, \lambda)} + A \right \}^{1/2} + 2 n_\T \gamma \kappa  \\
    &\leq D(\eta, n_\V, n_\T) \left \{\sum_{t = 0}^{T - 1} d^2_{2, \nu_t}(p^1(\theta | S^{(t)}_\T, \lambda), p^1(\theta | S^{(t)}_\V, \lambda)) \right\}^{1/2} + n_\T \left(A^{1/2} + B + 2 \gamma \kappa\right)\eqperiod
\end{align*}
The first inequality follows from Lemma~\ref{lma:app_trainrw}, the second from Eq.~\ref{eq:pb_bound}, and the third from applying Corollary~\ref{cor:app_klbound} as well as the triangle inequality. 
\end{proof}

\subsection{Observations} \label{sec:obsappendix}
Here we summarize various characteristics of the objective we have derived; we use these observations to motivate and propose improvements to methods developed in previously published work for improving generalization.

Even if the validation risk is $\gamma$-Lipschitz, we may still wish to clip the validation gradient in our objective to have norm less than $\gamma$. Additional gradient clipping reduces the privacy parameters, $\epsilon$ and $\delta$, of Algorithm~\ref{alg:dpsgld_onechain}, which potentially tightens the bound that we minimize in Eq.~\ref{eq:ho-stability} (Theorems~\ref{thm:sgld_guarantees} and \ref{thm:sqrtpbapproxdp}). Though clipping the \emph{training} gradient does not tighten the PAC-Bayes bound, \citet{menon2020can} prove that gradient clipping can accelerate training convergence when optimizing deep neural networks.  We expand a summand of the regularizer in Eq.~\ref{eq:ho-stability} when both the training and validation gradients are clipped to have unit norm (i.e., $\gamma = 1$).
\begin{multline}
    \norm{\clip_1\left (\nabla_\theta \hat{R}_\T(\theta_t; S^{(t)}_\T, \lambda) \right) - \clip_1\left (\nabla_\theta \hat{R}_\V(\theta_t; S^{(t)}_\V, \lambda) \right)}_2^2 = \\2 -  2\clip_1\left (\nabla_\theta \hat{R}_\T(\theta_t; S^{(t)}_\T, \lambda) \right)^T \clip_1\left (\nabla_\theta \hat{R}_\V(\theta_t; S^{(t)}_\V, \lambda) \right)  \label{eq:cossim}\eqperiod
\end{multline}
When $\norm{\nabla_\theta\hat{R}_\T(\theta_t; S_\T, \lambda)}_2 \geq 1$ and $\norm{\nabla_\theta\hat{R}_\V(\theta_t; S_\V, \lambda)}_2 \geq 1$, the second term of the RHS of Eq.~\ref{eq:cossim} is equal to a familiar measure of vector agreement: the cosine similarity. We summarize this finding in Observation~\ref{obs:normclip}.
\begin{observation}
Assuming $\norm{\nabla_\theta\hat{R}_\T(\theta_t; S_\T, \lambda)}_2 \geq 1$ and $\norm{\nabla_\theta\hat{R}_\V(\theta_t; S_\V, \lambda)}_2 \geq 1$, minimizing the regularizer that results from applying $\clip_1(\cdot)$ to both gradients is equivalent to maximizing the cosine similarity between the training and validation gradients.
\label{obs:normclip}
\end{observation}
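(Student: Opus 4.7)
The plan is to reduce this to a direct algebraic identity, since the observation essentially just asks us to rewrite Eq.~\ref{eq:cossim} in terms of cosine similarity under the stated norm assumptions. First, I would recall the definition of $\clip_1(\cdot)$: whenever $\norm{g}_2 \geq 1$, the clipping operation sends $g$ to the unit vector $g/\norm{g}_2$, so $\norm{\clip_1(g)}_2 = 1$. Applying this to both $g_\T \defeq \nabla_\theta \hat{R}_\T(\theta_t; S^{(t)}_\T, \lambda)$ and $g_\V \defeq \nabla_\theta \hat{R}_\V(\theta_t; S^{(t)}_\V, \lambda)$, both clipped gradients are unit vectors.

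Next, I would expand the squared 2-norm in Eq.~\ref{eq:cossim} and use bilinearity of the inner product:
\begin{align*}
\norm{\clip_1(g_\T) - \clip_1(g_\V)}_2^2 &= \norm{\clip_1(g_\T)}_2^2 + \norm{\clip_1(g_\V)}_2^2 - 2\, \clip_1(g_\T)^T \clip_1(g_\V) \\
&= 2 - 2\, \frac{g_\T^T g_\V}{\norm{g_\T}_2 \norm{g_\V}_2} \\
&= 2 - 2\cossim(g_\T, g_\V)\eqperiod
\end{align*}
This recovers the right-hand side of Eq.~\ref{eq:cossim} exactly under the norm assumptions, and identifies the second term as the cosine similarity rather than merely a clipped inner product.

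Finally, I would sum over $t = 0, \ldots, T-1$ and take the square root to match the form of the regularizer in Eq.~\ref{eq:ho-stability}. Since $\sqrt{\cdot}$ is strictly monotone on $\mathbb{R}^+$ and the constant $2T$ is independent of $\lambda$, minimizing
\[
\sqrt{\sum_{t=0}^{T-1} \left (2 - 2\cossim(g_\T^{(t)}, g_\V^{(t)}) \right)}
\]
with respect to $\lambda$ is equivalent to maximizing $\sum_{t=0}^{T-1} \cossim(g_\T^{(t)}, g_\V^{(t)})$, which is the claim.

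There is no real obstacle here: the result is a one-line algebraic identity followed by a monotonicity argument. The only subtlety worth stating explicitly is the role of the norm-lower-bound assumption, since without $\norm{g_{(\cdot)}}_2 \geq 1$ the clipping map would be the identity and the bracketed identity $\norm{\clip_1(g_\T)}_2^2 + \norm{\clip_1(g_\V)}_2^2 = 2$ would fail, preventing the clean reduction to cosine similarity.
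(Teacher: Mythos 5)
Your proof is correct and follows essentially the same route as the paper: expand the squared norm of the difference of the clipped (hence unit-norm) gradients to obtain $2 - 2\cossim(g_\T, g_\V)$, exactly the identity in Eq.~\ref{eq:cossim}. The only addition beyond the paper's sketch is your explicit monotonicity argument for passing from a single summand to the full square-rooted regularizer, which is a welcome (if minor) tightening of the claim.
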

In Section~\ref{sec:relobsappendix}, we show that Observation~\ref{obs:normclip} can be used to motivate a recently proposed method for improving generalization in meta-learning \cite{guiroy2019towards}.

Next, we provide an additional interpretation of the regularizer in Eq.~\ref{eq:ho-stability} when the privacy parameters, $\epsilon$ and $\delta$, of Algorithm~\ref{alg:dpsgld_onechain} approach 0 (i.e., Algorithm~\ref{alg:dpsgld_onechain} samples from a data-independent distribution). Recall that in Appendix~\ref{sec:sgldappendix}, we showed that the privacy parameters of Algorithm~\ref{alg:dpsgld_onechain} are controlled by our choices for the step size $\eta_t$ and clipped norm parameter $\gamma$. We then study the implications of choosing these parameters so that $\epsilon$ and $\delta$ for Algorithm~\ref{alg:dpsgld_onechain} can be chosen to approach $0$.

In the limit of infinitesimal step size, Theorem~\ref{thm:sgld_guarantees} establishes that we are free to choose infinitesimally small privacy parameters for Algorithm~\ref{alg:dpsgld_onechain}. To simplify our re-derivation of the regularizer in this setting, we assume that the step sizes of the training and validation Langevin samplers are constant, but not necessarily the same (i.e., $\eta^\V_t = \eta^\V$ and $\eta^\T_t = \eta^\T$ for all $t$). 

Without any additional changes to the definition of the PAC-Bayes prior, the value of the bound we prove in Corollary~\ref{cor:app_klbound} explodes when $\eta^\V$ approaches $0$. We restate the summands of the bound in terms of the KL divergences derived in the proof of Corollary~\ref{cor:app_klbound} to understand why:
\begin{align}
\sum_{t = 1}^T \E_{\nu_{t - 1}} \left [ \kldbig{\mathcal{N}\left (\mu_{t}^\T, \frac{2\eta^\T}{n_\T}\mathbb{I}_m \right)}{\mathcal{N} \biggl ( \underbrace{\mu_{t}^\V}_{\to \theta_{t - 1}}, \underbrace{\frac{2\eta^{\V}}{n_\V}}_{\to 0}\mathbb{I}_m\biggr)} \right] \label{eq:deltakl}\eqperiod
\end{align}
As $\eta^\V \to 0$, the summands in Eq.~\ref{eq:deltakl} converge to the KL divergence between a Gaussian and a $\delta$-function (i.e., Eq.~\ref{eq:deltakl} approaches $\infty$).

We can address this pathology while still selecting arbitrarily small $\eta^\V$ by modifying the $\tau$ parameter of the Gibbs posterior (Eq.~\ref{eq:posteriordef}) of the validation set. Though we had previously assumed for the sake of notational simplicity that $\tau = n_\V$ for the validation Gibbs posterior, we are, in fact, free to vary this parameter of the Gibbs posterior. By choosing $\tau = n_\V \eta^\V / \eta^T$, the distribution of $\theta_T$ now approaches the (data-independent) distribution induced by $T$ steps of a Gaussian random walk initialized with $\theta_0 \sim P_0$. Recomputing the sum of KL divergences in Eq.~\ref{eq:deltakl} results in a finite expression:
\begin{align*}
\sum_{t = 1}^T \E_{\nu_{t - 1}} \left [ \kldbig{\mathcal{N}\left (\mu_{t}^\T, \frac{2\eta^\T}{n_\T}\mathbb{I}_m \right)}{\mathcal{N} \biggl ( \underbrace{\mu_t^\V}_{\to \theta_{t - 1}}, \frac{2\eta^{\T}}{n_\V}\mathbb{I}_m\biggr)} \right]\eqperiod
\end{align*}

Completing the proof of Corollary~\ref{cor:app_klbound} with this choice of validation posterior, we recover the following limiting expression for the regularizer in Eq.~\ref{eq:ho-stability}:
\begin{align}
    \sqrt{\frac{\eta^\T}{4} \sum_{t = 0}^{T - 1}\E_{\nu_t} \left [ \norm{\nabla_\theta \hat{R}_\T(\theta_t; S^{(t)}_\T, \lambda)}_2^2 \right]} \label{eq:limitreg}\eqperiod
\end{align}
To better understand this expression, we approximate $\nu_t$, the distribution of the $t$-th Langevin iterate, with $p(\theta | S^{(t)}_\T, \lambda)$. We rewrite (and re-scale) the $t$-th summand of Eq.~\ref{eq:limitreg} as follows. Recall that $n_\T \hat{R}(\theta; S^{(t)}_\T, \lambda) = - \log p(\theta | S^{(t)}_\T, \lambda)$.
\begin{align}
    \E_{p(\theta | S^{(t)}_\T, \lambda)} \left [ \norm{\nabla_\theta \log p(\theta | S^{(t)}_\T, \lambda)}_2^2 \right] &= \E_{p(\theta | S^{(t)}_\T, \lambda)} \left [ \text{tr} \left (\nabla_\theta \log p(\theta | S^{(t)}_\T, \lambda) \nabla_\theta \log p(\theta | S^{(t)}_\T, \lambda)^T \right) \right] 
\label{eq:not_quite_fisher}\eqperiod
\end{align}
The expression we derive above is reminiscent of, although not identical to, the Fisher information matrix. A method for re-expressing the Fisher information thus helps to elucidate what minimizing Eq.~\ref{eq:not_quite_fisher} accomplishes. 
\begin{align*}
        \E_{p(\theta | S^{(t)}_\T, \lambda)} \left [\nabla^2_\theta \log p(\theta | S^{(t)}_\T, \lambda) \right] =& \E_{p(\theta | S^{(t)}_\T, \lambda)} \left [ \frac{\nabla^2_\theta p(\theta | S^{(t)}_\T, \lambda)}{p(\theta | S^{(t)}_\T, \lambda)} - \frac{\nabla_\theta p(\theta | S^{(t)}_\T, \lambda) \nabla_\theta p(\theta | S^{(t)}_\T, \lambda)^T}{p(\theta | S^{(t)}_\T, \lambda)^2} \right] \\
        =& \E_{p(\theta | S^{(t)}_\T, \lambda)} \left [ \frac{\nabla^2_\theta p(\theta | S^{(t)}_\T, \lambda)}{p(\theta | S^{(t)}_\T, \lambda)} \right] \\&- \E_{p(\theta | S^{(t)}_\T, \lambda)} \left [ \nabla_\theta \log p(\theta | S^{(t)}_\T, \lambda) \nabla_\theta \log p(\theta | S^{(t)}_\T, \lambda)^T \right] \\
        =& - \E_{p(\theta | S^{(t)}_\T, \lambda)} \left [ \nabla_\theta \log p(\theta | S^{(t)}_\T, \lambda) \nabla_\theta \log p(\theta | S^{(t)}_\T, \lambda)^T \right]\eqperiod
\end{align*}
The last step follows from a standard regularity condition on $p(\theta | S, \lambda)$: we assume that we are able to interchange the gradient operator and the integral. Applying the trace to both sides of this expression and combining with Eq.~\ref{eq:not_quite_fisher},
\begin{align*}
    \E_{p(\theta | S^{(t)}_\T, \lambda)} \left [ \norm{\nabla_\theta \log p(\theta | S^{(t)}_\T, \lambda)}_2^2 \right] &= \E_{p(\theta | S^{(t)}_\T, \lambda)} \left [ \text{tr} \left (\nabla_\theta \log p(\theta | S^{(t)}_\T, \lambda) \nabla_\theta \log p(\theta | S^{(t)}_\T, \lambda)^T \right) \right]  \\
    &= - \E_{p(\theta | S^{(t)}_\T, \lambda)} \left [\text{tr} \left (\nabla^2_\theta \log p(\theta | S^{(t)}_\T, \lambda) \right) \right] \\
    &\propto \E_{p(\theta | S^{(t)}_\T, \lambda)} \left [\text{tr} \left (\nabla^2_\theta \hat{R}_\T(\theta; S^{(t)}_\T, \lambda) \right) \right]\eqperiod
\end{align*}
Minimizing the regularizer in Eq.~\ref{eq:ho-stability} is thus equivalent to minimizing the sum of the training risk Hessian's eigenvalues evaluated at $\theta$ sampled from the training posterior. Assuming that the parameters drawn from the posterior are near the minimizer of the (assumed to be) locally strongly convex training risk, the sampled Hessian will always be positive definite. Minimizing the sum of the Hessian's eigenvalues is then equivalent to minimizing the curvature of the training optima.

We can derive a similar result regarding the regularizer in Eq.~\ref{eq:ho-stability} when the clipped norm parameter $\gamma$ approaches 0. Solving for $\gamma$ in the bound of Theorem~\ref{thm:sgld_guarantees} demonstrates that Algorithm~\ref{alg:dpsgld_onechain} is $(\epsilon, \delta)$-DP if:
\begin{align*}
    \gamma \leq \sqrt{\frac{\tau^2 \epsilon^2}{s \eta_t \log \frac{1.25}{\delta}}}\eqperiod
\end{align*}
Given some choice of step size, this result implies that $\gamma \to 0$ guarantees that we can select privacy parameters for Algorithm~\ref{alg:dpsgld_onechain} that approach 0 as well. Though we omit the full re-derivation for brevity, we can use our typical choice of $\tau = n_\V$ for the validation Gibbs posterior and repeat the steps shown above to derive a limiting expression for the regularizer that is identical to Eq.~\ref{eq:limitreg}. 
\begin{observation}
When we modify the parameters of Algorithm~\ref{alg:dpsgld_onechain} so that $\epsilon$ and $\delta$ approach $0$, minimizing the validation data-independent regularizer in Eq.~\ref{eq:ho-stability} is equivalent to minimizing the curvature of the training risk near its optima. Equivalently, minimizing Eq.~\ref{eq:ho-stability} promotes the selection of hyperparameters that lead to ``flat'' training optima.
\label{obs:app_trhessian}
\end{observation}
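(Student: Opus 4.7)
The plan is to propagate the limit $(\epsilon,\delta)\to 0$ through the chain of PAC-Bayes bounds and show that the $\lambda$-dependent portion of the regularizer collapses to a sum of training-Hessian traces. By Theorem~\ref{thm:sgld_guarantees}, $(\epsilon,\delta)\to 0$ can be forced via either $\eta^\V \to 0$ or the clip level $\gamma \to 0$; I would work the first case in detail since the second is analogous after an appropriate rescaling of the validation Gibbs temperature $\tau$.

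Starting from the per-iteration KL bound of Corollary~\ref{cor:app_klgenbound}, the naive $\eta^\V\to 0$ limit is degenerate: the validation conditional $\mathcal{N}(\mu_t^\V,\,2\eta^\V/n_\V\cdot\mathbb{I}_m)$ collapses to a point mass and the KL summands diverge. I would resolve this by rescaling $\tau = n_\V\eta^\V/\eta^\T$ (a choice permitted by the Gibbs-posterior definition in Eq.~\ref{eq:posteriordef}), so that the validation conditional retains variance $2\eta^\T/n_\V$. Since $\mu_t^\V \to \theta_{t-1}$ (the validation-gradient mean-shift is scaled by $\eta^\V$), the Gaussian KL formula of Lemma~\ref{lma:gaussian} leaves only the training-gradient mean-shift, giving $(n_\V\eta^\T/4)\,\E_{\nu_{t-1}}[\|\nabla_\theta \hat{R}_\T(\theta_{t-1};S^{(t-1)}_\T,\lambda)\|_2^2]$ per summand. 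Substituting this back into Eq.~\ref{eq:pb_bound} and dropping $\lambda$-independent constants recovers $\sqrt{(\eta^\T/4)\sum_t \E_{\nu_t}[\|\nabla_\theta \hat{R}_\T(\theta_t;S^{(t)}_\T,\lambda)\|_2^2]}$ as in Eq.~\ref{eq:limitreg}.

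Next I would identify each summand with a trace of the training Hessian via the Fisher-information identity. Approximating $\nu_t$ by the Gibbs posterior $p(\theta|S^{(t)}_\T,\lambda)$ and using $\nabla_\theta \log p = -n_\T\,\nabla_\theta \hat{R}_\T$, the expected squared score equals $\E_p[\tr(\nabla_\theta \log p\,(\nabla_\theta \log p)^T)]$, which by the standard identity (obtained by differentiating $\int p\,d\theta = 1$ twice under the usual interchange-of-gradient-and-integral regularity condition) equals $-\E_p[\tr(\nabla^2_\theta \log p)]$, hence is proportional to $\E_p[\tr(\nabla^2_\theta \hat{R}_\T(\theta;S^{(t)}_\T,\lambda))]$. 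Under local strong convexity near the sampled optima, the Hessian is positive definite, so its trace equals the sum of its (positive) eigenvalues, a standard proxy for sharpness, and minimizing it promotes flat training optima.

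The main obstacle is the $\eta^\V\to 0$ degeneracy in the second step and its rescue by rescaling $\tau$: one must verify that the rescaled validation Gibbs posterior remains a legitimate PAC-Bayes prior in Theorem~\ref{thm:sqrtpbapproxdp}, and that the $\lambda$-independent term $m[\eta^\V/\eta^\T - \log(\eta^\V/\eta^\T) - 1]$ in the Gaussian KL (which diverges in the naive limit) is absorbed into the constant after the rescaling. The substitution $\nu_t \approx p(\theta|S^{(t)}_\T,\lambda)$ is heuristic rather than rigorous, since SGLD's marginal distribution only converges to the Gibbs posterior asymptotically, so the final identification is interpretive. The remaining algebra---Gaussian KL, Fisher-information identity, and positivity of the Hessian---is routine.
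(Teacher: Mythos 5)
Your proposal is correct and follows essentially the same route as the paper's own derivation in Appendix~\ref{sec:obsappendix}: you identify the $\eta^\V \to 0$ degeneracy, rescue it with the same Gibbs-temperature rescaling $\tau = n_\V \eta^\V/\eta^\T$ to arrive at Eq.~\ref{eq:limitreg}, and then apply the same Fisher-information identity (with the same heuristic substitution $\nu_t \approx p(\theta \mid S^{(t)}_\T, \lambda)$ and the same local-strong-convexity argument for positivity of the Hessian) to identify the summands with the trace of the training Hessian. The coefficient bookkeeping ($n_\V\eta^\T/4$ per KL summand, reducing to $\eta^\T/4$ after the $1/n_\V$ factor in Eq.~\ref{eq:pb_bound}) and the remark that the $\gamma \to 0$ case is analogous both match the paper.
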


We last consider hyperparameter optimization problems for which $\mathcal{D}_\T = \mathcal{D}_\V$. For these cases, the use of Algorithm~\ref{alg:dpsgld_onechain} to select the prior is not necessary because we can derive a bound on the expected (training or validation) risk by modifying the substitutions we make in the main text in Theorem~\ref{thm:sqrtpbapproxdp} (i.e., we fill in the missing subscripts with $\T$ and $P \equiv p(\theta | S_\V, \lambda)$). The validation posterior, $p(\theta | S_\V, \lambda)$, yields a valid PAC-Bayes bound because the distribution is independent of the data set $S_\T$ used to compute the empirical risk. 

These substitutions recover the bound minimized by \citet{ambroladze2007tighter}. We can generalize their result by applying the Fisher distance bound from Lemma~\ref{lma:klchainrule}. As we allude to in Section~\ref{sec:relobsappendix}, minimizing this upper bound is feasible even when the PAC-Bayes prior and posterior are only implicitly defined by an iterative method. Observation~\ref{obs:app_ambroupdate} states the objective that we have derived under this simpler data-generating distribution assumption.
\begin{observation}
When $\mathcal{D}_\T = \mathcal{D}_\V$, the following objective minimizes a PAC-Bayes bound on the expected risk.
\begin{align*}
    \min_\lambda \E_{p(\theta_T | S_\T, \lambda)} \left [ \hat{R}_\T(\theta; S_\T, \lambda) \right] + \zeta \left \{ \sum_{t = 0}^{T - 1} d^2_{\nu_t}(p^1(\theta | S^{(t)}_\T, \lambda), p^1(\theta | S^{(t)}_\V, \lambda)) \right \}^{1/2}\eqperiod
\end{align*}
\label{obs:app_ambroupdate}
\end{observation}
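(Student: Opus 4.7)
The strategy is to reduce the claim to a direct application of the data-dependent PAC-Bayes bound from Theorem~\ref{thm:sqrtpbapproxdp}, combined with the KL-chain bound of Corollary~\ref{cor:app_klbound}, exploiting the fact that when $\mathcal{D}_\T = \mathcal{D}_\V$ the validation posterior is already $S_\T$-independent and so no $(\epsilon,\delta)$-DP prior construction is required.

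\textbf{Step 1: Choosing the prior.} The first thing I would observe is that when $\mathcal{D}_\T = \mathcal{D}_\V$, the validation set $S_\V \sim \mathcal{D}_\T^{n_\V}$ is, by the product-measure assumption on $(S_\T, S_\V)$, statistically independent of $S_\T$. Since $P_0$ is also chosen independently of $S_\T$ and the Langevin transitions defining $p(\theta_T \mid S_\V, \lambda)$ depend on the data only through $S_\V$, the entire distribution $p(\theta_T \mid S_\V, \lambda)$ is independent of $S_\T$. In particular, drawing a sample from it is trivially $(\epsilon,\delta)$-DP with $\epsilon=\delta=0$ with respect to $S_\T$, and the hypothesis of Theorem~\ref{thm:sqrtpbapproxdp} is satisfied without needing Algorithm~\ref{alg:dpsgld_onechain}.

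\textbf{Step 2: Invoking the PAC-Bayes bound.} Next I would apply Theorem~\ref{thm:sqrtpbapproxdp} with all missing subscripts set to $\T$, taking $Q \equiv p(\theta_T \mid S_\T, \lambda)$ and $P \equiv p(\theta_T \mid S_\V, \lambda)$. With $\epsilon,\delta \to 0$, the terms $c_1\epsilon^2 + c_2\sqrt{\delta/\epsilon}$ in the square-root vanish, yielding
\begin{align*}
    \E_{p(\theta_T\mid S_\T,\lambda)}[R_\T(\theta;\mathcal{D}_\T,\lambda)] \leq \E_{p(\theta_T\mid S_\T,\lambda)}[\hat{R}_\T(\theta;S_\T,\lambda)] + \Bigl\{\tfrac{1}{n_\T}\kld{p(\theta_T\mid S_\T,\lambda)}{p(\theta_T\mid S_\V,\lambda)} + A'\Bigr\}^{1/2},
\end{align*}
where $A'$ is a constant independent of $\lambda$. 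This is essentially the \citet{ambroladze2007tighter} bound adapted to a Langevin posterior instead of a Gaussian one.

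\textbf{Step 3: Unrolling the KL divergence.} I would then bound the KL divergence above by Corollary~\ref{cor:app_klbound} (or its sharper version Corollary~\ref{cor:app_klgenbound} if step sizes or Lipschitz constants differ), which upper-bounds the $T$-step KL by a sum over $t$ of Fisher distances
\begin{align*}
  \tfrac{n_\V \eta}{4}\sum_{t=0}^{T-1} d^2_{2,\nu_t}\bigl(p^1(\theta\mid S^{(t)}_\T,\lambda),\, p^1(\theta\mid S^{(t)}_\V,\lambda)\bigr),
\end{align*}
modulo an additive constant $B$ that does not depend on $\lambda$. Substituting this into the PAC-Bayes bound, applying the subadditivity of $\sqrt{\cdot}$ (the triangle inequality $\sqrt{x+y}\leq\sqrt{x}+\sqrt{y}$) exactly as in the derivation of Eq.~\ref{eq:ho-stability_cat1}, and absorbing all $\lambda$-independent constants into a single trade-off scalar $\zeta$ would produce the stated objective.

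\textbf{Main obstacle.} The only nontrivial step is the independence argument in Step~1: one must be careful that not just $S_\V$ but the \emph{entire sampling chain} defining $p(\theta_T \mid S_\V, \lambda)$ is independent of $S_\T$, including the injected Gaussian noise and the initialization $P_0$. Once this is established, the remainder is a mechanical substitution into results already proved in Appendix~\ref{sec:resultappendix}. A secondary cosmetic issue is that Theorem~\ref{thm:sqrtpbapproxdp} is stated for $\epsilon \in (0,1/2]$ and $\delta \in (0,\epsilon)$, so a strictly rigorous derivation should either take a limiting argument or cite the classical ($S$-independent prior) PAC-Bayes bound directly—the constant $A'$ is not uniform as $\epsilon,\delta \to 0$. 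Either route recovers the Observation.
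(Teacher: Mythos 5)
Your proposal is correct and follows essentially the same route as the paper: recognize that when $\mathcal{D}_\T = \mathcal{D}_\V$ the validation posterior $p(\theta_T \mid S_\V, \lambda)$ is independent of $S_\T$ and hence a valid (data-independent) PAC-Bayes prior for the training risk, apply Theorem~\ref{thm:sqrtpbapproxdp} with the subscripts filled in by $\T$ and $P \equiv p(\theta_T \mid S_\V, \lambda)$, and then unroll the KL divergence via Lemma~\ref{lma:klchainrule}/Corollary~\ref{cor:app_klbound} into the sum of Fisher distances. Your caveat about the $\epsilon,\delta \to 0$ limit is well taken and is exactly how the paper resolves it—by noting that no differentially private prior construction is needed at all, so the classical bound applies directly.
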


\section{Algorithm} \label{sec:algappendix}
We investigate the effect of the choice of $K$ on the accuracy of the estimated hyperparameter gradient for the regularizer. While \citet{shaban2018truncated} show that optimization with gradients estimated using $K$-truncated-RMD can converge to a stationary point, verifying the conditions under which their result holds is impractical for the problems we consider. We must instead empirically evaluate the effect of small choices of $K$ on optimization of the regularizer.

To demonstrate that choosing the truncation $K = 1$ or $K = 0$ for the regularizer gradient is reasonable, in Figure~\ref{fig:k_comp}, we rerun one of the experiments we describe in Section~\ref{sec:regpenalty}: the optimization of per-parameter weight decays for a linear classifier fit to the MNIST data set.

To study the effect of choosing $K$ larger than 1, we use the Higher auto-differentiation package, which enables the calculation of higher-order gradients over training optimizations in PyTorch \cite{grefenstette2019generalized}. Computing the exact $K$-truncated-RMD gradient for any $K > 1$ using Higher would incur significant space and time overhead. We instead follow the approach of \citet{Metz19}, who compute a truncated-RMD gradient for each term in their hyperparameter optimization objective by partitioning the inner optimization $\theta_1,\dots,\theta_T$ into windows of size $W$. For each window, e.g. $\theta_{1},\dots,\theta_W$, the first parameter iterate is assumed to be constant. Using this approximation, RMD applied to a Fisher distance evaluated at, for instance, $\theta_i$ between $\theta_1$ and $\theta_W$ will be truncated after $i - 1$ steps of backpropagation. Generalizing this example, the gradient of the $i$-th term of a window is computed with $(i-1)$-truncated-RMD. Though there is no precise correspondence between using this windowed gradient approximation and applying $K$-truncated-RMD to every term in the regularizer for some fixed $K$, we observe that for a window of size $W$, the RMD algorithm is truncated, on average, after $(W-1)/2$ steps. In Figure~\ref{fig:k_comp}, we denote the choice of window size $w$ as $W = w$. 

Figure~\ref{fig:regularizer} shows how different choices of $K$ and $W$ affect optimization of the regularization term. In this experiment, to isolate the effect of the $K$-truncated-RMD gradient approximation, we optimized the regularization term alone. We then plot the square root of the regularization term (i.e., our estimate of the Fisher distance) computed over the $1000$ steps of inner optimization that take place between each hyperparameter gradient step. While the true hyperparameter gradient, denoted by $K = \infty$ (calculated in practice by setting $W$ to be the length of the inner optimization), appears to improve optimization of the regularizer at later iterations, more practical choices of $W$ and $K$ yield loss curves that lie on top of each other. Because the larger choices of $W$ appear to lead to slower optimization, the decrease in the estimated Fisher distance for $K = \infty$ at later iterations only offsets the initially slow optimization. Figure~\ref{fig:objective}, which plots the running value of the regularizer over the course of optimization (i.e., the quantity that we show to be equivalent to a PAC-Bayes bound), shows that the final value of the regularization term in Eq.~\ref{eq:ho-stability} appears to have minimal dependence on the gradient approximation used.

\begin{figure}[ht!]
     \centering
     \begin{subfigure}[t]{0.49\linewidth}
         \centering
         \includegraphics[width=\linewidth]{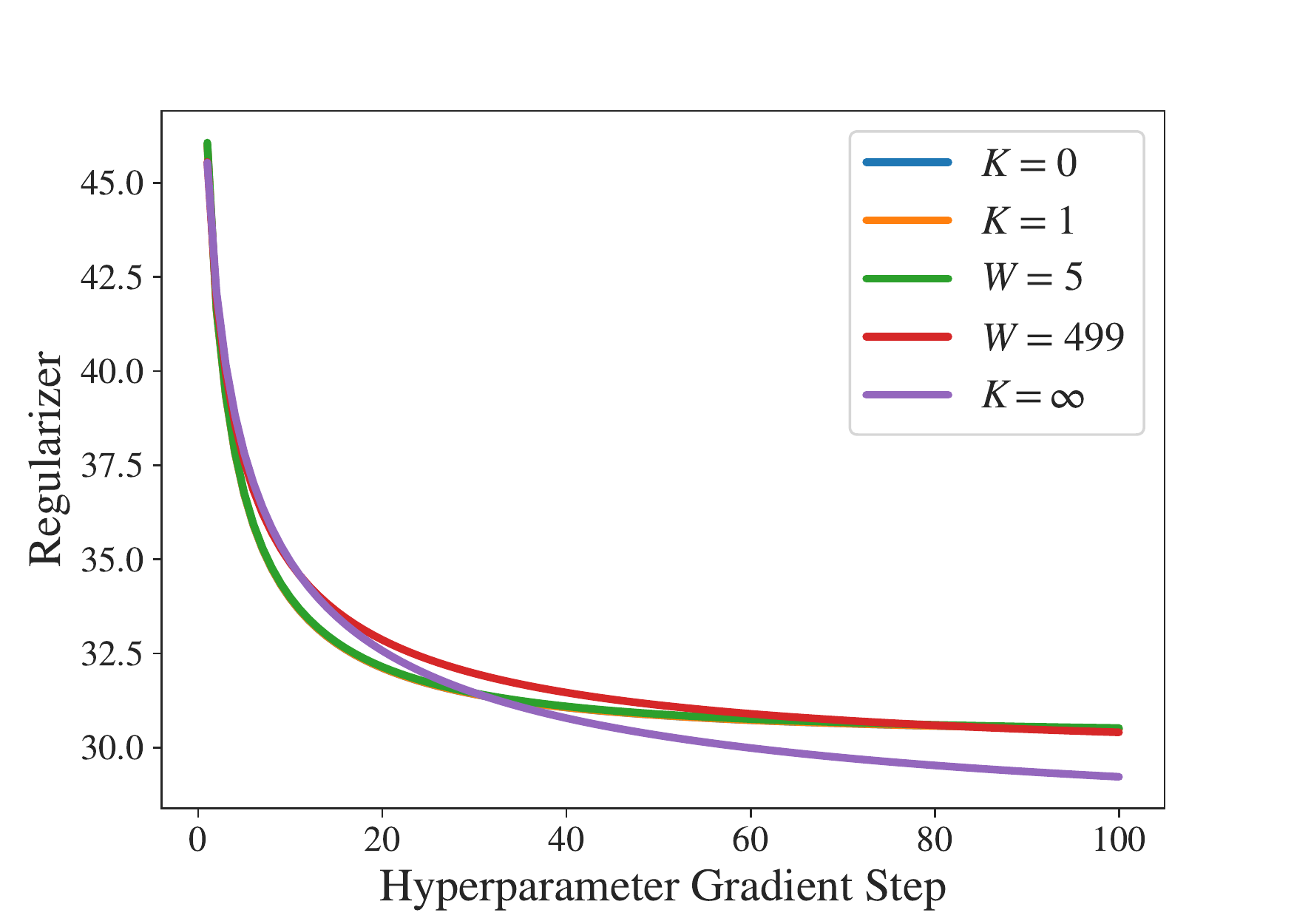}
         \caption{Regularizer value computed at each outer step.}
         \label{fig:regularizer}
     \end{subfigure}
     \hfill
     \begin{subfigure}[t]{0.49\linewidth}
         \centering
         \includegraphics[width=\linewidth]{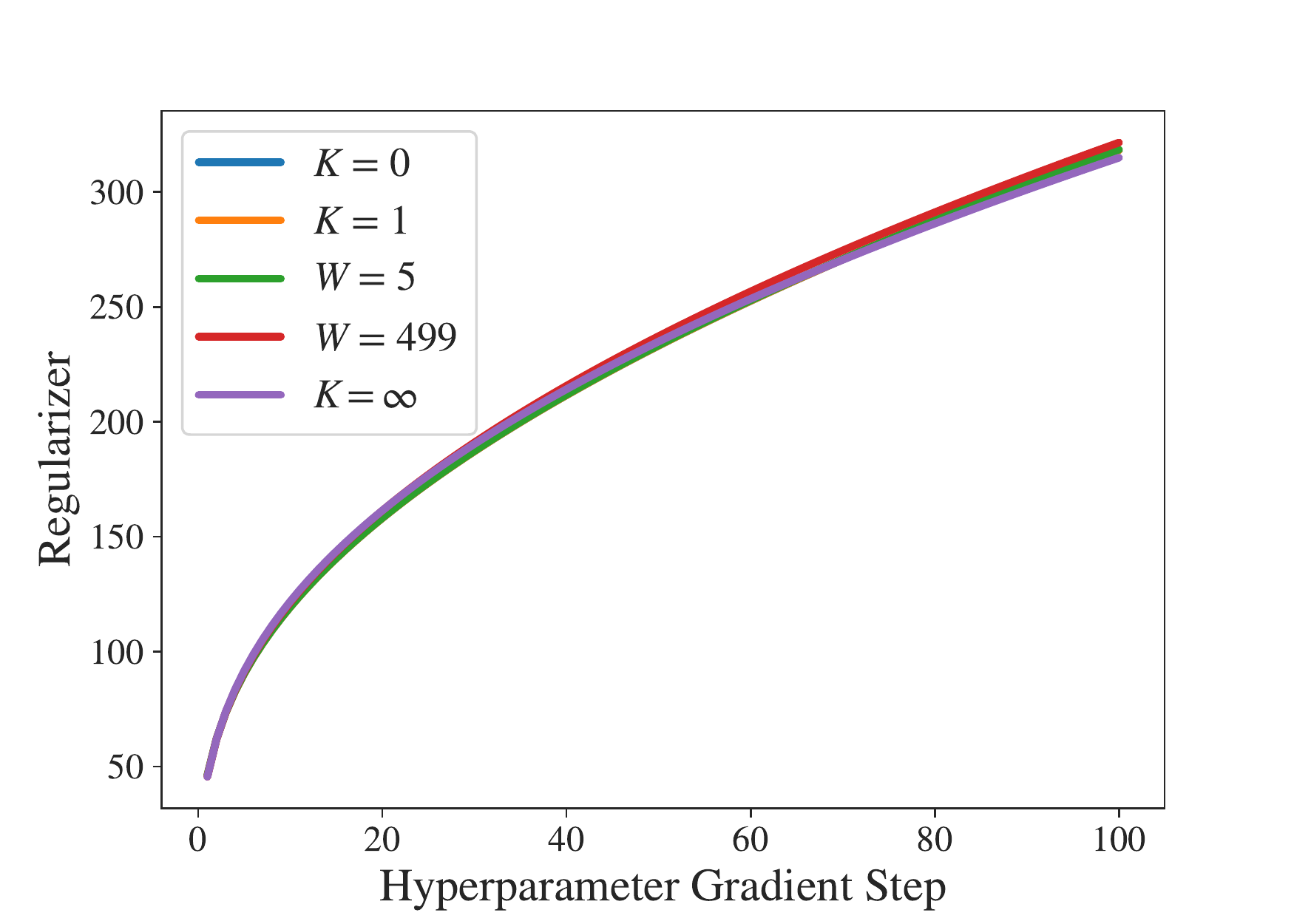}
         \caption{Running value of the regularizer over optimization.}
         \label{fig:objective}
     \end{subfigure}
     \caption{Comparing regularizer minimization using different gradient approximations.}
     \label{fig:k_comp}
\end{figure}

In Algorithm~\ref{alg:ho_stability_alg}, we provide complete pseudocode for our algorithm to optimize Eq.~\ref{eq:ho-stability}. Unlike in Algorithm~\ref{alg:ho_stability_alg_mt}, here we allow for the possibility of obtaining multiple samples, denoted by $C$, with which to estimate the objective. We assume for the sake of simplicity that the learning rate used for hyperparameter gradient descent is 1 and that the user has chosen some convergence criterion for the hyperparameter optimization (e.g., that the norm of the hyperparameter gradient is less than some threshold).
\begin{algorithm}[ht!]
   \caption{(Offline) Optimization of Eq.~\ref{eq:ho-stability}}
   \label{alg:ho_stability_alg}
\begin{algorithmic}
   \STATE {\bfseries Input:} $S_\T$, $S_\V$, $T$, $\{\eta_t\}_{t = 0}^{T-1}$, $P_0$, $C$, $\lambda_0$, $K$
   \STATE {\bfseries Output:} $\lambda$
   \WHILE{not converged}
   \STATE $\theta^{(1)}_0,\dots,\theta^{(C)}_0 \sim P_0$
   \STATE $X_{1:C} \leftarrow 0, Y_{1:C} \leftarrow 0$
   \FOR{$t=0$ {\bfseries to} $T-1$}
   \FOR{$c=1$ {\bfseries to} $C$}
   \STATE $\theta^{(c)}_{t + 1} \leftarrow \theta^{(c)}_t - \eta_t \nabla_\theta \hat{R}(\theta^{(c)}_t; S^{(t)}_\T, \lambda)$
   \STATE $X_c \leftarrow X_c + \frac{\zeta}{2} \widetilde{\nabla}^K_\lambda \norm{\nabla_\theta \hat{R}(\theta^{(c)}_t; S^{(t)}_\T, \lambda) - \nabla_\theta \hat{R}(\theta^{(c)}_t; S^{(t)}_\V, \lambda)}_2^2$
   \STATE $Y_c \leftarrow Y_c + \norm{\nabla_\theta \hat{R}(\theta^{(c)}_t; S^{(t)}_\T, \lambda) - \nabla_\theta \hat{R}(\theta^{(c)}_t; S^{(t)}_\V, \lambda)}_2^2$
   \ENDFOR
   \ENDFOR
   \STATE $\lambda \leftarrow \lambda - \widetilde{\nabla}^K_\lambda \frac{1}{C}\sum_{c = 1}^C \hat{R}(\theta^{(c)}_{T-1}; S_\V, \lambda) - \frac{1}{C} \sum_{c = 1}^C X_c / \sqrt{Y_c}$ 
   \ENDWHILE
\end{algorithmic}
\end{algorithm}

We next show how one might approach optimization of the regularizer in Eq.~\ref{eq:ho-stability} using the online convex optimization framework. To see how the regularizer might be optimized online, we apply the triangle inequality to the definition of the regularizer in Eq.~\ref{eq:ho-stability} to derive Eq.~\ref{eq:trianglereg}.
\begin{align}
    \sqrt{\sum_{t = 0}^{T-1} d^2_{2,\nu_t}\left (p^1(\theta | S^{(t)}_\T, \lambda), p^1(\theta | S^{(t)}_\V, \lambda)\right)} \leq\sum_{t = 0}^{T - 1} d_{2,\nu_t}\left(p^1(\theta_t | S^{(t)}_\T, \lambda), p^1(\theta_t | S^{(t)}_\V, \lambda) \right) \label{eq:trianglereg}\eqperiod
\end{align}

If we assume that $d_{2,\nu_t}\left(p^1(\theta_t | S^{(t)}_\T, \lambda), p^1(\theta_t | S^{(t)}_\V, \lambda) \right)$ is convex for any choice of $t$, we can justify the use of online gradient descent to minimize the value of Eq.~\ref{eq:trianglereg} \cite{hazan2016introduction}. By updating $\lambda$ on-the-fly for every parameter step, we might hope to reduce the value of Eq.~\ref{eq:trianglereg} more rapidly than if we were to limit ourselves to one hyperparameter gradient step every $T$ parameter iterations.

We note, however, that the final $\lambda$ output by an online convex optimization algorithm is not necessarily well-suited for use over all $T$ steps of optimization. Guarantees on the optimality of the hyperparameters selected using an online convex optimization algorithm can only be provided when so-called ``online-to-batch'' conversion is feasible \cite{hazan2016introduction}. This would require assuming that the $\theta_t$ observed at each iteration are independent and identically distributed; this assumption is clearly violated in Eq.~\ref{eq:trianglereg} since $\theta_{t + 1}$ directly depends upon $\theta_t$.

In Algorithm~\ref{alg:ho_stability_alg_online}, we provide pseudocode for an online version of Algorithm~\ref{alg:ho_stability_alg}. Analogous to the offline case, we assume that we use online gradient descent with learning rate 1 to optimize the hyperparameters and that the user has chosen some convergence criterion for the hyperparameter optimization.
\begin{algorithm}[ht!]
   \caption{Online Optimization of Eq.~\ref{eq:ho-stability}}
   \label{alg:ho_stability_alg_online}
\begin{algorithmic}
   \STATE {\bfseries Input:} $S_\T$, $S_\V$, $T$, $\{\eta_t\}_{t = 0}^{T-1}$, $P_0$, $C$, $\lambda_0$, $K$
   \STATE {\bfseries Output:} $\lambda$
   \WHILE{not converged}
   \STATE $\theta^{(1)}_0,\dots,\theta^{(C)}_0 \sim P_0$
   \FOR{$t=0$ {\bfseries to} $T-1$}
   \FOR{$c=1$ {\bfseries to} $C$}
   \STATE $\theta^{(c)}_{t + 1} \leftarrow \theta^{(c)}_t - \eta_t \nabla_\theta \hat{R}(\theta^{(c)}_t; S^{(t)}_\T, \lambda)$
   \STATE $g_c \leftarrow \zeta \widetilde{\nabla}^K_\lambda \norm{\nabla_\theta \hat{R}(\theta^{(c)}_t; S^{(t)}_\T, \lambda) - \nabla_\theta \hat{R}(\theta^{(c)}_t; S^{(t)}_\V, \lambda)}_2$
   \ENDFOR
   \STATE $\lambda \leftarrow \lambda - \frac{1}{C}\sum_{c = 1}^C g_c$
   \ENDFOR
   \STATE $\lambda \leftarrow \lambda - \widetilde{\nabla}^K_\lambda \frac{1}{C}\sum_{c = 1}^C \hat{R}(\theta^{(c)}_{T-1}; S_\V, \lambda)$
   \ENDWHILE
\end{algorithmic}
\end{algorithm}
\section{Experiments}
All experiments requiring backpropagation were performed using the PyTorch computational framework \cite{torchcitation}. In Section~\ref{sec:regappendix}, the ResNet-18 and ResNet-34 experiments were run on Nvidia GeForce GTX 980 Ti GPUs, and the linear classifier experiments were run on a Nvidia GeForce GTX 1080 GPU.

\subsection{Feature Selection} \label{sec:featappendix}
As we describe in Section~\ref{sec:featselection}, we investigate two versions of Freedman's problem. In the first, we generate $500$ input-label pairs, $\{(x_i, y_i)\}_{i = 1}^{500}$ where $x_i \sim \mathcal{N}(\mathbf{0_{500}}, \mathbb{I}_{\mathbf{500}})$ and $y_i \sim \mathcal{N}(0,1)$, which we split into equally sized training and validation sets. In the second, we also generate $500$ input-label pairs, $\{(x_i, y_i)\}_{i = 1}^{500}$ where $x_i \sim \mathcal{N}(\mathbf{0_{500}}, \mathbb{I}_{\mathbf{500}})$. In this version, however, we sample $y_i$ so that there are two true predictors $x^{(1)}_i$ and $x^{(2)}_i$. We thus define $y_i = \frac{1}{\sqrt{6}} \left (x^{(1)}_i + x^{(2)}_i + \mathcal{N}(0, 2) \right)$. Note that the superscript on $x_i$ denotes an index for the vector $x_i$.

The choice of $\sigma^2$ for the noise added to $y_i$ in the second experiment determines how much signal is present in the two ``true'' predictors provided to the feature selection algorithm. $\sigma^2 = 2$ was chosen arbitrarily; the results are qualitatively similar for other choices of $\sigma^2$.

To estimate Eq.~\ref{eq:ho-stability} with $\zeta^2 = 0.025$, we run $50$ chains of Langevin Dynamics with learning rate $0.1$ on the training set for $50$ steps each. To improve the bound that we are implicitly minimizing, we also tune $\tau$ for the Gibbs posterior sampled by each chain of Langevin Dynamics. By contrast, when we estimate Eq.~\ref{eq:bilevel-ho}, we evaluate the validation risk at the exact training optimum. For both objectives, we then select predictors using forward selection.  

Figure~\ref{fig:freedman_null} shows that unlike standard hyperparameter optimization (Eq.~\ref{eq:bilevel-ho}), forward selection using Eq.~\ref{eq:ho-stability} optimizes an objective that is positively correlated with the true out-of-sample error. Similar to Figure~\ref{fig:freedman_nonnull}, the test set mean-squared-error (MSE) and the hyperparameter objective are plotted using the left and right axes, respectively; the shaded regions on the plots are also $95$\% confidence intervals constructed from $50$ optimizations initialized from randomly sampled parameters. To simplify comparison, Figure~\ref{fig:freedman_nonnull_app} replicates Figure~\ref{fig:freedman_nonnull}.
\begin{figure}[p]
     \centering
     \begin{subfigure}[t]{0.49\linewidth}
         \centering
         \includegraphics[width=\linewidth]{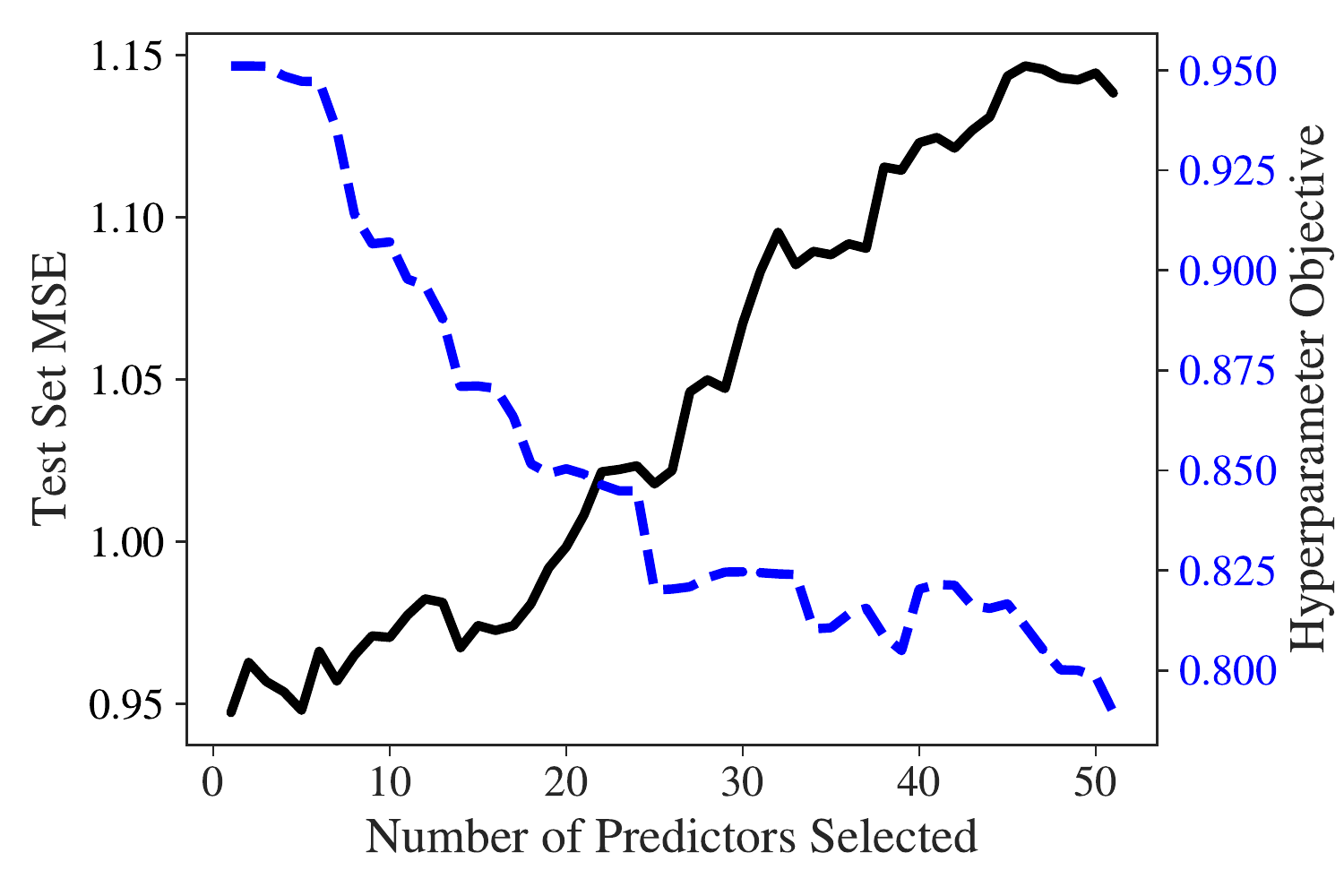}
         \caption{Test set MSE vs. objective for Eq.~\ref{eq:bilevel-ho}}
         \label{fig:baddecision_null}
     \end{subfigure}
     \hfill
     \begin{subfigure}[t]{0.49\linewidth}
         \centering
         \includegraphics[width=\linewidth]{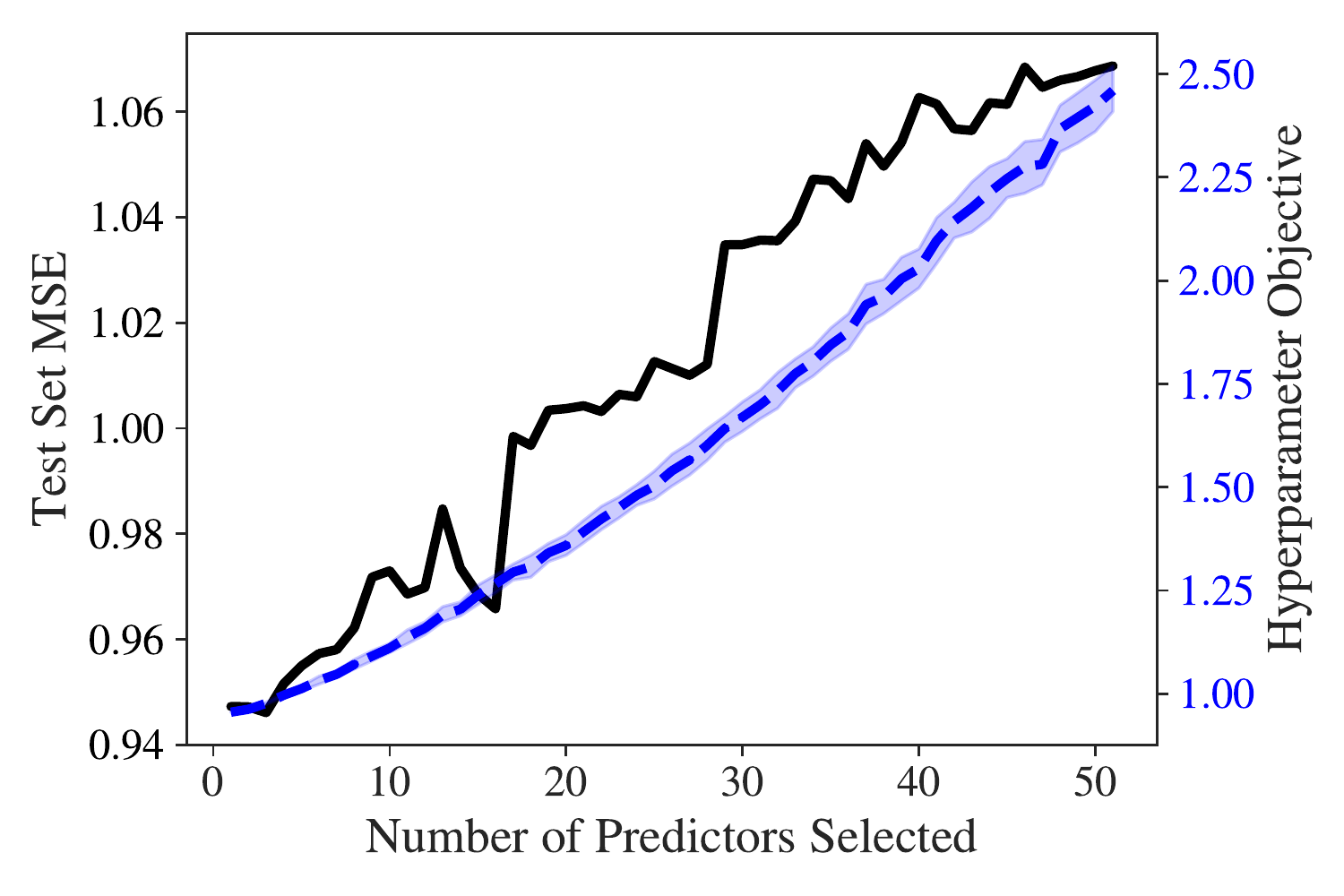}
         \caption{Test set MSE vs. objective for Eq.~\ref{eq:ho-stability}.}
         \label{fig:gooddecision_null}
     \end{subfigure}
     \caption{Comparing objectives to test set MSE for Freedman's paradox with no true predictors.}
     \label{fig:freedman_null}
\end{figure}
\begin{figure}[p]
     \centering
     \begin{subfigure}[t]{0.49\linewidth}
         \centering
         \includegraphics[width=\linewidth]{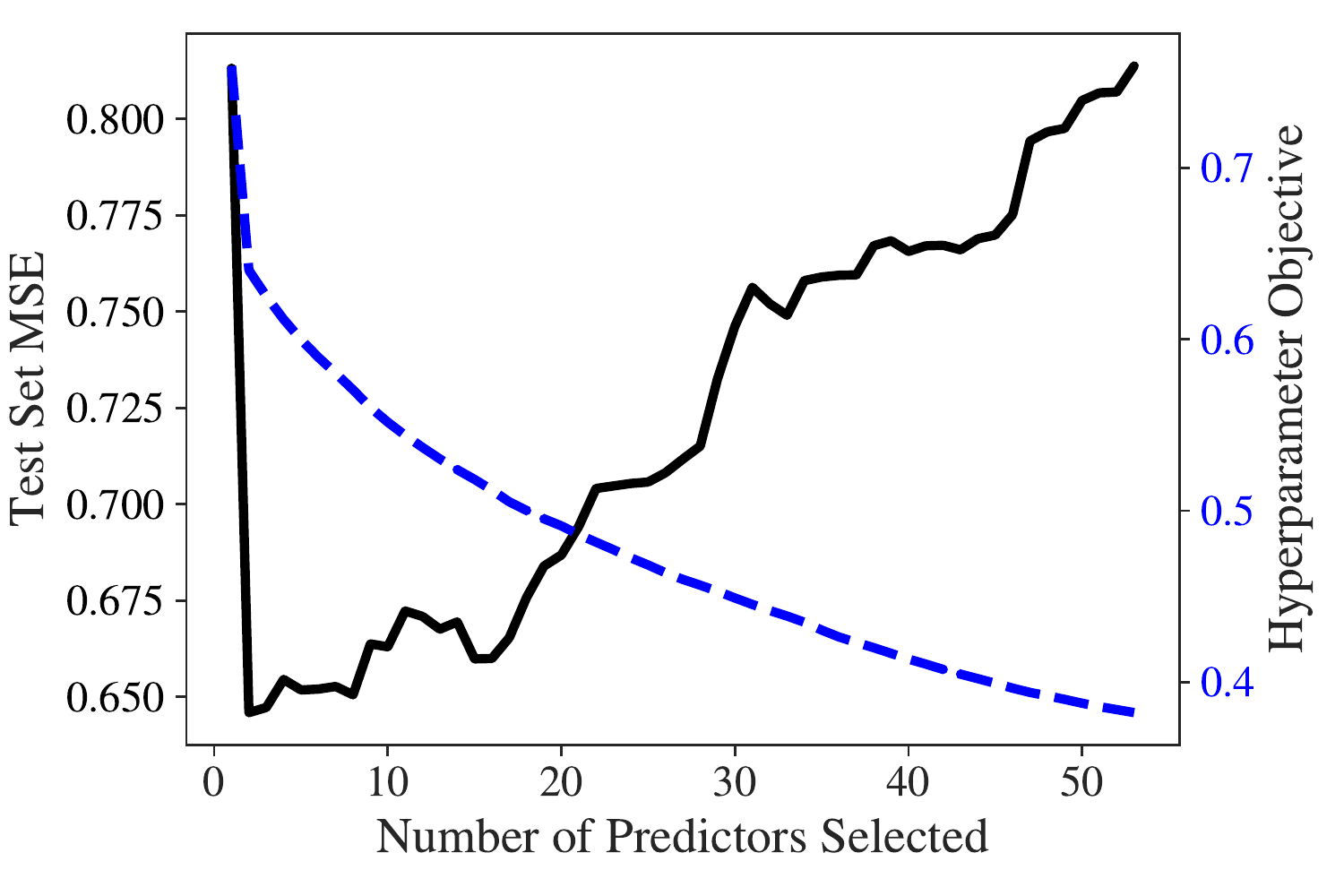}
         \caption{Test set MSE vs. objective for Eq.~\ref{eq:bilevel-ho}.}
         \label{fig:baddecision_nonnull_app}
     \end{subfigure}
     \hfill
     \begin{subfigure}[t]{0.49\linewidth}
         \centering
         \includegraphics[width=\linewidth]{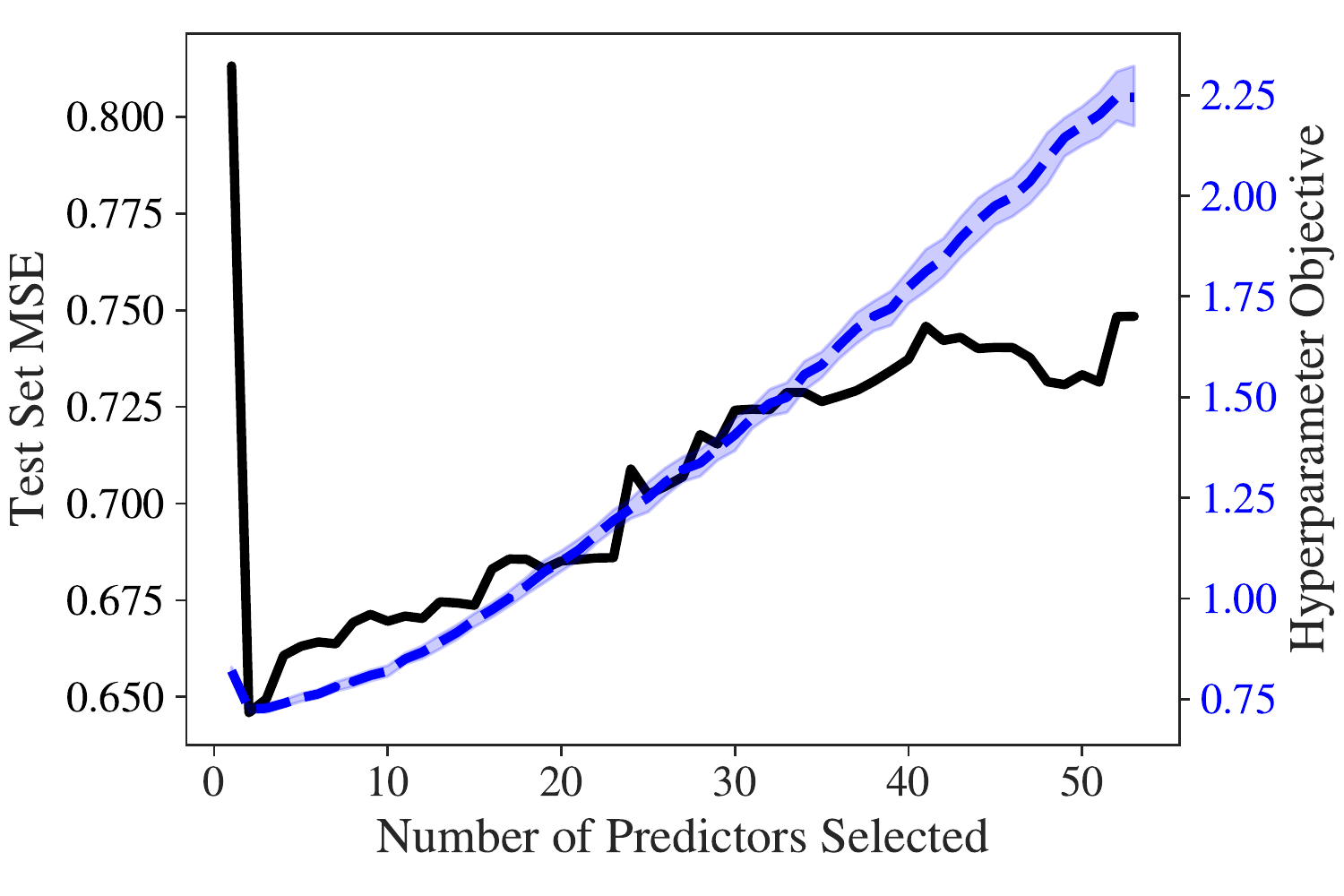}
         \caption{Test set MSE vs. objective for Eq.~\ref{eq:ho-stability}. }
         \label{fig:gooddecision_nonnull_app}
     \end{subfigure}
     \caption{Comparing objectives to test set MSE for Freedman's paradox with two true predictors. }
     \label{fig:freedman_nonnull_app}
\end{figure}
\begin{figure}[p]
     \centering
     \begin{subfigure}[t]{0.49\linewidth}
         \centering
         \includegraphics[width=\linewidth]{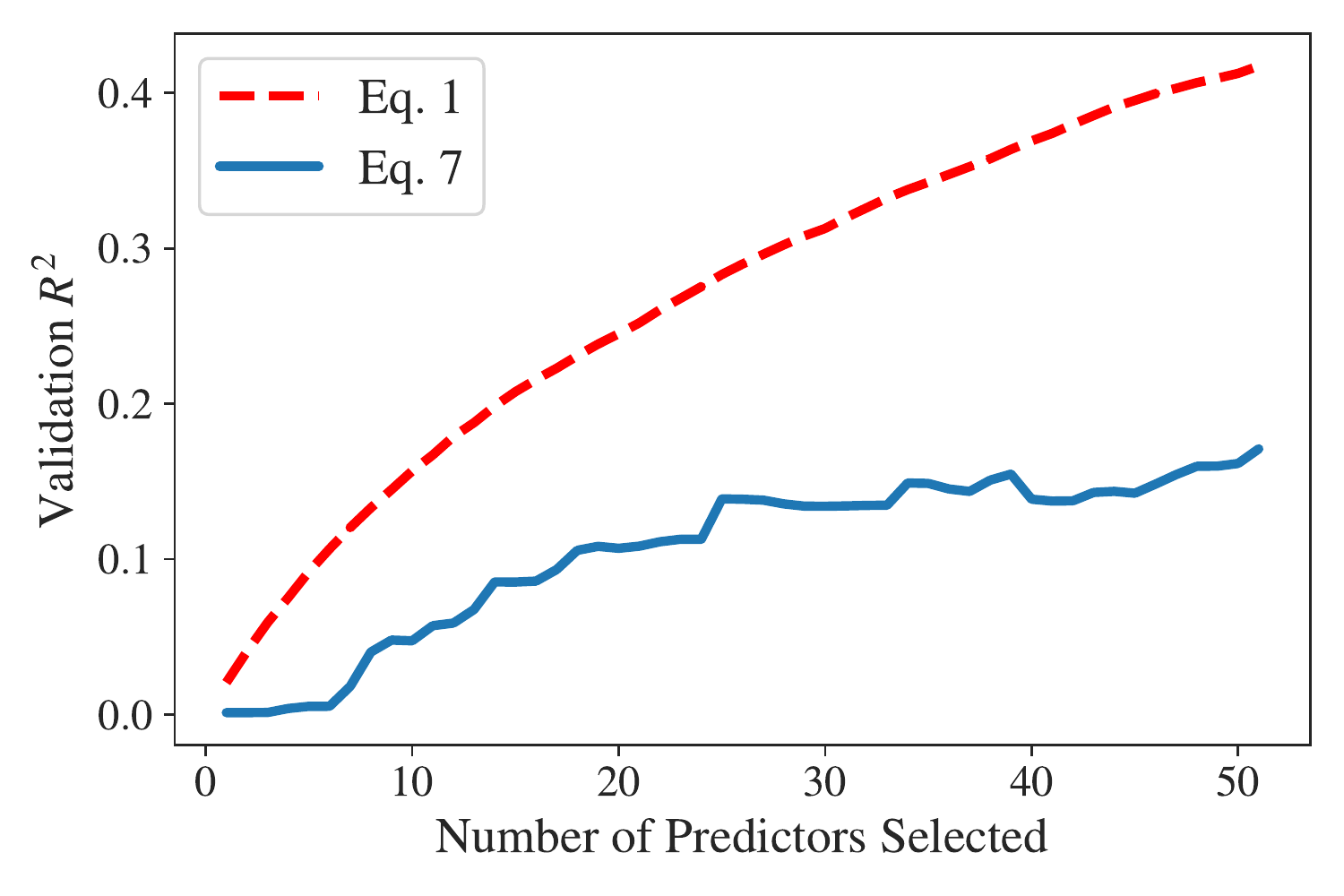}
         \caption{Validation $R^2$ for the experiment with no true predictors.}
         \label{fig:validation_r2_null}
     \end{subfigure}
     \hfill
     \begin{subfigure}[t]{0.49\linewidth}
         \centering
         \includegraphics[width=\linewidth]{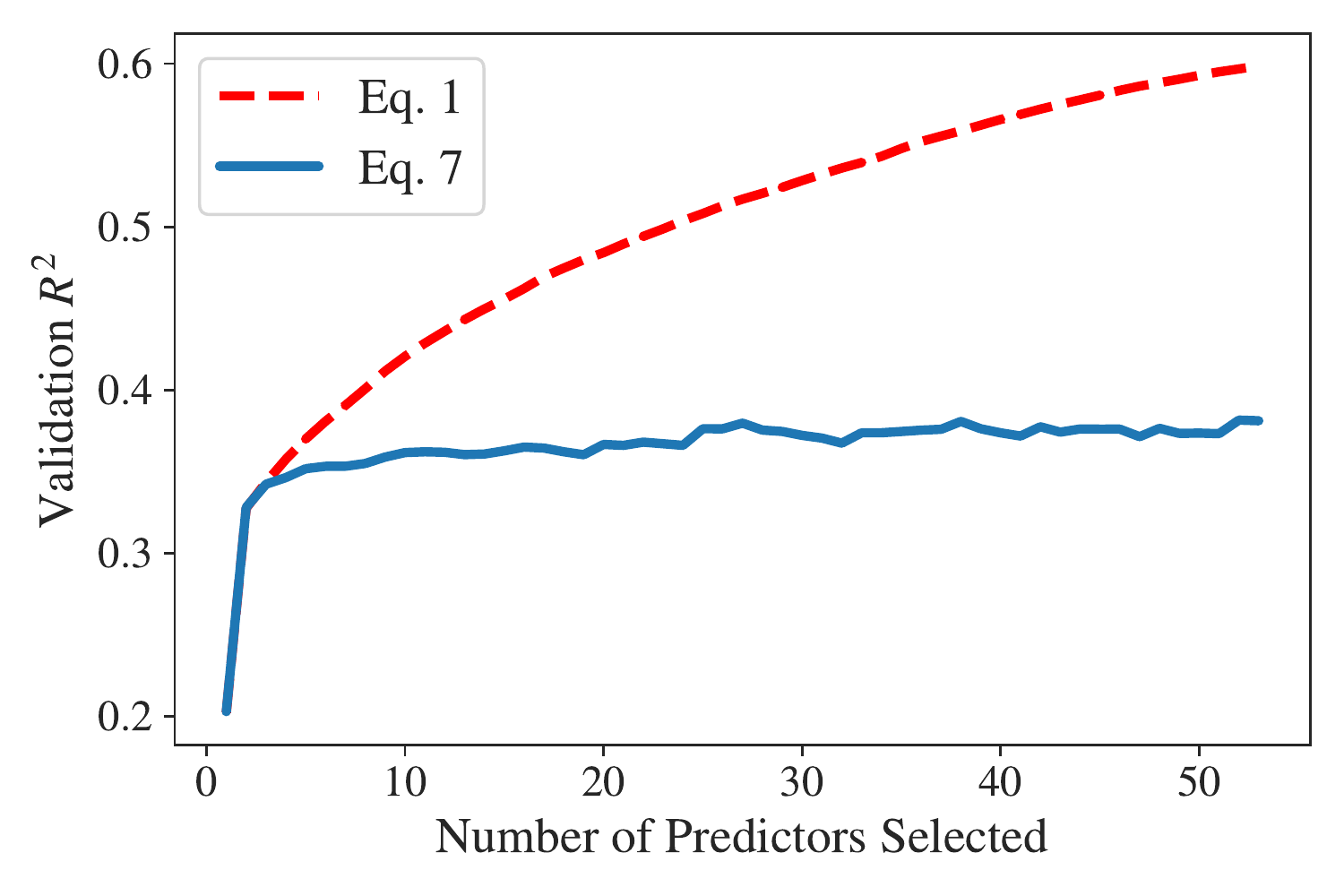}
         \caption{Validation $R^2$ for the experiment with two true predictors.}
         \label{fig:validation_r2_nonnull}
     \end{subfigure}
     \caption{Validation $R^2$ vs. the number of predictors selected for both versions of Freedman's experiment. }
     \label{fig:validation_r2}
\end{figure}

In the paper introducing this experiment, \citet{freedman1983note} uses the $R^2$ of the linear model to assess the extent of the model's overfitting. Figure~\ref{fig:validation_r2} demonstrates that selecting features using Eq.~\ref{eq:bilevel-ho} (Figures~\ref{fig:validation_r2_null} and \ref{fig:validation_r2_nonnull}, dashed lines) results in a model whose fit to the validation set improves dramatically (and misleadingly) over the course of forward selection. By contrast, when selecting features using Eq.~\ref{eq:ho-stability}, the model's fit to the validation set improves more slowly (Figure~\ref{fig:validation_r2_null}, solid line) or not at all once the truly predictive features are incorporated (Figure~\ref{fig:validation_r2_nonnull}, solid line).

Using Eq.~\ref{eq:ho-stability} to select features achieves similar accuracy when compared to using a more problem-specific measure of performance such as the Akaike Information Criterion (AIC). Since we aim to improve model performance on unseen validation data, we use the measure of out-of-sample error as a baseline where $p$ denotes the number of features included:
\begin{align*}
    \text{AIC}(\theta) = 2p + n_\V \hat{R}_\V(\theta; S_\V, \lambda)\eqperiod
\end{align*}
Note that unlike Eq.~\ref{eq:ho-stability}, this proxy for out-of-sample performance can only be applied to problems in which the hyperparameter selection affects the number of features in the model. Figure~\ref{fig:freedman_AIC} shows that the AIC can be used to select the correct model in both instances of Freedman's paradox we study (Figure~\ref{fig:freedman_AIC}). But while the validation $R^2$ of models selected using Eq.~\ref{eq:ho-stability} remains flat as we include spuriously correlated features (Figure~\ref{fig:validation_r2_nonnull}), minimizing the AIC does \emph{not} reduce validation set overfitting for models for which we deliberately include spuriously correlated predictors.

\begin{figure}[ht!]
     \centering
     \begin{subfigure}[t]{0.49\linewidth}
         \centering
         \includegraphics[width=\linewidth]{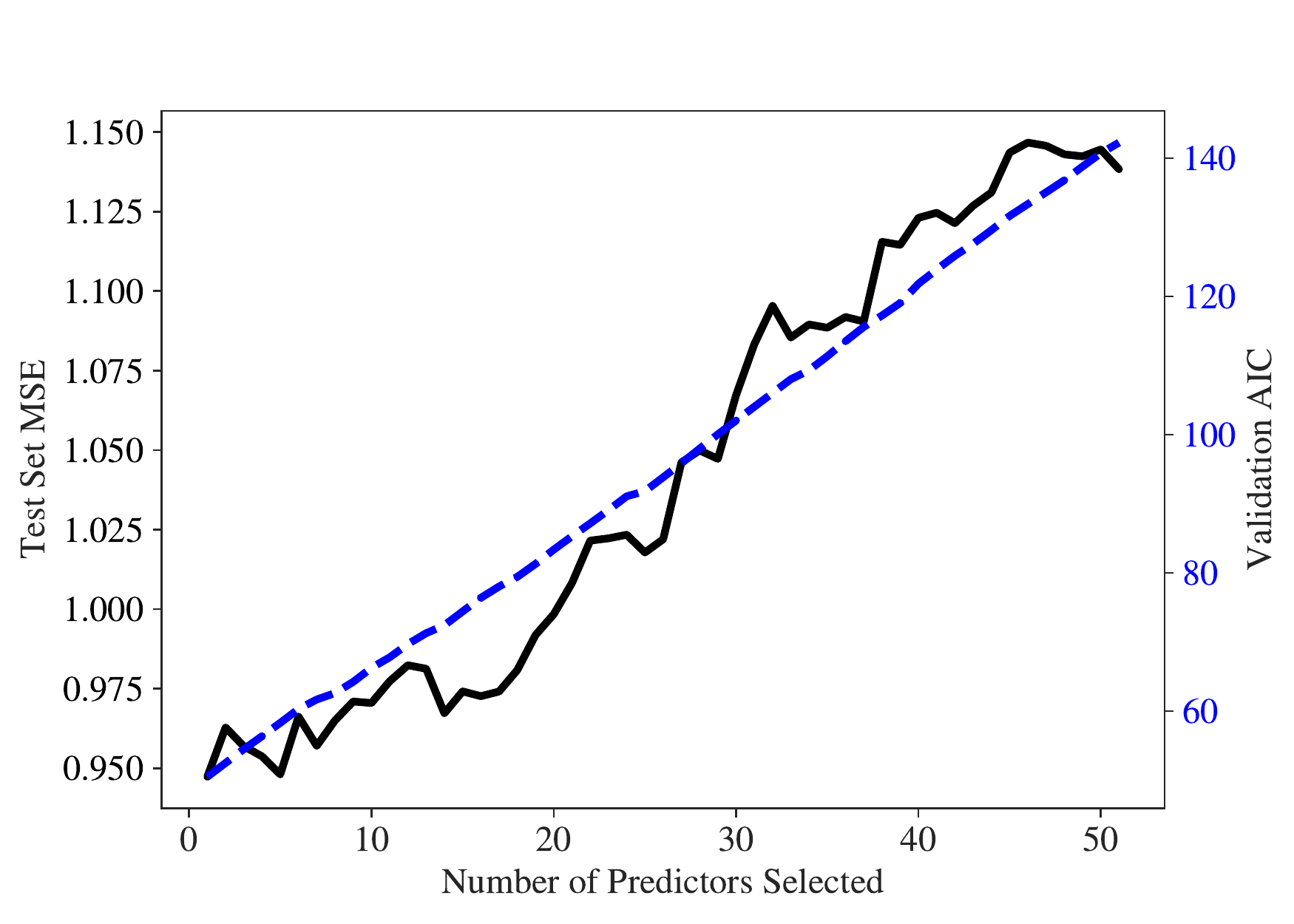}
         \caption{Test set MSE vs. Validation AIC with no true predictors.}
         \label{fig:validation_aic_null}
     \end{subfigure}
     \hfill
     \begin{subfigure}[t]{0.49\linewidth}
         \centering
         \includegraphics[width=\linewidth]{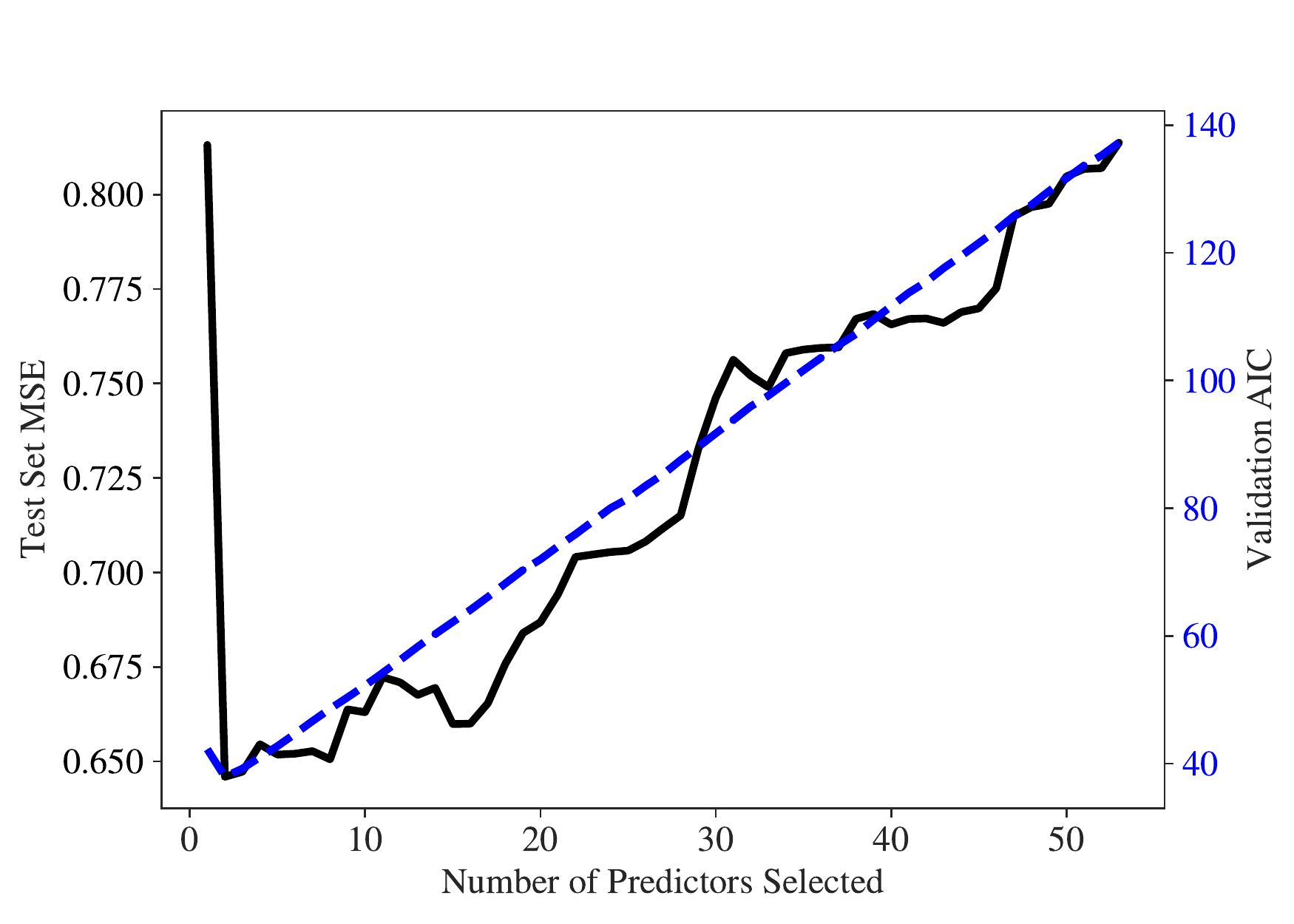}
         \caption{Test set MSE vs. Validation AIC with two true predictors.}
         \label{fig:validation_aic_nonnull}
     \end{subfigure}
     \caption{Comparing the validation AIC to test set MSE for Freedman's paradox.}
     \label{fig:freedman_AIC}
\end{figure}

\subsection{Regularization Penalty} \label{sec:regappendix}
We also evaluate our approach using an example of validation set overfitting first described in Section 5.2 of \citet{lorraine2019optimizing}. We run a similar experiment by fitting a per-parameter weight decay hyperparameter for the following classifiers: a one-layer fully connected network (i.e., a linear classifier), ResNet-18, and ResNet-34 \cite{he2016deep}. We fit each of these classifiers to 50 randomly sampled training images from the MNIST and CIFAR-10 data sets and we use 50 randomly sampled validation images to evaluate and train the hyperparameters \cite{lecun1998gradient, krizhevsky2009learning}. We measure test set error using the standard testing partition provided by the creators of both data sets. Note that links to these data sets can be found in the references below.

The details of our optimization setup follow. To optimize the neural network parameters, we minimize a cross-entropy loss using the Adam optimizer with a learning rate of $10^{-4}$ \cite{kingma2014adam}. We then run $1000$ inner gradient steps on the parameters per outer step on the hyperparameters. To ensure that our results are comparable to those of \citet{lorraine2019optimizing}, we do not re-initialize the neural network parameters for each outer step (matching Algorithm~1 in \citet{lorraine2019optimizing}).

For both the regularized and unregularized objective, we compute the hyperparameter gradient of the empirical validation risk using the $T1-T2$ approximation proposed in \citet{Luketina16}. To optimize Eq.~\ref{eq:ho-stability}, we apply a modified Algorithm~\ref{alg:ho_stability_alg_mt} with $K = 0$. As explained above, Algorithm~\ref{alg:ho_stability_alg_mt} is modified to not resample $\theta_0 \sim P_0$ at each outer step. We use grid search to select the $\zeta$ that minimizes out-of-sample error, though we show in Figures~\ref{fig:linear_scan}--\ref{fig:resnet34_scan} that our results are qualitatively unchanged for a wide range of penalties. For both Eq.~\ref{eq:bilevel-ho} and Eq.~\ref{eq:ho-stability}, we perform gradient descent using the RMSProp optimizer with a learning rate of $10^{-2}$ \cite{tieleman2012lecture}.

For all plots below, the shaded regions are $95$\% confidence intervals constructed from $5$ optimizations initialized from randomly sampled parameters. When applicable, the $\zeta$ used for optimizing Eq.~\ref{eq:ho-stability} is included in the caption of the figure.

In Section~\ref{sec:regpenalty}, we assert that optimizing Eq.~\ref{eq:ho-stability} not only increases out-of-sample accuracy relative to Eq.~\ref{eq:ho-stability}, but also leads to more stable validation loss minimization. Figures~\ref{fig:linear_valid_offline}--\ref{fig:resnet34_valid_offline} substantiate this unintuitive claim. We believe that this result can be traced to the inaccuracy of the $T_1-T_2$ hyperparameter gradient; small errors in the gradient near convergence appear to dramatically affect the validation loss. Recall that in Section~\ref{sec:motivation}, we observed that minimizing the regularizer could be interpreted as minimizing a measure of distance between the training and validation Gibbs posterior distributions. Because this objective is consistent with, though not identical to, the goal of minimizing the validation risk, we speculate that adding the hyperparameter gradient of the regularizer at each outer step hides small errors associated with the $T_1-T_2$ approximation and thus improves convergence. 

Improved (and more expensive) hyperparameter gradient approximations would presumably resolve the instability we observe in Figures~\ref{fig:linear_valid_offline}--\ref{fig:resnet34_valid_offline}. But even when substantial computational resources are available, accurately estimating the hyperparameter gradient with respect to the validation risk remains extremely challenging \cite{Metz19}. A method for improving optimization stability with minimal computational expense is thus desirable. The empirical effect of the regularizer on validation loss convergence strengthens our argument for optimizing Eq.~\ref{eq:ho-stability}. 

\begin{figure}[p]
     \centering
     \begin{subfigure}[t]{0.49\linewidth}
         \centering
         \includegraphics[width=\linewidth]{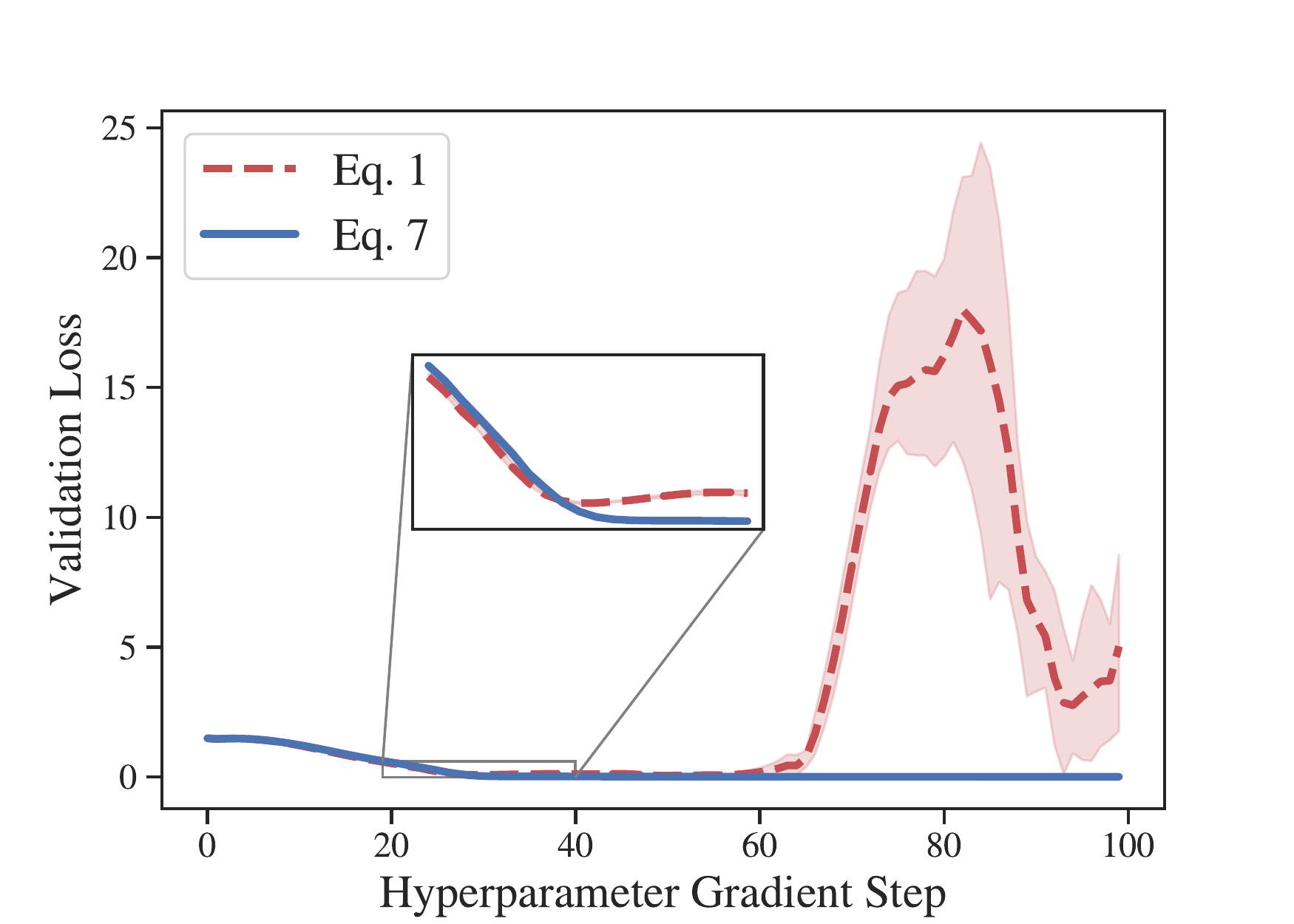}
         \caption{MNIST $\left (\zeta = \num{1.41e-3} \right)$}
         \label{fig:linear_mnist_valid_offline}
     \end{subfigure}
     \hfill
     \begin{subfigure}[t]{0.49\linewidth}
         \centering
         \includegraphics[width=\linewidth]{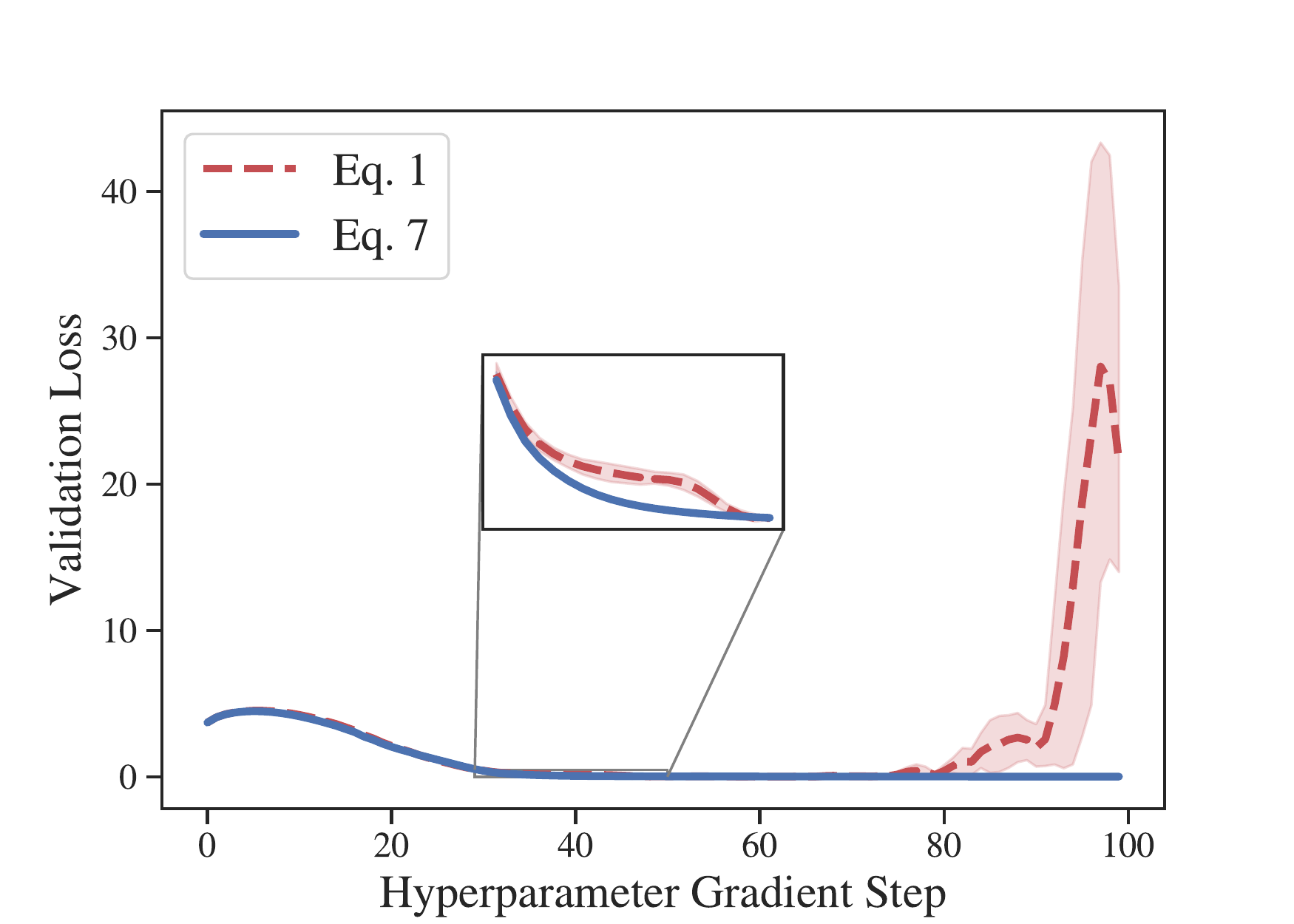}
         \caption{CIFAR-10 $\left (\zeta = \num{3.17e-3} \right)$}
         \label{fig:linear_cifar10_valid_offline}
     \end{subfigure}
     \caption{Validation loss minimization with per-parameter weight decays on a linear classifier using Algorithm~\ref{alg:ho_stability_alg}.}
     \label{fig:linear_valid_offline}
\end{figure}

\begin{figure}[p]
     \centering
     \begin{subfigure}[t]{0.49\linewidth}
         \centering
         \includegraphics[width=\linewidth]{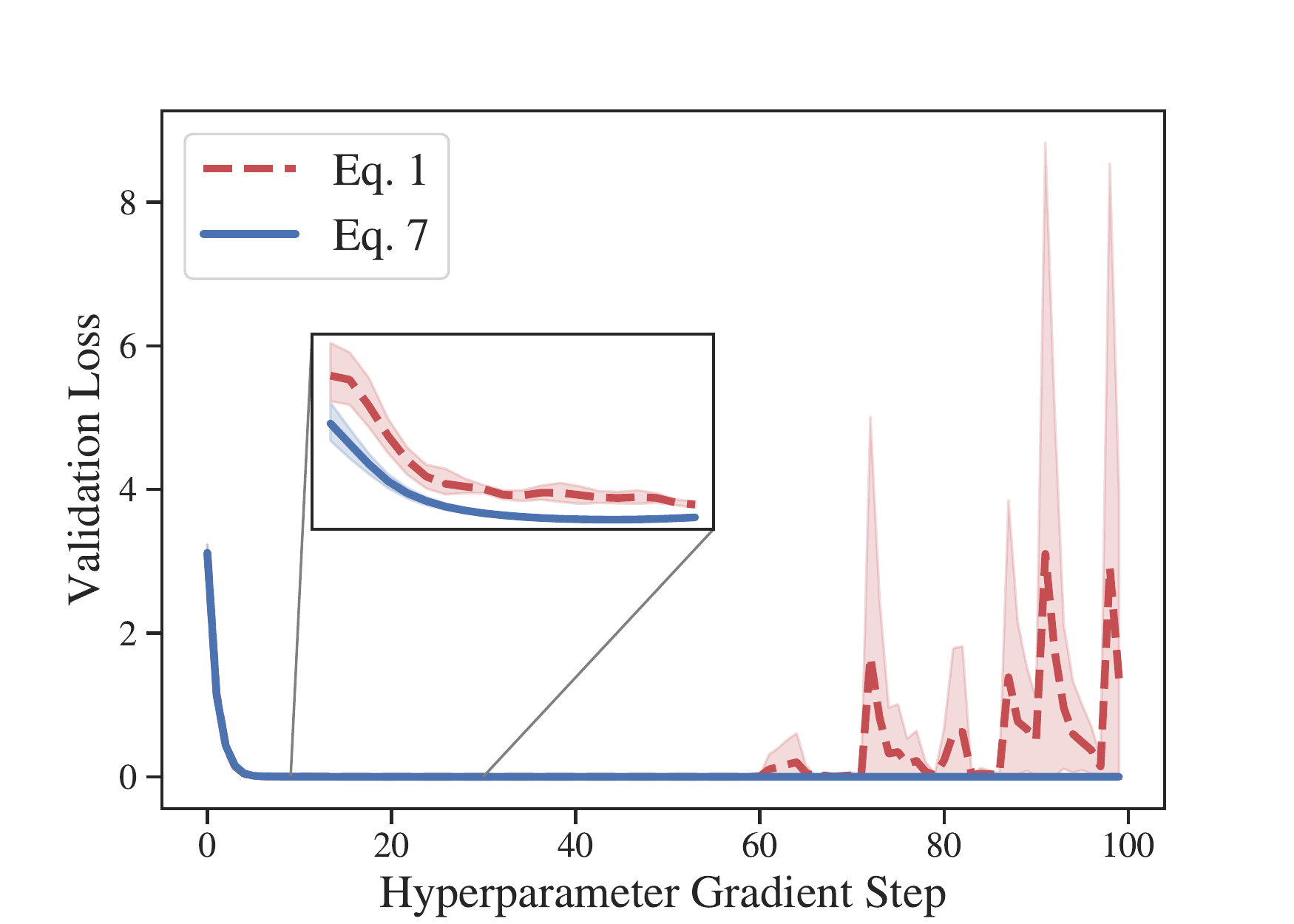}
         \caption{MNIST $\left (\zeta = \num{3.41e-3} \right)$}
         \label{fig:resnet18_mnist_valid_offline}
     \end{subfigure}
     \hfill
     \begin{subfigure}[t]{0.49\linewidth}
         \centering
         \includegraphics[width=\linewidth]{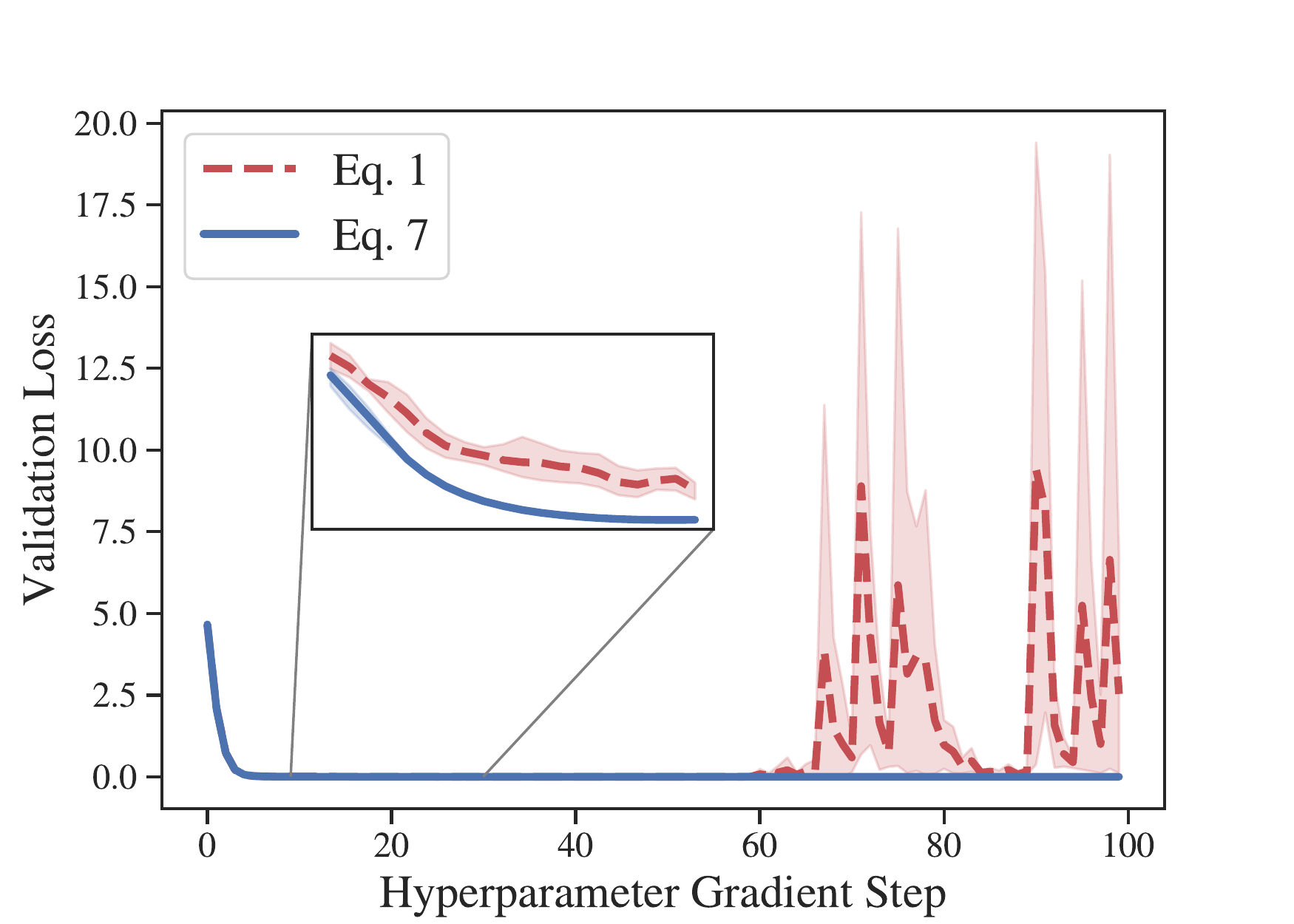}
         \caption{CIFAR-10 $\left (\zeta = \num{2.59e-3} \right)$}
         \label{fig:resnet18_cifar10_valid_offline}
     \end{subfigure}
     \caption{Validation loss minimization with per-parameter weight decays on ResNet-18 using Algorithm~\ref{alg:ho_stability_alg}.}
     \label{fig:resnet18_valid_offline}
\end{figure}

\begin{figure}[p]
     \centering
     \begin{subfigure}[t]{0.49\linewidth}
         \centering
         \includegraphics[width=\linewidth]{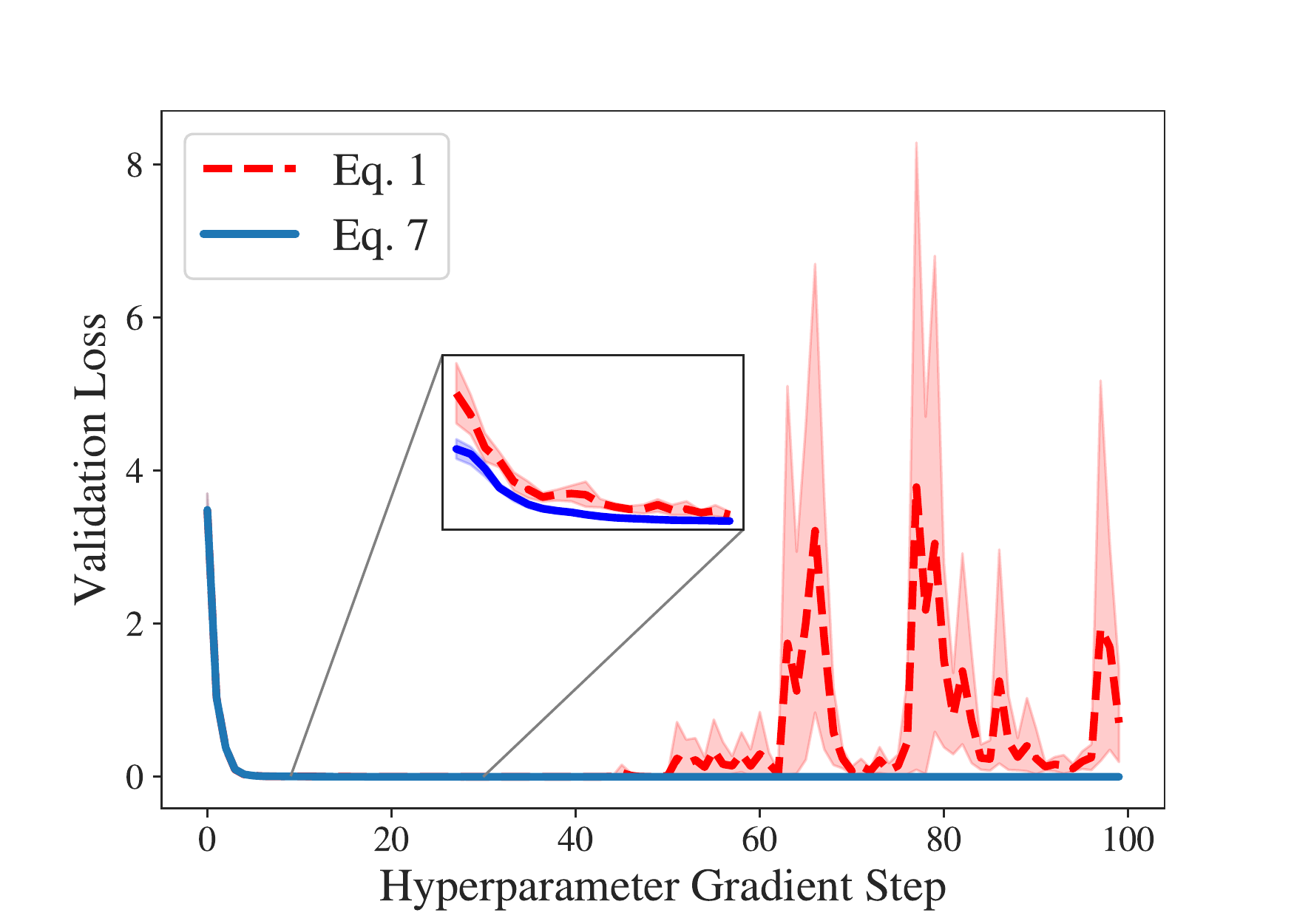}
         \caption{MNIST ($\zeta = \num{3.41e-3}$)}
         \label{fig:resnet34_mnist_valid_offline}
     \end{subfigure}
     \hfill
     \begin{subfigure}[t]{0.49\linewidth}
         \centering
         \includegraphics[width=\linewidth]{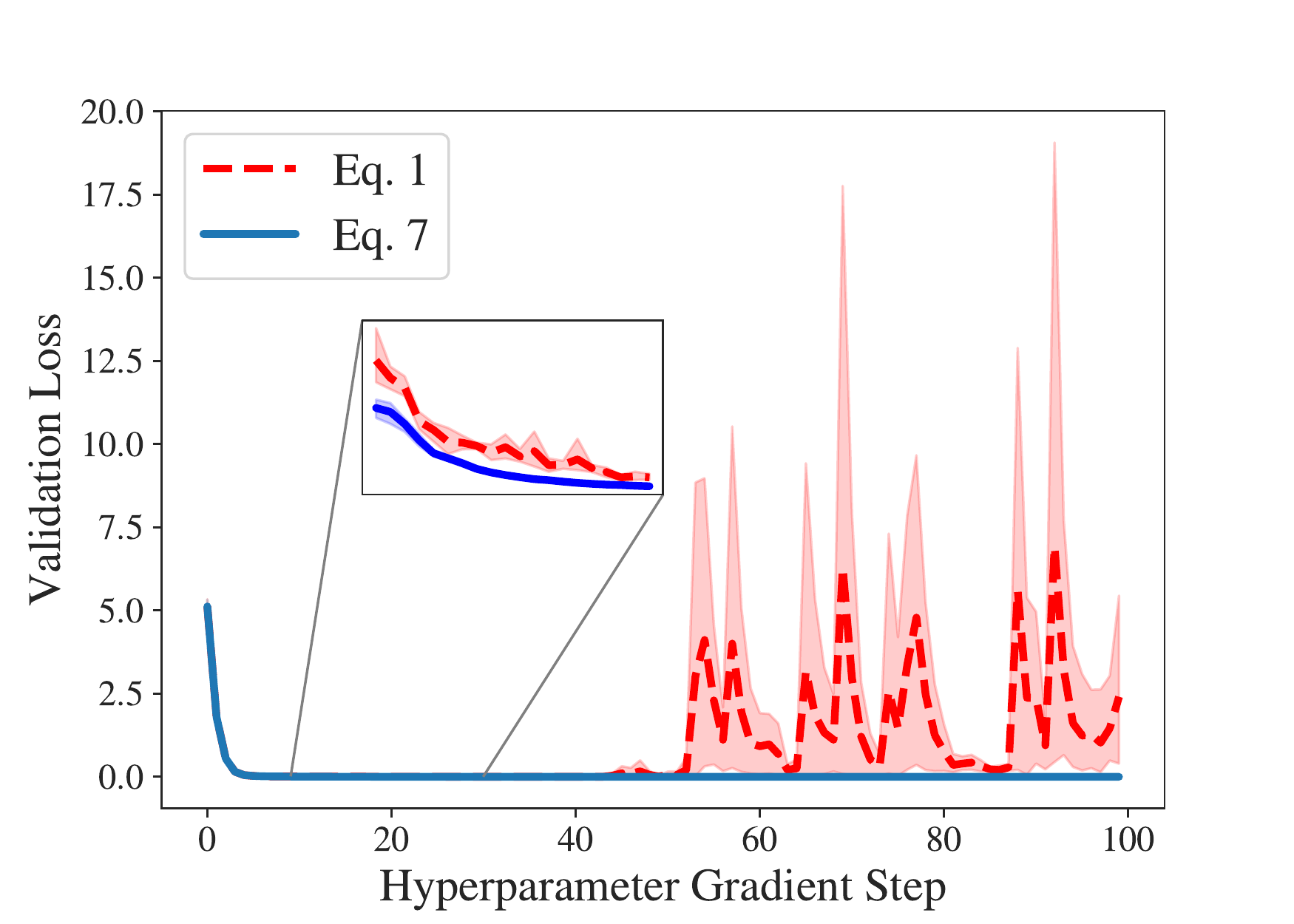}
         \caption{CIFAR-10 ($\zeta = \num{3.17e-3}$)}
         \label{fig:resnet34_cifar10_valid_offline}
     \end{subfigure}
     \caption{Validation loss minimization with per-parameter weight decays on ResNet-34 using Algorithm~\ref{alg:ho_stability_alg}.}
     \label{fig:resnet34_valid_offline}
\end{figure}

Figures~\ref{fig:linear_test_offline}--\ref{fig:resnet34_test_offline} and \ref{fig:linear_test_online}--\ref{fig:resnet34_test_online} extend the results presented in Section~\ref{sec:regpenalty} to additional classifier-dataset pairs. Figures~\ref{fig:linear_test_offline}--\ref{fig:resnet34_test_offline} demonstrate that applying the modified Algorithm~\ref{alg:ho_stability_alg} with $C = 1$ and the $K = 0$ approximation to the hyperparameter gradient of the regularizer substantially improves test set accuracy under a variety of conditions. In Table~\ref{tbl:baseline}, we also compare the results to the following alternative strategy: when optimizing Eq.~\ref{eq:bilevel-ho}, select the classifier with smallest weight norm that achieves the maximum top-1 validation accuracy observed. 

Figures~\ref{fig:linear_test_offline}--\ref{fig:resnet34_test_offline} show a similar improvement in out-of-sample accuracy when Algorithm~\ref{alg:ho_stability_alg_online} is run using the same inputs (i.e., $C = 1$ and $K = 0$) and modifications.

\begin{figure}[p]
     \centering
     \begin{subfigure}[t]{0.49\linewidth}
         \centering
         \includegraphics[width=\linewidth]{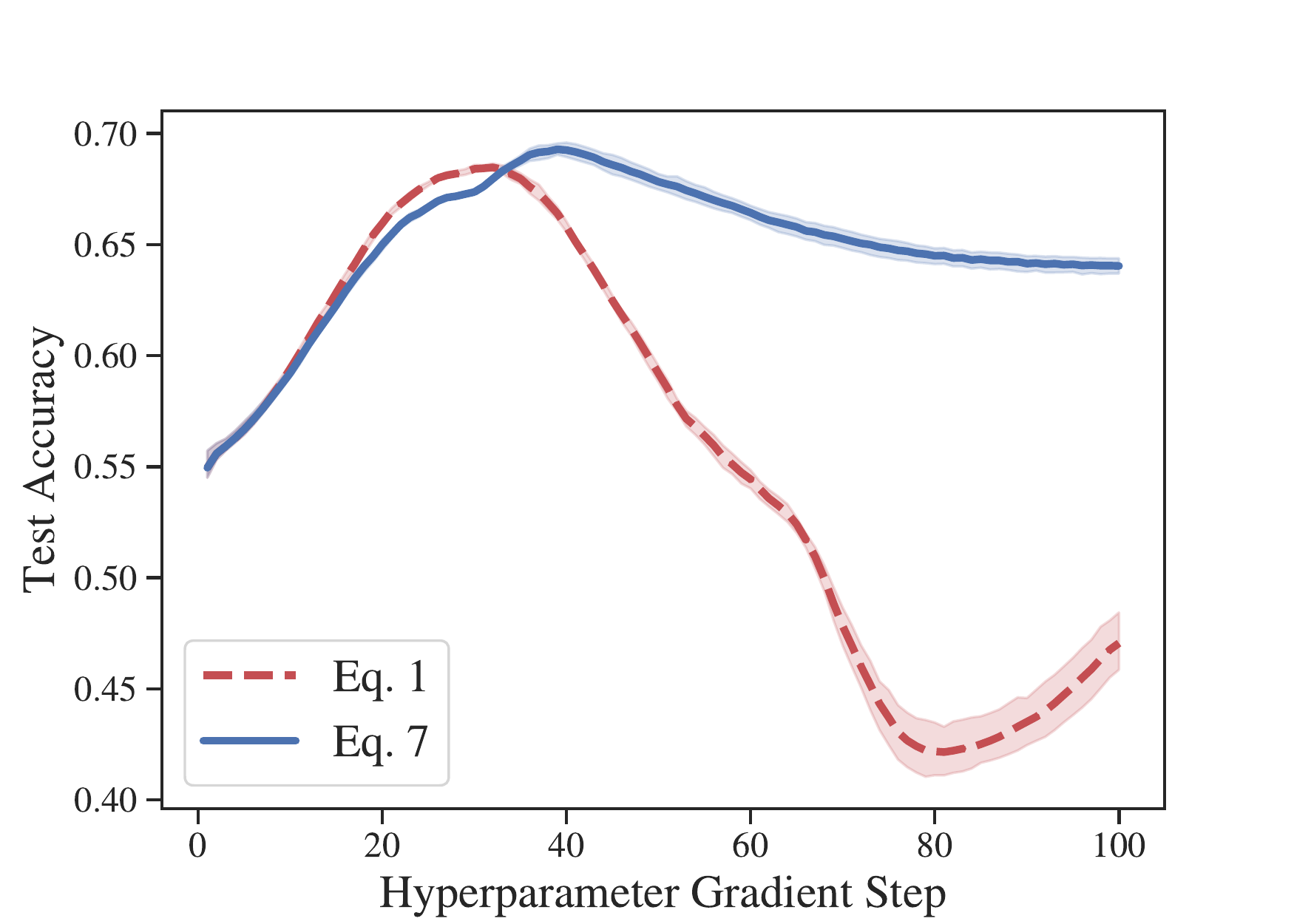}
         \caption{MNIST $\left (\zeta = \num{1.41e-3} \right)$}
         \label{fig:linear_mnist_test_offline}
     \end{subfigure}
     \hfill
     \begin{subfigure}[t]{0.49\linewidth}
         \centering
         \includegraphics[width=\linewidth]{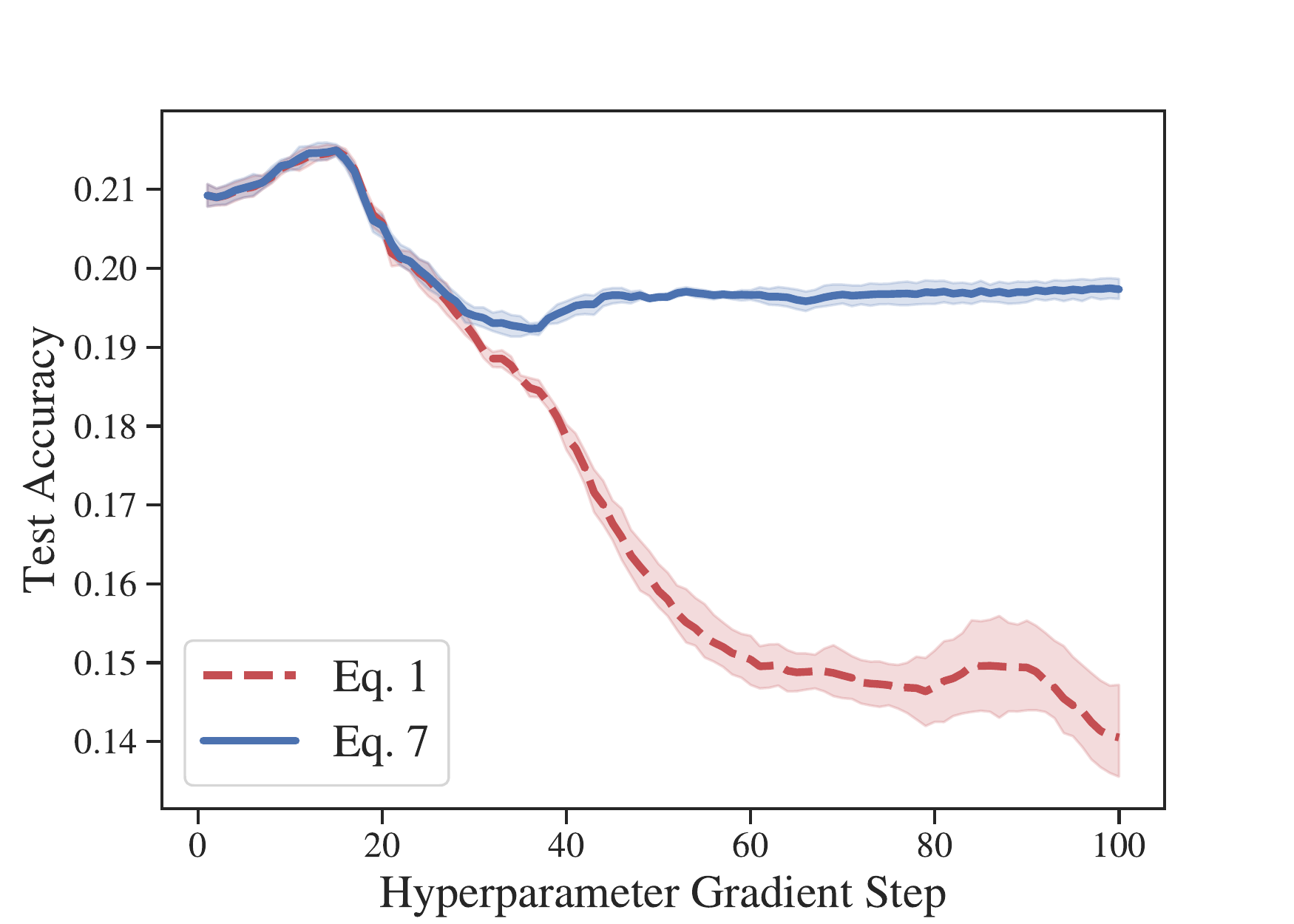}
         \caption{CIFAR-10 $\left (\zeta = \num{3.17e-3} \right)$}
         \label{fig:linear_cifar10_test_offline}
     \end{subfigure}
     \caption{Overfitting a validation set with per-parameter weight decays on a linear classifier using Algorithm~\ref{alg:ho_stability_alg}.}
     \label{fig:linear_test_offline}
\end{figure}

\begin{figure}[p]
     \centering
     \begin{subfigure}[t]{0.49\linewidth}
         \centering
         \includegraphics[width=\linewidth]{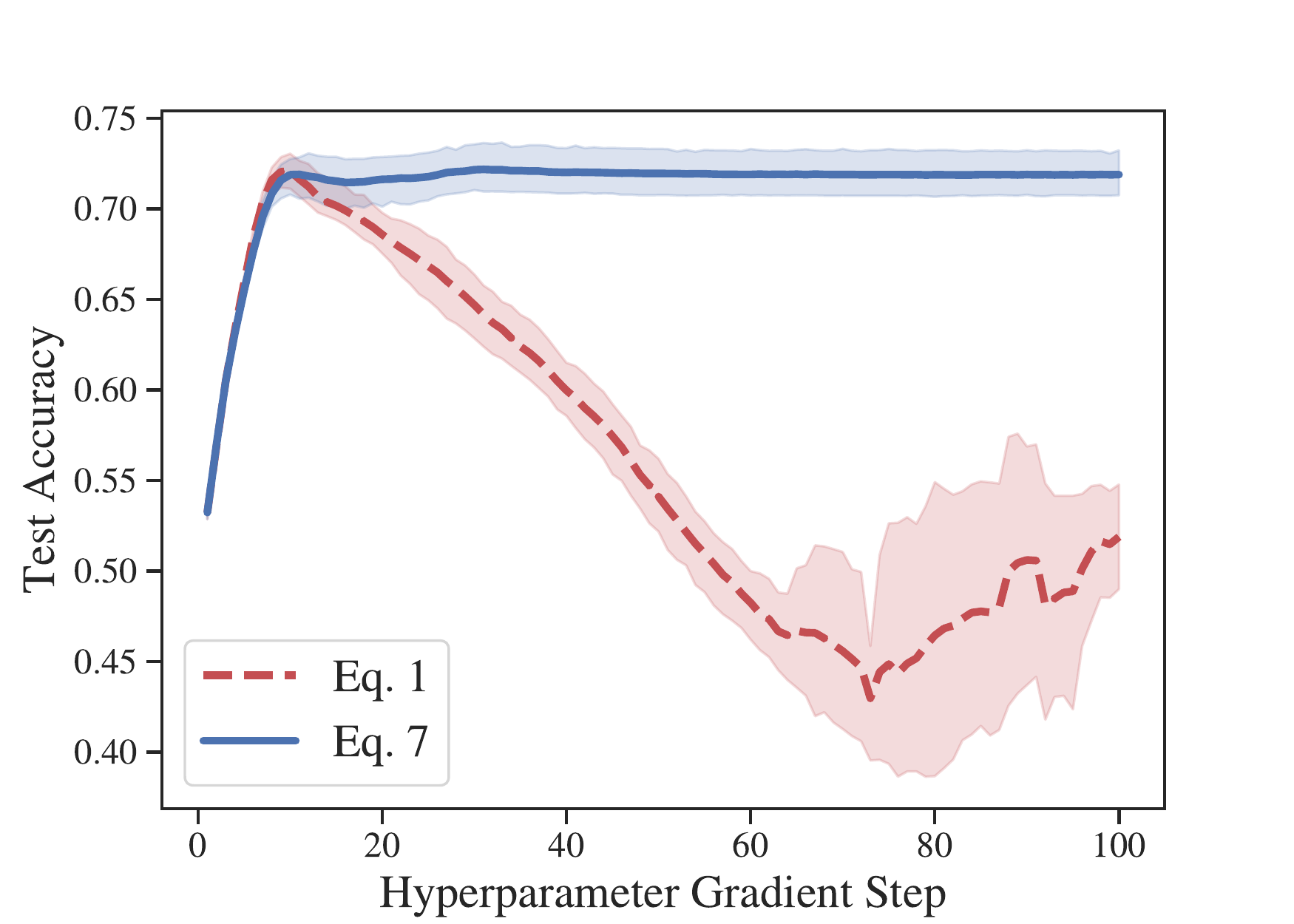}
         \caption{MNIST $\left (\zeta = \num{3.41e-3} \right)$}
         \label{fig:resnet18_mnist_test_offline}
     \end{subfigure}
     \hfill
     \begin{subfigure}[t]{0.49\linewidth}
         \centering
         \includegraphics[width=\linewidth]{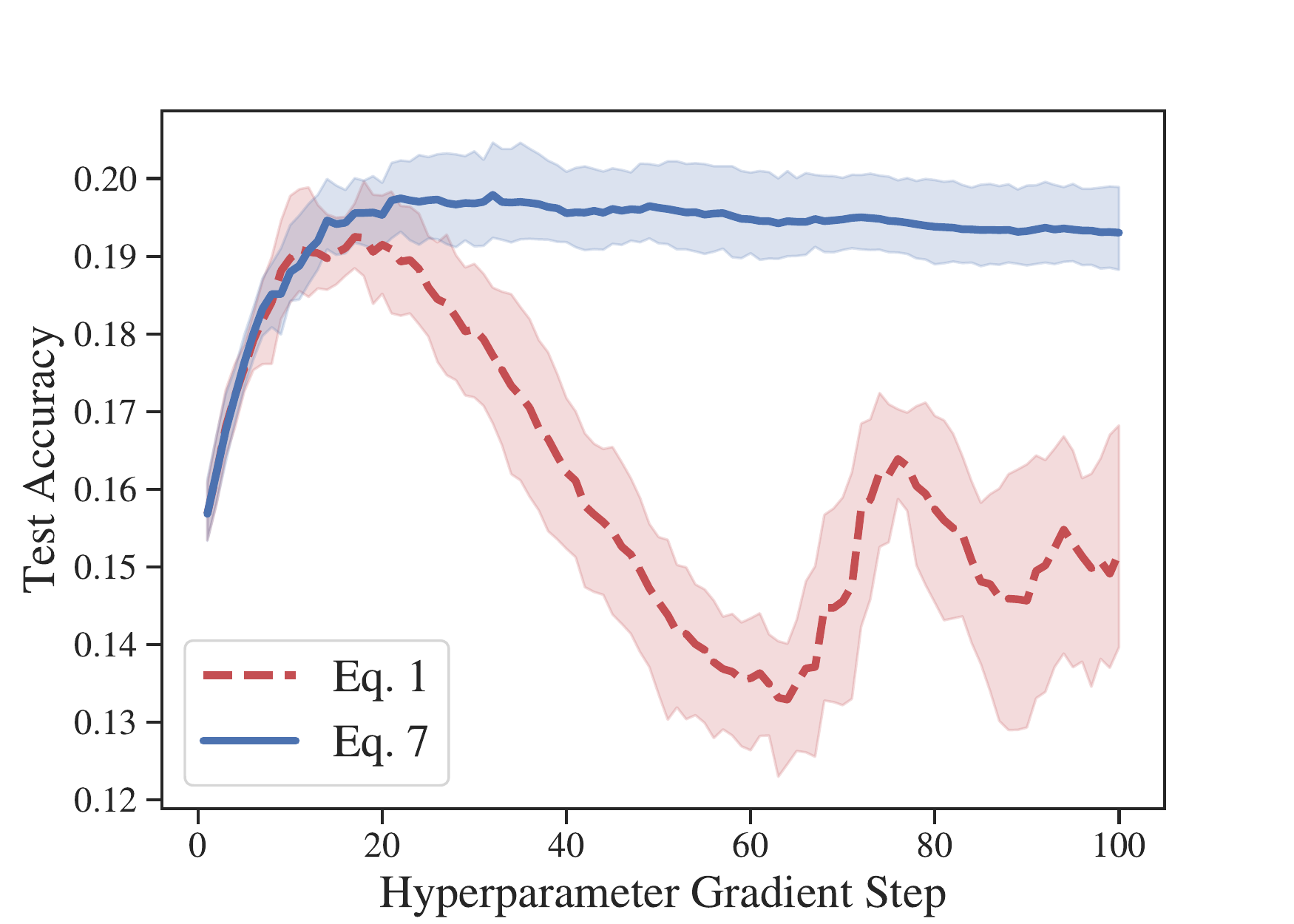}
         \caption{CIFAR-10 $\left (\zeta = \num{2.59e-3} \right)$}
         \label{fig:resnet18_cifar10_test_offline}
     \end{subfigure}
     \caption{Overfitting a validation set with per-parameter weight decays on ResNet-18 using Algorithm~\ref{alg:ho_stability_alg}.}
     \label{fig:resnet18_test_offline}
\end{figure}

\begin{figure}[p]
     \centering
     \begin{subfigure}[t]{0.49\linewidth}
         \centering
         \includegraphics[width=\linewidth]{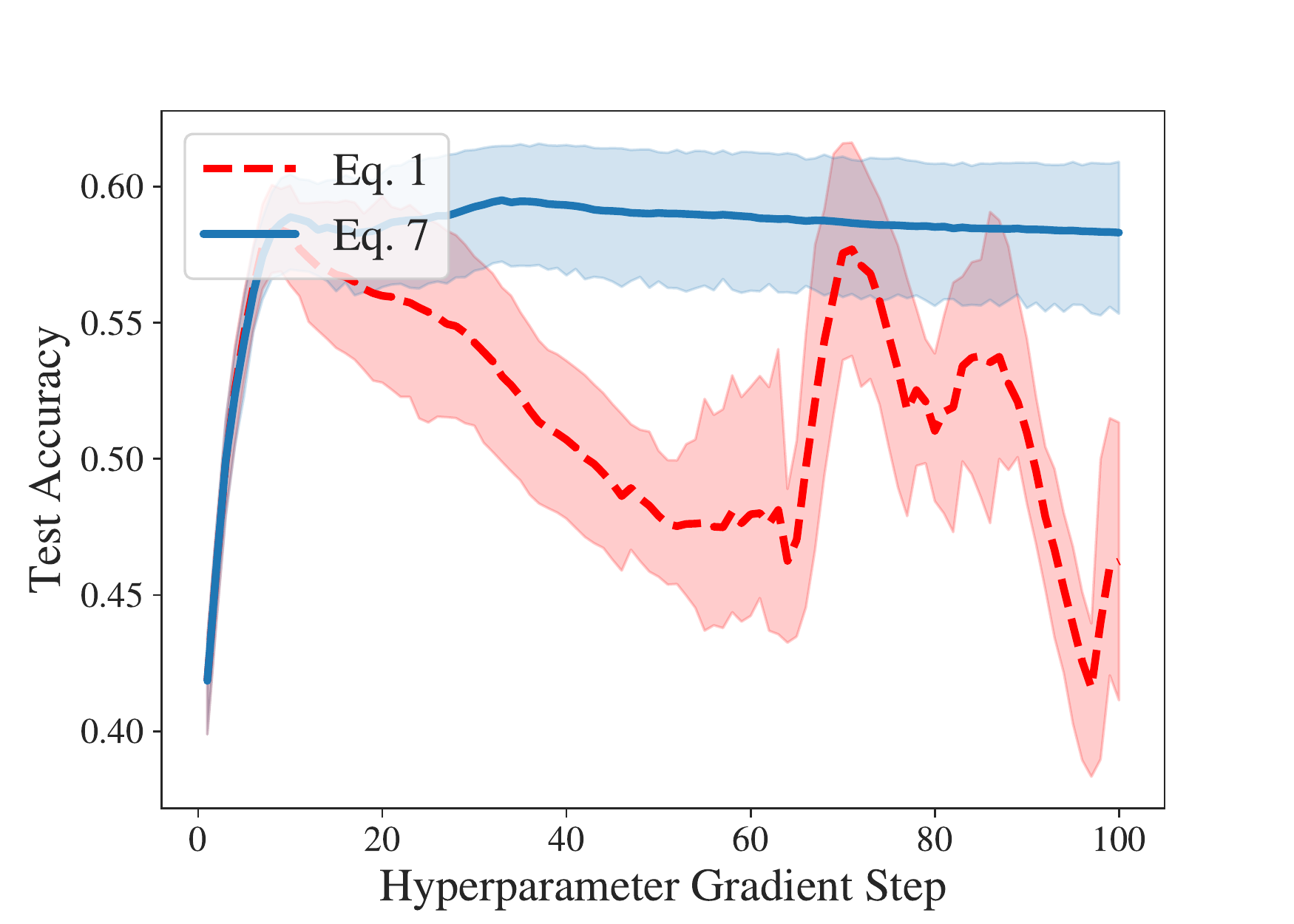}
         \caption{MNIST ($\zeta = \num{3.41e-3}$)}
         \label{fig:resnet34_mnist_test_offline}
     \end{subfigure}
     \hfill
     \begin{subfigure}[t]{0.49\linewidth}
         \centering
         \includegraphics[width=\linewidth]{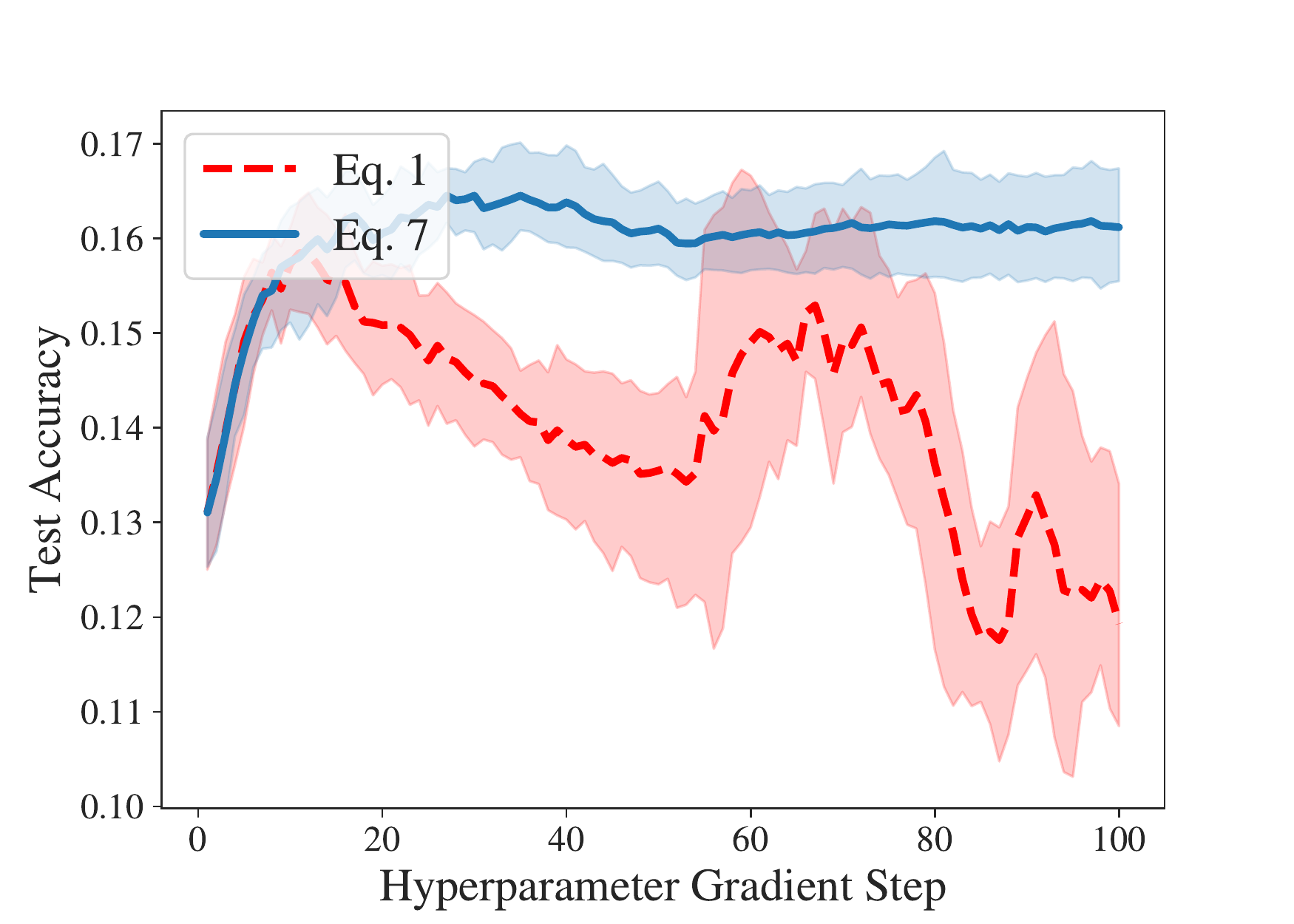}
         \caption{CIFAR-10 ($\zeta = \num{3.17e-3}$)}
         \label{fig:resnet34_cifar10_test_offline}
     \end{subfigure}
     \caption{Overfitting a validation set with per-parameter weight decays on ResNet-34 using Algorithm~\ref{alg:ho_stability_alg}.}
     \label{fig:resnet34_test_offline}
\end{figure}

\begin{table}[t]
\caption{Top-1 test accuracy comparison between classifiers obtained by way of either the minimum weight norm strategy or optimization of Eq.~\ref{eq:ho-stability}. The numbers in the parentheses correspond to 95\% confidence intervals obtained using the bootstrap.}
\label{tbl:baseline}
\vskip 0.15in
\begin{center}
\begin{small}
\begin{sc}
\begin{tabular}{lccc}
\toprule
Classifier & Data Set & Min. Weight Norm & Eq.~\ref{eq:ho-stability} \\
\midrule
Linear & MNIST & $56.8$ $(56.4, 57.2)$ & $64.0$ $(63.7, 64.4)$ \\
Linear & CIFAR-10 & $16.2$ $(15.9, 16.5)$ & $19.7$ $(19.6, 19.9)$ \\
ResNet-18 & MNIST & $63.3$ $(63.0, 63.7)$ & $71.9$ $(70.7, 73.2)$ \\
ResNet-18 & CIFAR-10 & $17.2$ $(16.8, 17.6)$ & $19.3$ $(18.8, 19.9)$         \\
ResNet-34 & MNIST & $52.5$ $(50.6, 54.1)$ & $58.6$ $(55.9, 60.8)$\\
ResNet-34 & CIFAR-10 & $14.4$ $(13.6, 15.2)$ & $16.1$ $(15.5, 16.7)$ \\
\bottomrule
\end{tabular}
\end{sc}
\end{small}
\end{center}
\vskip -0.1in
\end{table}

\begin{figure}[p]
     \centering
     \begin{subfigure}[t]{0.49\linewidth}
         \centering
         \includegraphics[width=\linewidth]{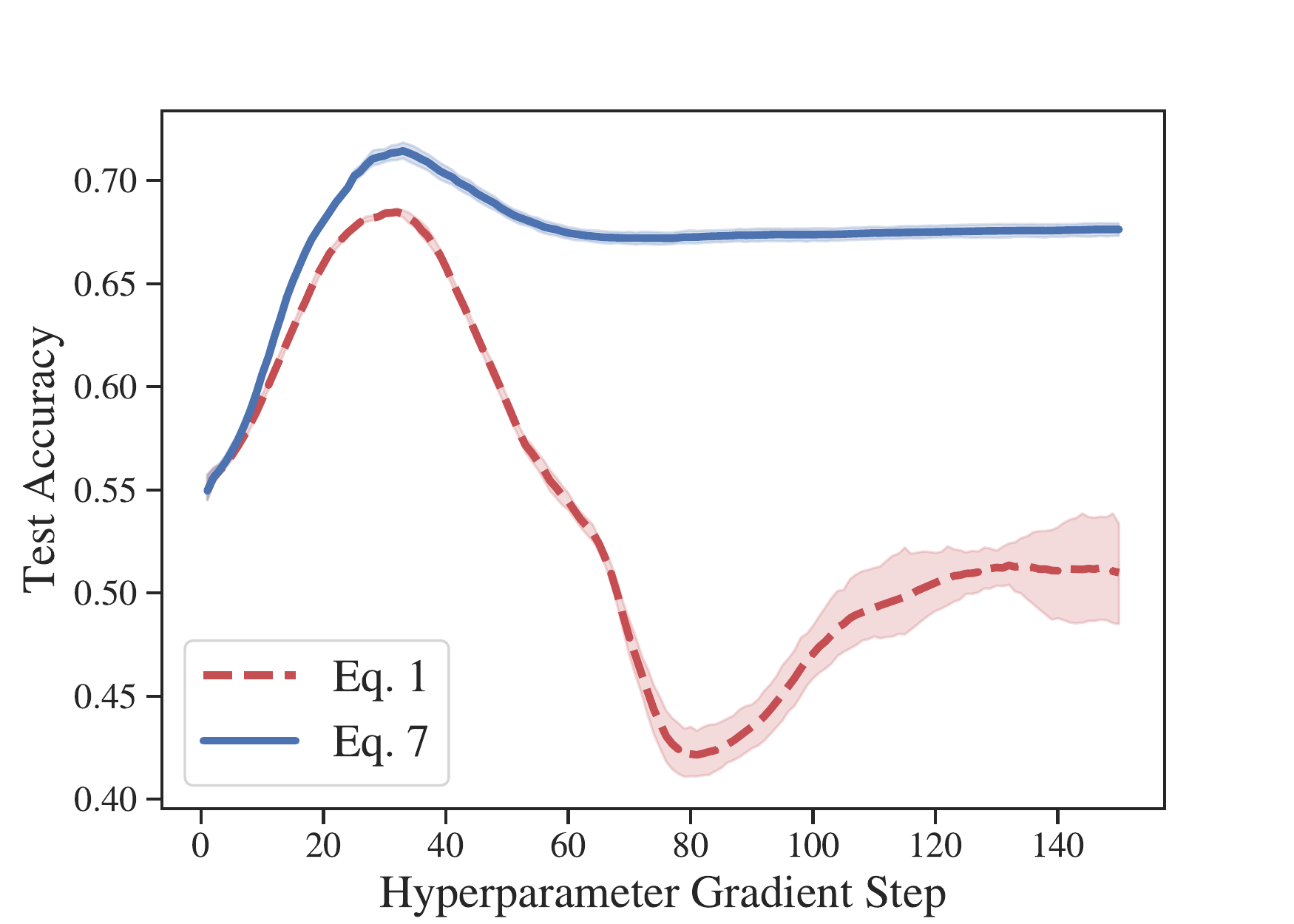}
         \caption{MNIST $\left (\zeta = \num{6.12e-5} \right)$}
         \label{fig:linear_mnist_test_online}
     \end{subfigure}
     \hfill
     \begin{subfigure}[t]{0.49\linewidth}
         \centering
         \includegraphics[width=\linewidth]{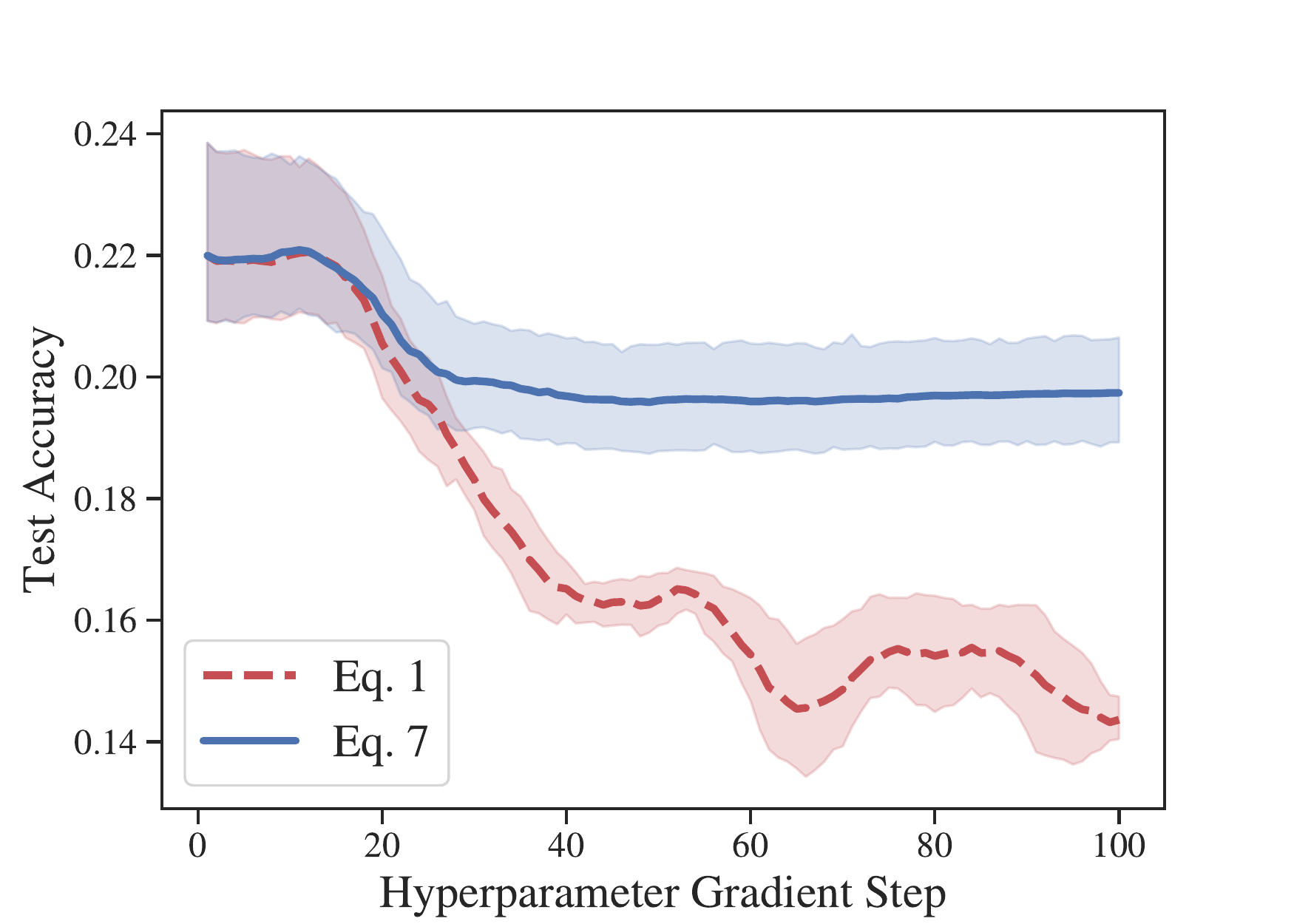}
         \caption{CIFAR-10 $\left (\zeta = \num{3.16e-4} \right)$}
         \label{fig:linear_cifar10_test_online}
     \end{subfigure}
     \caption{Overfitting a validation set with per-parameter weight decays on a linear classifier using Algorithm~\ref{alg:ho_stability_alg_online}.}
     \label{fig:linear_test_online}
\end{figure}

\begin{figure}[p]
     \centering
     \begin{subfigure}[t]{0.49\linewidth}
         \centering
         \includegraphics[width=\linewidth]{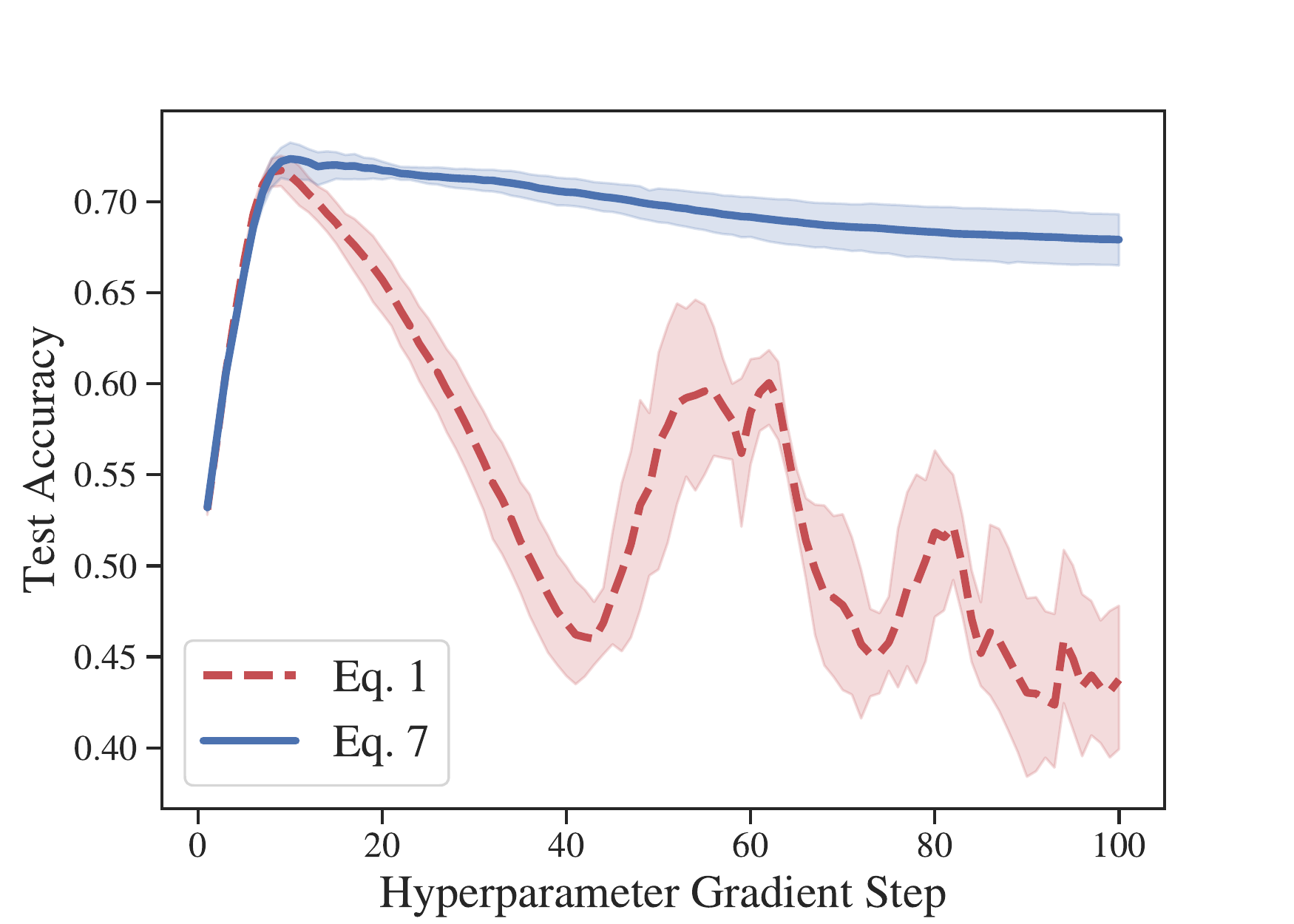}
         \caption{MNIST $\left (\zeta = \num{1.41e-4} \right)$}
         \label{fig:resnet18_mnist_test_online}
     \end{subfigure}
     \hfill
     \begin{subfigure}[t]{0.49\linewidth}
         \centering
         \includegraphics[width=\linewidth]{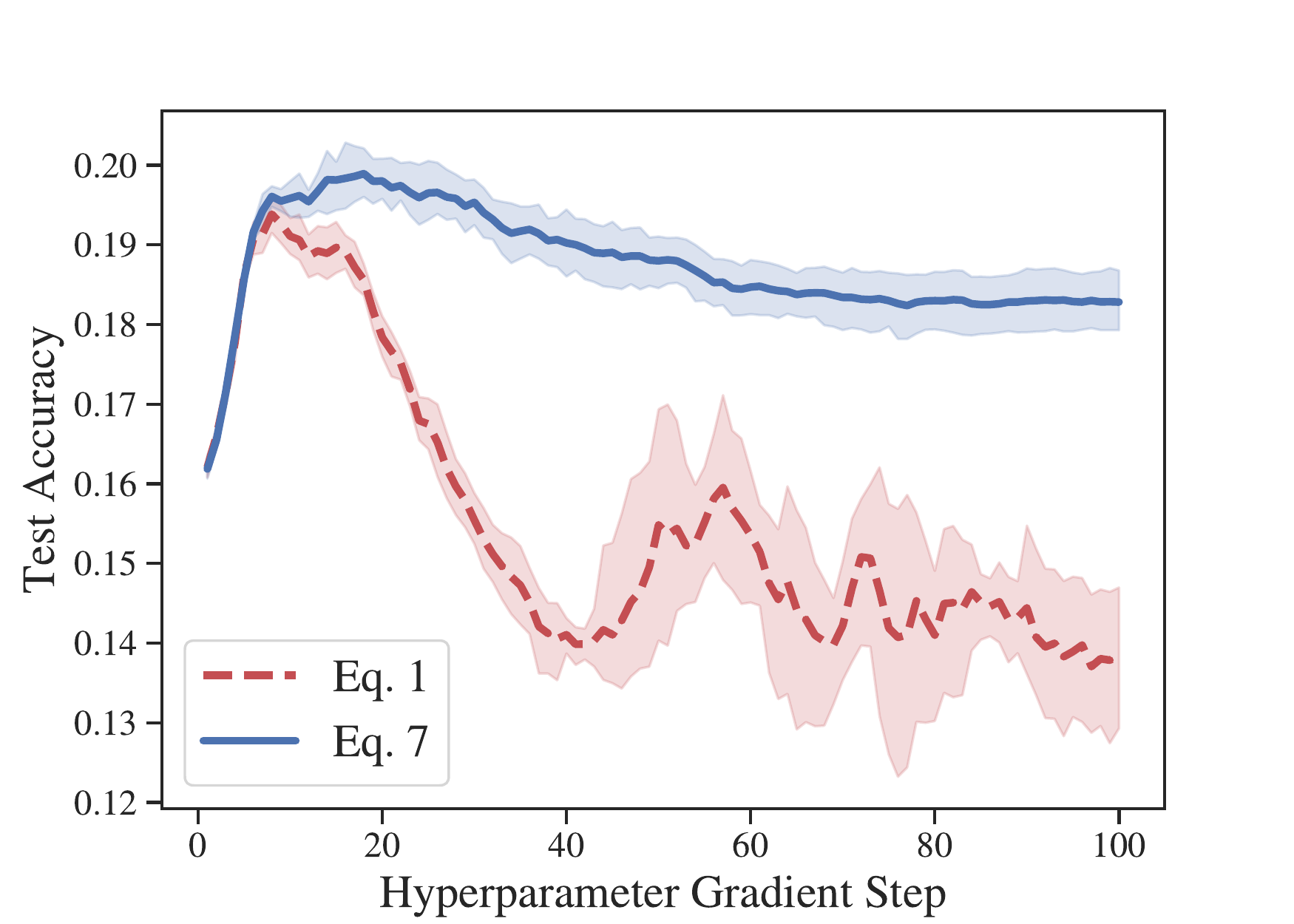}
         \caption{CIFAR-10 $\left (\zeta = \num{1.36e-4} \right)$}
         \label{fig:resnet18_cifar10_test_online}
     \end{subfigure}
     \caption{Overfitting a validation set with per-parameter weight decays on ResNet-18 using Algorithm~\ref{alg:ho_stability_alg_online}.}
     \label{fig:resnet18_test_online}
\end{figure}

\begin{figure}[p]
     \centering
     \begin{subfigure}[t]{0.49\linewidth}
         \centering
         \includegraphics[width=\linewidth]{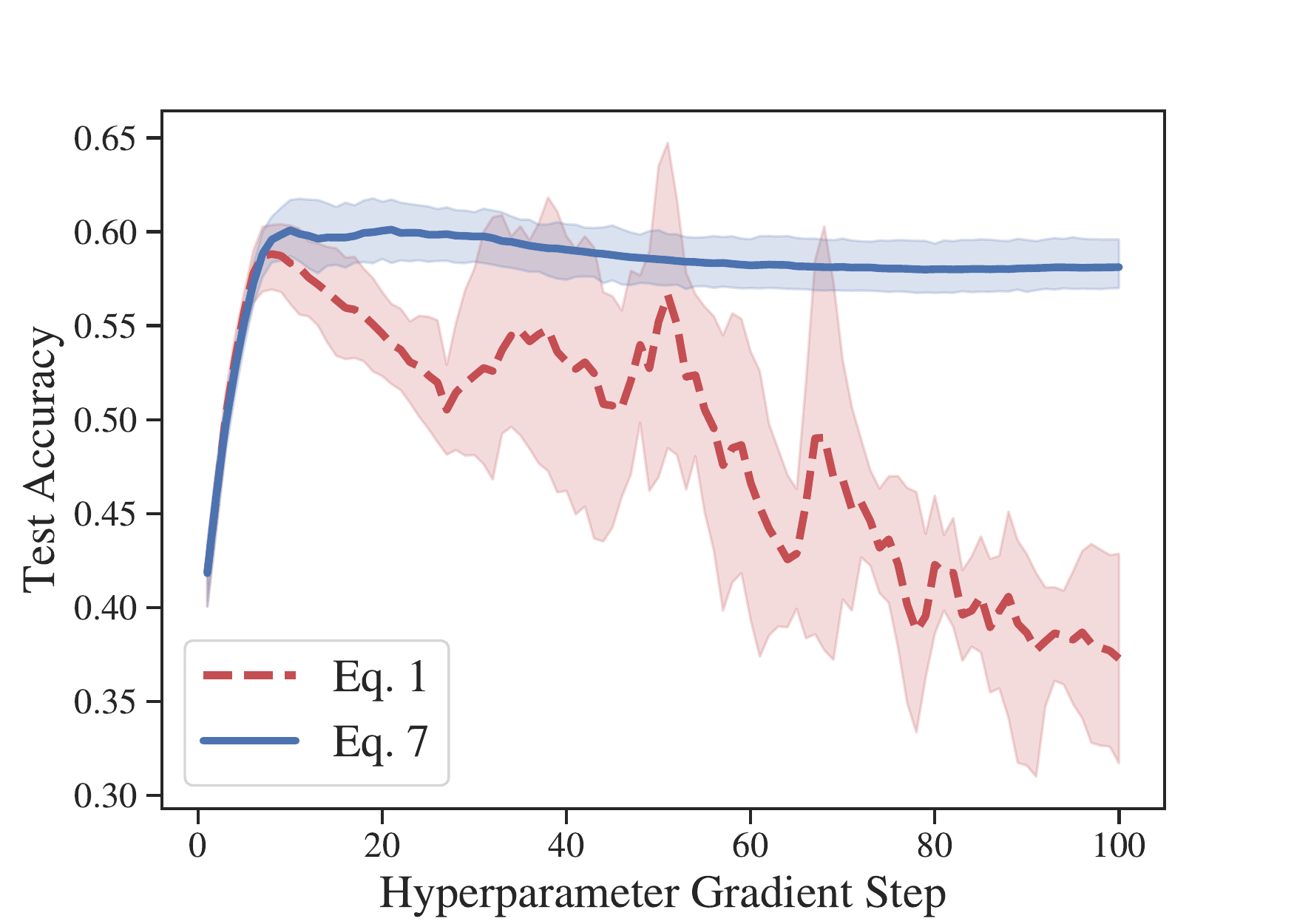}
         \caption{MNIST $\left (\zeta = \num{1.73e-4} \right)$}
         \label{fig:resnet34_mnist_test_online}
     \end{subfigure}
     \hfill
     \begin{subfigure}[t]{0.49\linewidth}
         \centering
         \includegraphics[width=\linewidth]{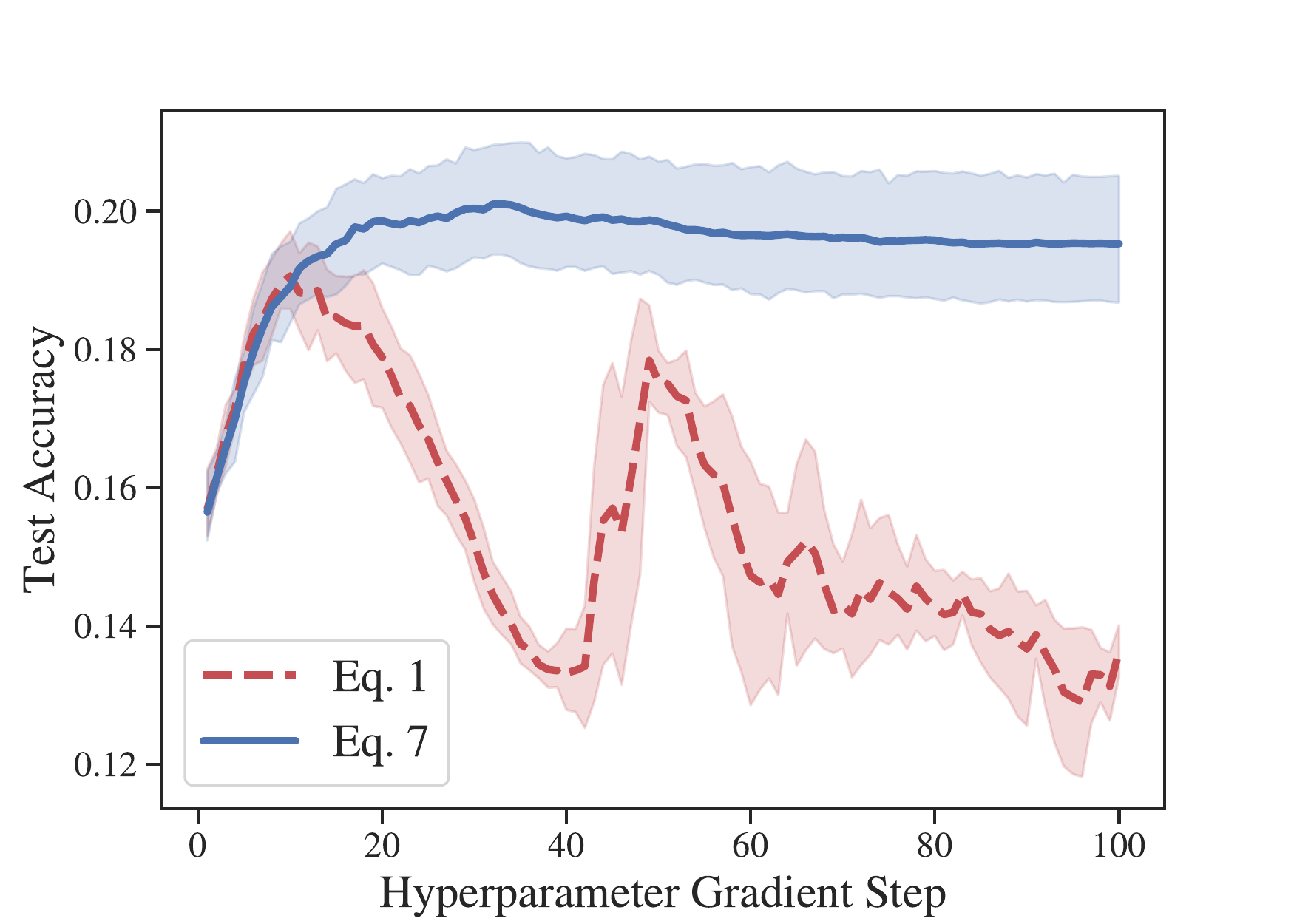}
         \caption{CIFAR-10 $\left (\zeta = \num{1.73e-4} \right)$}
         \label{fig:resnet34_cifar10_test_online}
     \end{subfigure}
     \caption{Overfitting a validation set with per-parameter weight decays on ResNet-34 using Algorithm~\ref{alg:ho_stability_alg_online}.}
     \label{fig:resnet34_test_online}
\end{figure}

The next set of experiments support our claim that minimizing Eq.~\ref{eq:ho-stability} is equivalent to minimizing a bound on the out-of-sample error. For each of the classifier-dataset pairs we consider, we minimize Eq.~\ref{eq:ho-stability} using Algorithm~\ref{alg:ho_stability_alg} over a wide range of $\zeta$. If our bounds were correct (and tight), the generalization error ought to monotonically decrease with larger choices of $\zeta$. In our experiments, we estimate the regularizer by taking the square root of the sum of squared Fisher distances over all parameter steps, and we estimate the generalization error by taking the difference between the average test loss and validation loss over the last five outer steps of hyperparameter optimization. Figures~\ref{fig:linear_reg_offline}--\ref{fig:resnet34_reg_offline} corroborate the theoretical connection we make between Eq.~\ref{eq:ho-stability} and generalization error bounds. These plots show a strong positive correlation between the value of the regularizer and the generalization error; the correlation remains positive even for the largest values of $\zeta$ tested. These empirical observations also support our use of the $K = 0$ approximation for the regularizer gradient. Even with this significant approximation, optimizing Eq.~\ref{eq:ho-stability} with larger values of $\zeta$ leads to smaller estimates of the regularizer after $100$ steps.

\begin{figure}[p]
     \centering
     \begin{subfigure}[t]{0.49\linewidth}
         \centering
         \includegraphics[width=\linewidth]{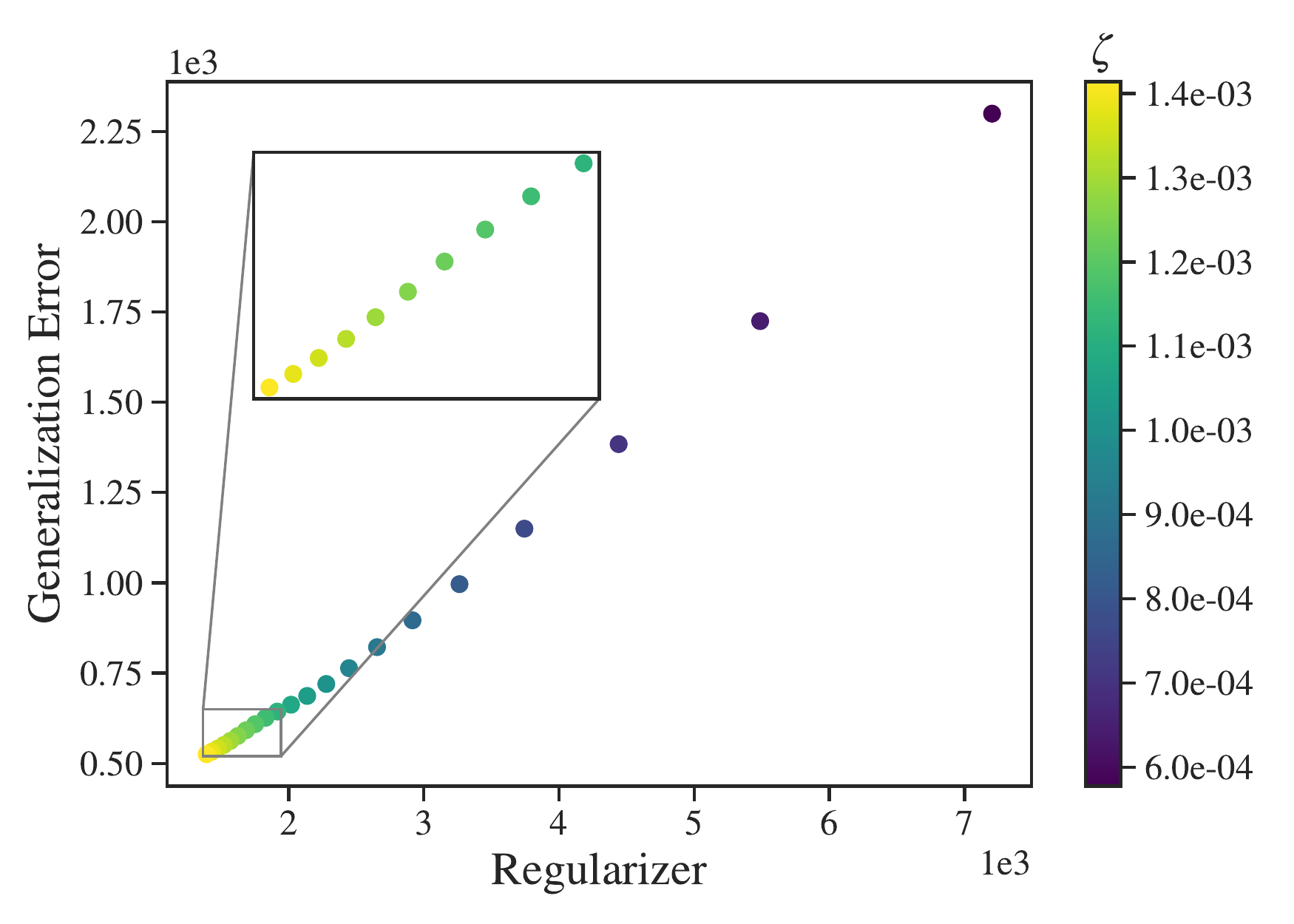}
         \caption{MNIST}
         \label{fig:linear_mnist_reg_offline}
     \end{subfigure}
     \hfill
     \begin{subfigure}[t]{0.49\linewidth}
         \centering
         \includegraphics[width=\linewidth]{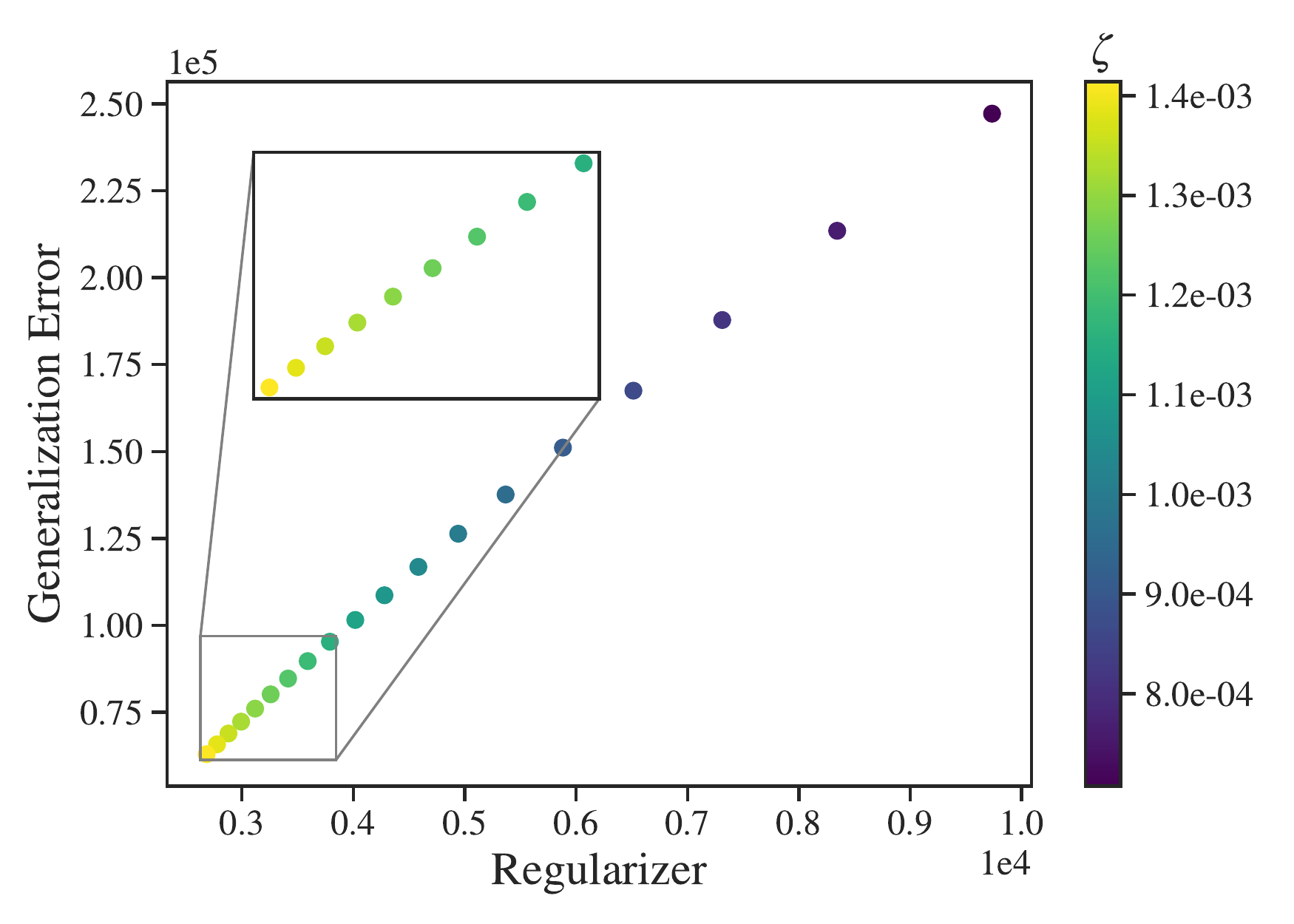}
         \caption{CIFAR-10}
         \label{fig:linear_cifar10_reg_offline}
     \end{subfigure}
     \caption{Comparing generalization error to the value of the regularizer in Eq.~\ref{eq:ho-stability} on a linear classifier using Algorithm~\ref{alg:ho_stability_alg}.}
     \label{fig:linear_reg_offline}
\end{figure}

\begin{figure}[p]
     \centering
     \begin{subfigure}[t]{0.49\linewidth}
         \centering
         \includegraphics[width=\linewidth]{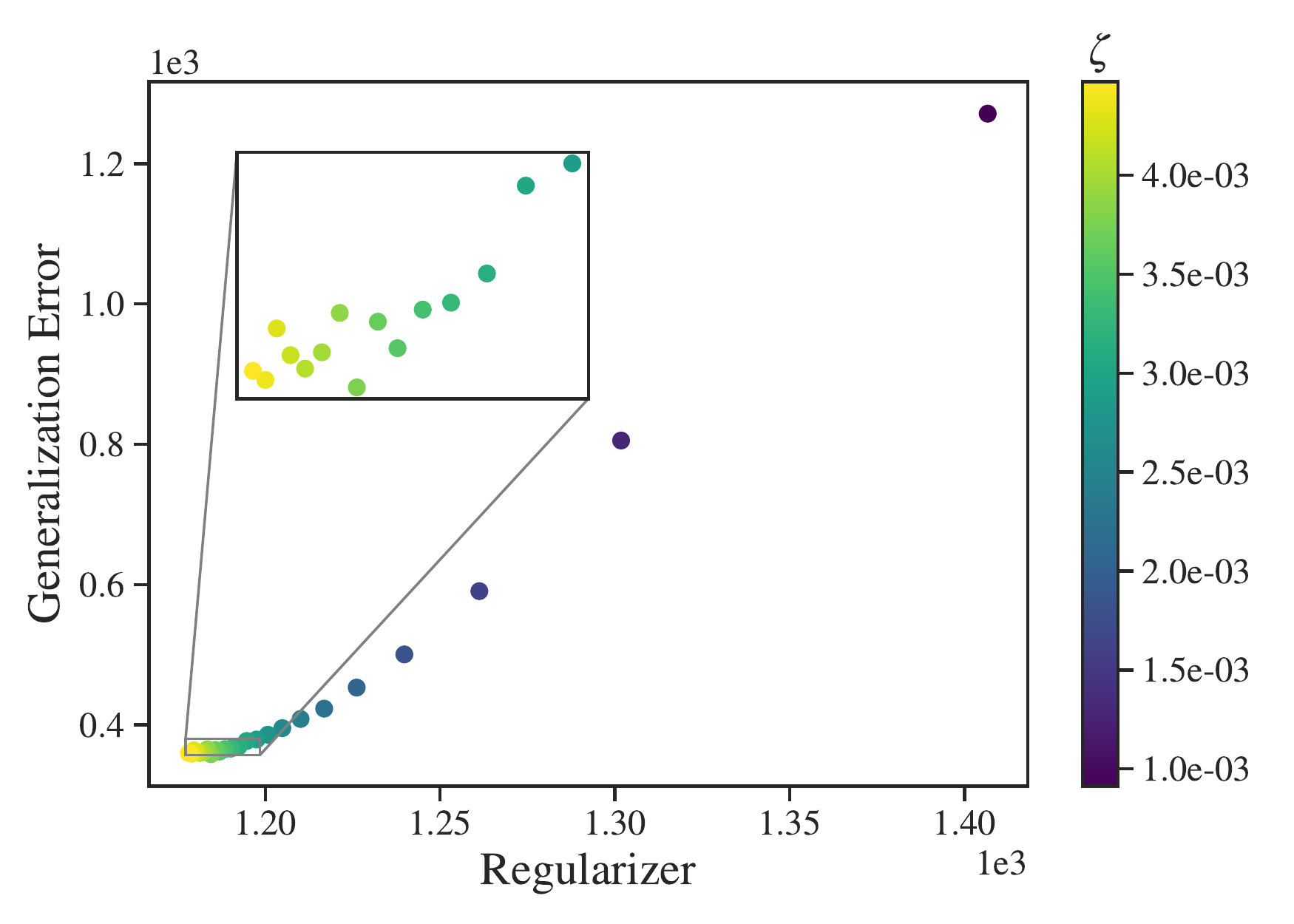}
         \caption{MNIST}
         \label{fig:resnet18_mnist_reg_offline}
     \end{subfigure}
     \hfill
     \begin{subfigure}[t]{0.49\linewidth}
         \centering
         \includegraphics[width=\linewidth]{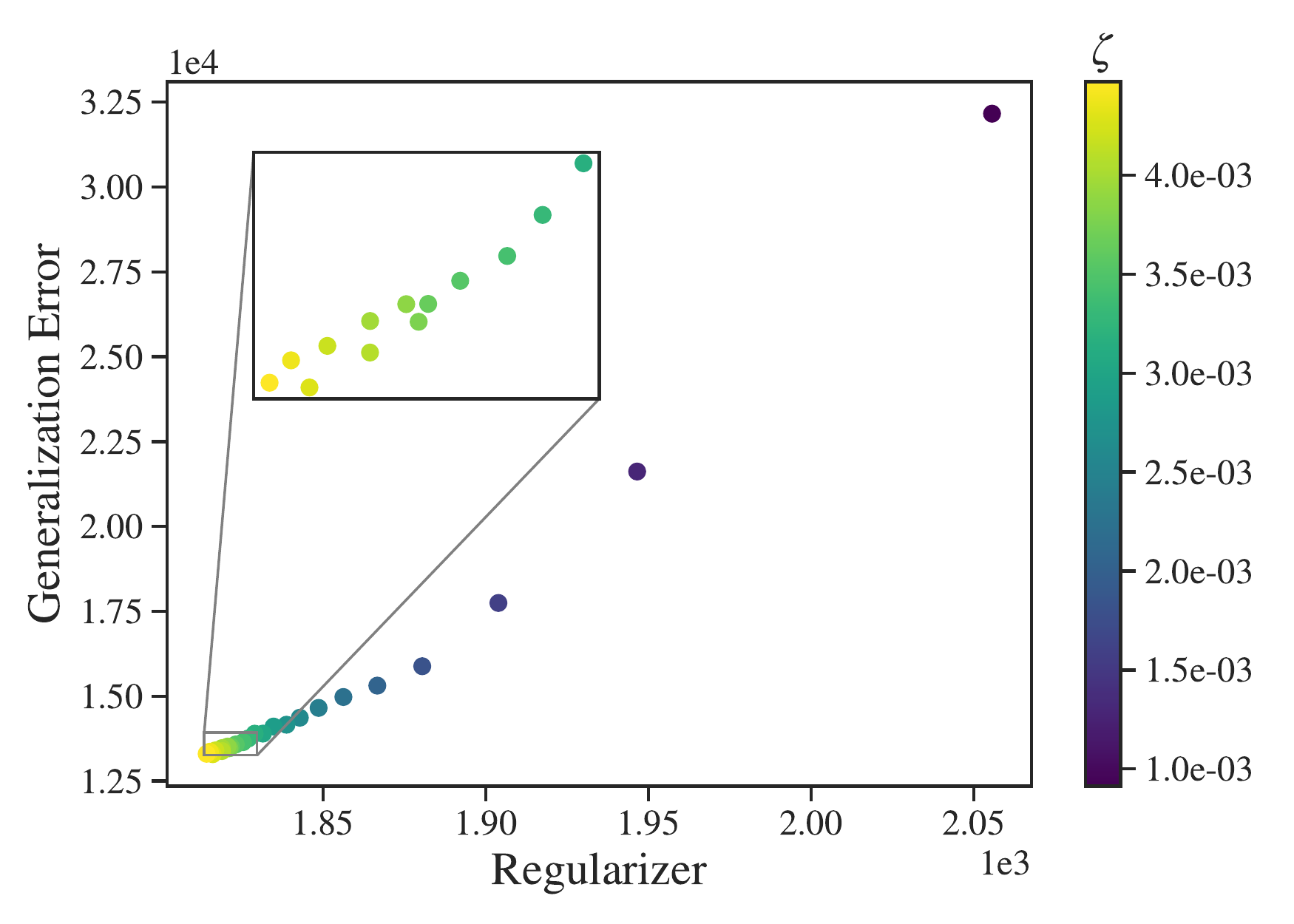}
         \caption{CIFAR-10}
         \label{fig:resnet18_cifar10_reg_offline}
     \end{subfigure}
     \caption{Comparing generalization error to the value of the regularizer in Eq.~\ref{eq:ho-stability} on ResNet-18 using Algorithm~\ref{alg:ho_stability_alg}.}
     \label{fig:resnet18_reg_offline}
\end{figure}

\begin{figure}[p]
     \centering
     \begin{subfigure}[t]{0.49\linewidth}
         \centering
         \includegraphics[width=\linewidth]{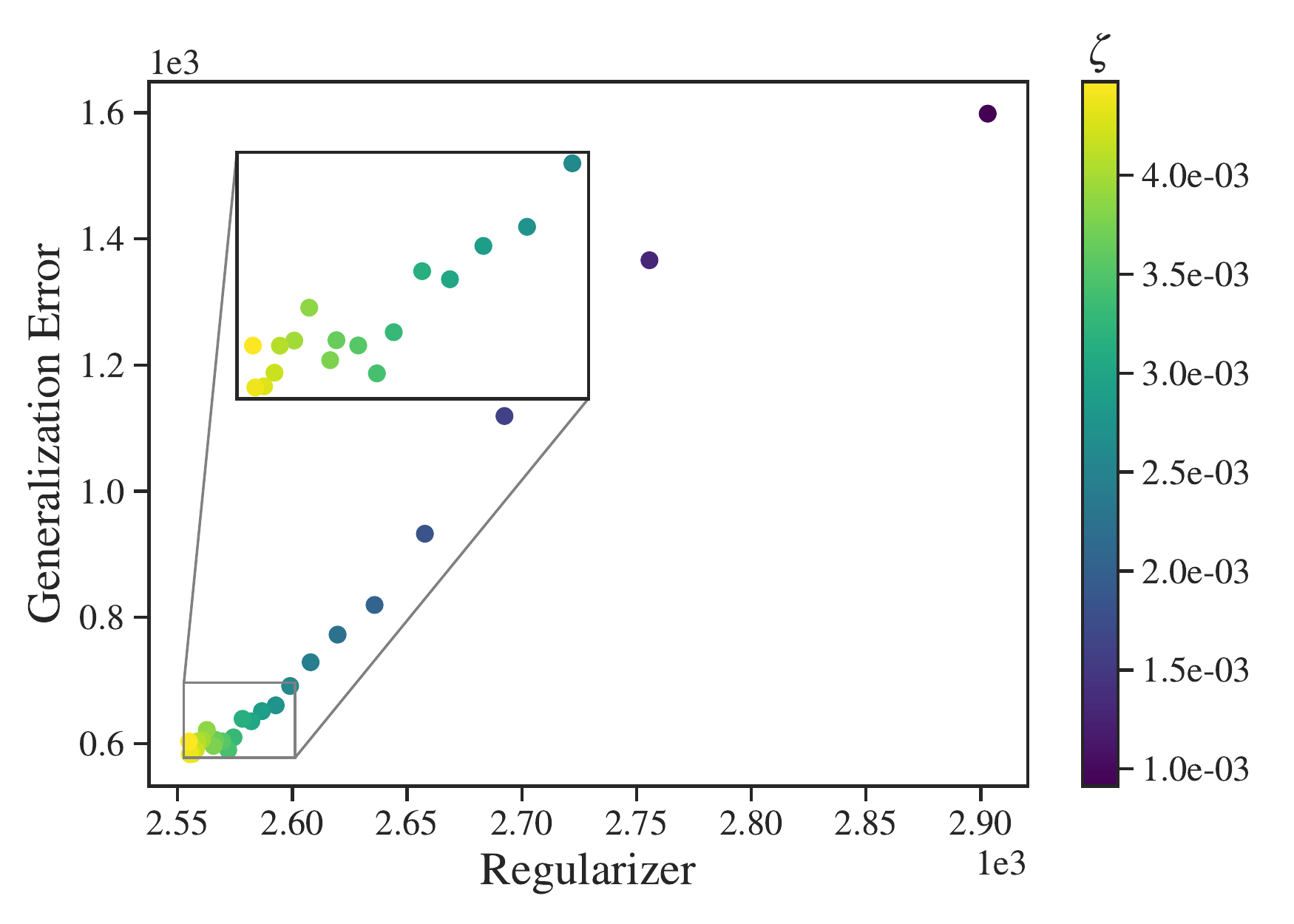}
         \caption{MNIST}
         \label{fig:resnet34_mnist_reg_offline}
     \end{subfigure}
     \hfill
     \begin{subfigure}[t]{0.49\linewidth}
         \centering
         \includegraphics[width=\linewidth]{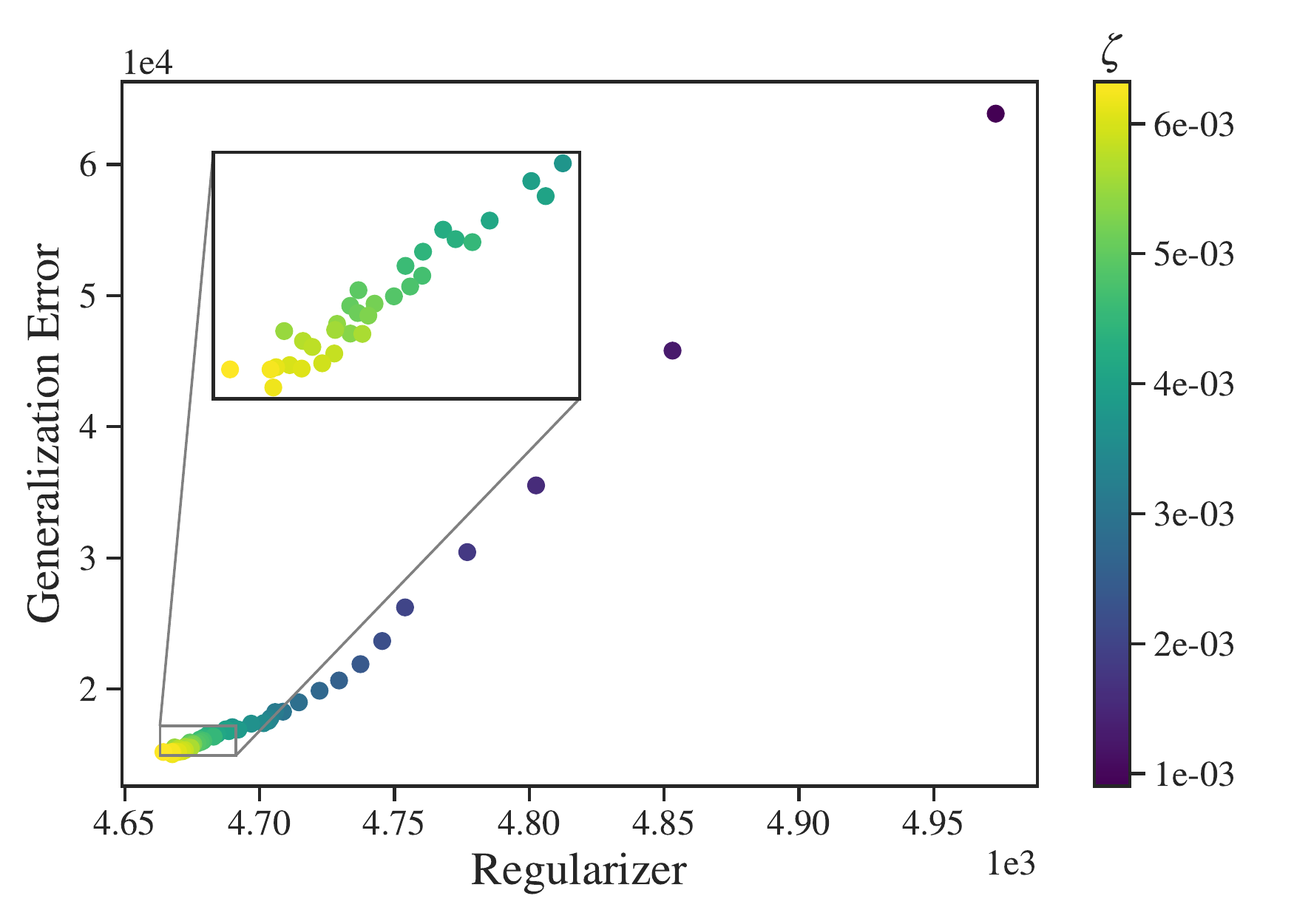}
         \caption{CIFAR-10}
         \label{fig:resnet34_cifar10_reg_offline}
     \end{subfigure}
     \caption{Comparing generalization error to the value of the regularizer in Eq.~\ref{eq:ho-stability} on ResNet-34 using Algorithm~\ref{alg:ho_stability_alg}.}
     \label{fig:resnet34_reg_offline}
\end{figure}

Last, we show that the test loss of the optimized model is reduced over a wide range of possible $\zeta$. Figures~\ref{fig:linear_scan}--\ref{fig:resnet34_scan} show that a wide range of $\zeta$ lead to improved test loss when compared to the unregularized objective. These plots support our argument from Section~\ref{sec:regpenalty} that careful tuning of $\zeta$ is not necessary to improve the optimized model's expected risk.

\begin{figure}[p]
     \centering
     \begin{subfigure}[t]{0.49\linewidth}
         \centering
         \includegraphics[width=\linewidth]{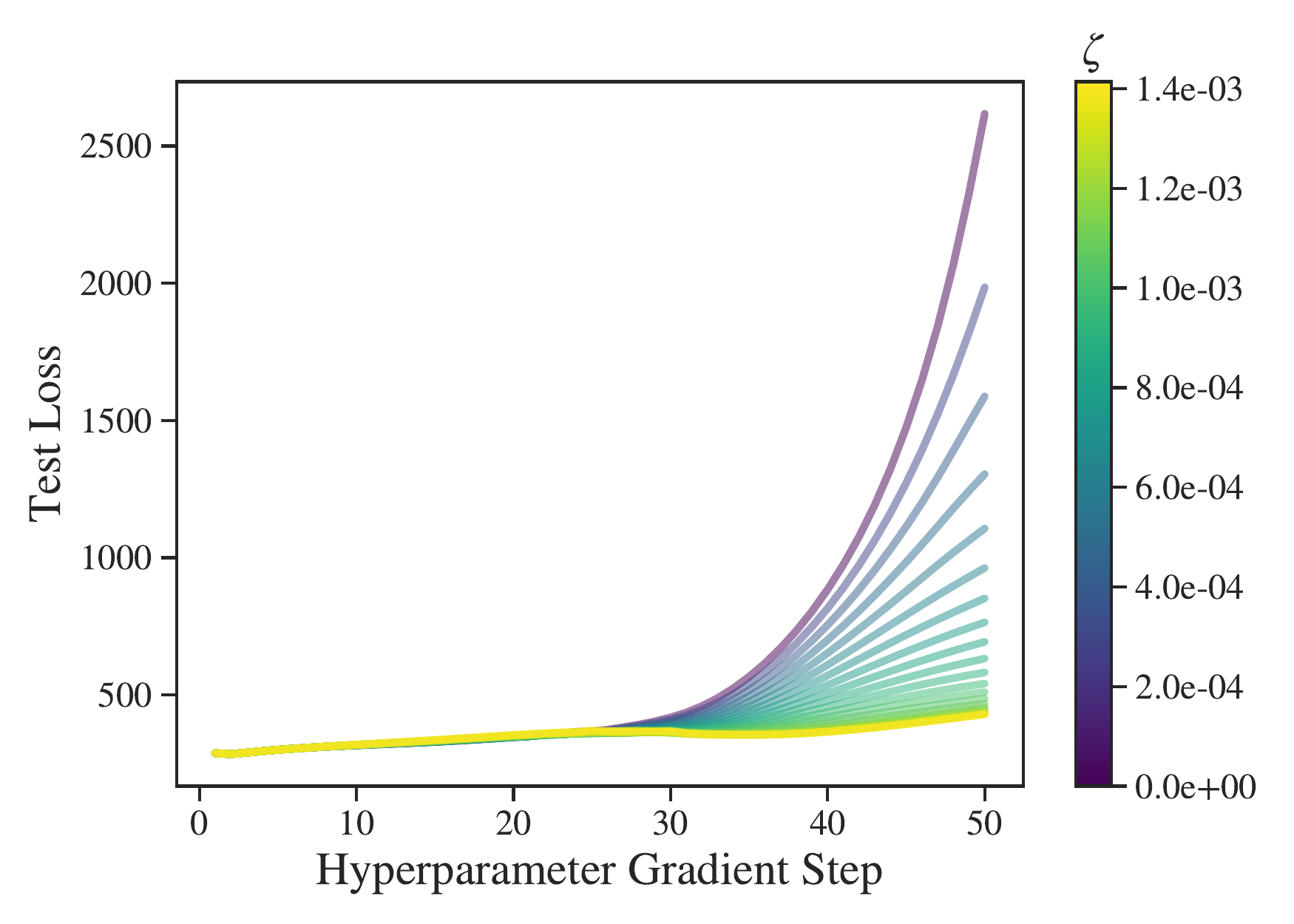}
         \caption{MNIST}
         \label{fig:linear_mnist_scan}
     \end{subfigure}
     \hfill
     \begin{subfigure}[t]{0.49\linewidth}
         \centering
         \includegraphics[width=\linewidth]{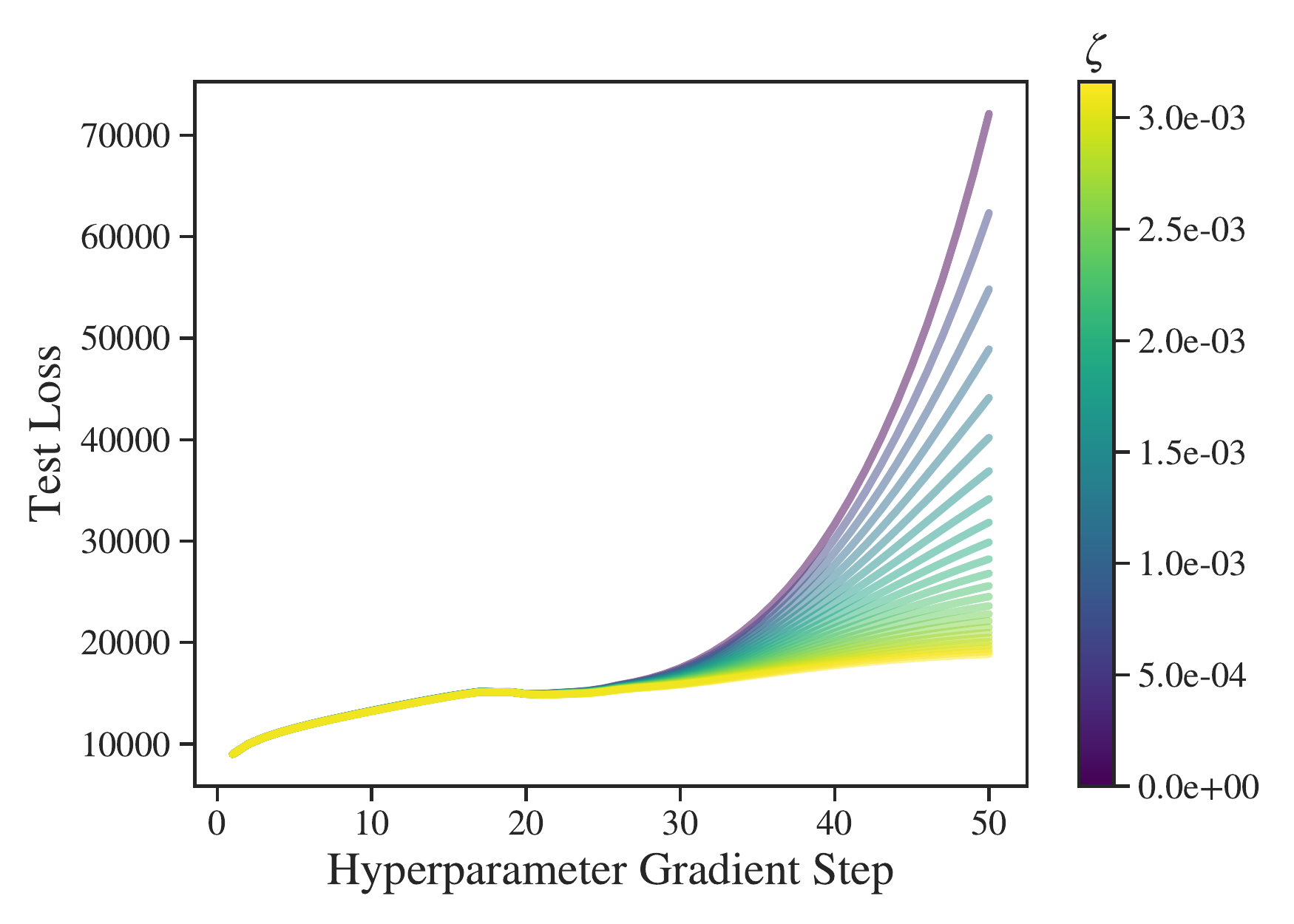}
         \caption{CIFAR-10}
         \label{fig:linear_cifar10_scan_}
     \end{subfigure}
     \caption{Plotting the effect of $\zeta$ on test loss for a linear classifier using Algorithm~\ref{alg:ho_stability_alg}.}
     \label{fig:linear_scan}
\end{figure}

\begin{figure}[p]
     \centering
     \begin{subfigure}[t]{0.49\linewidth}
         \centering
         \includegraphics[width=\linewidth]{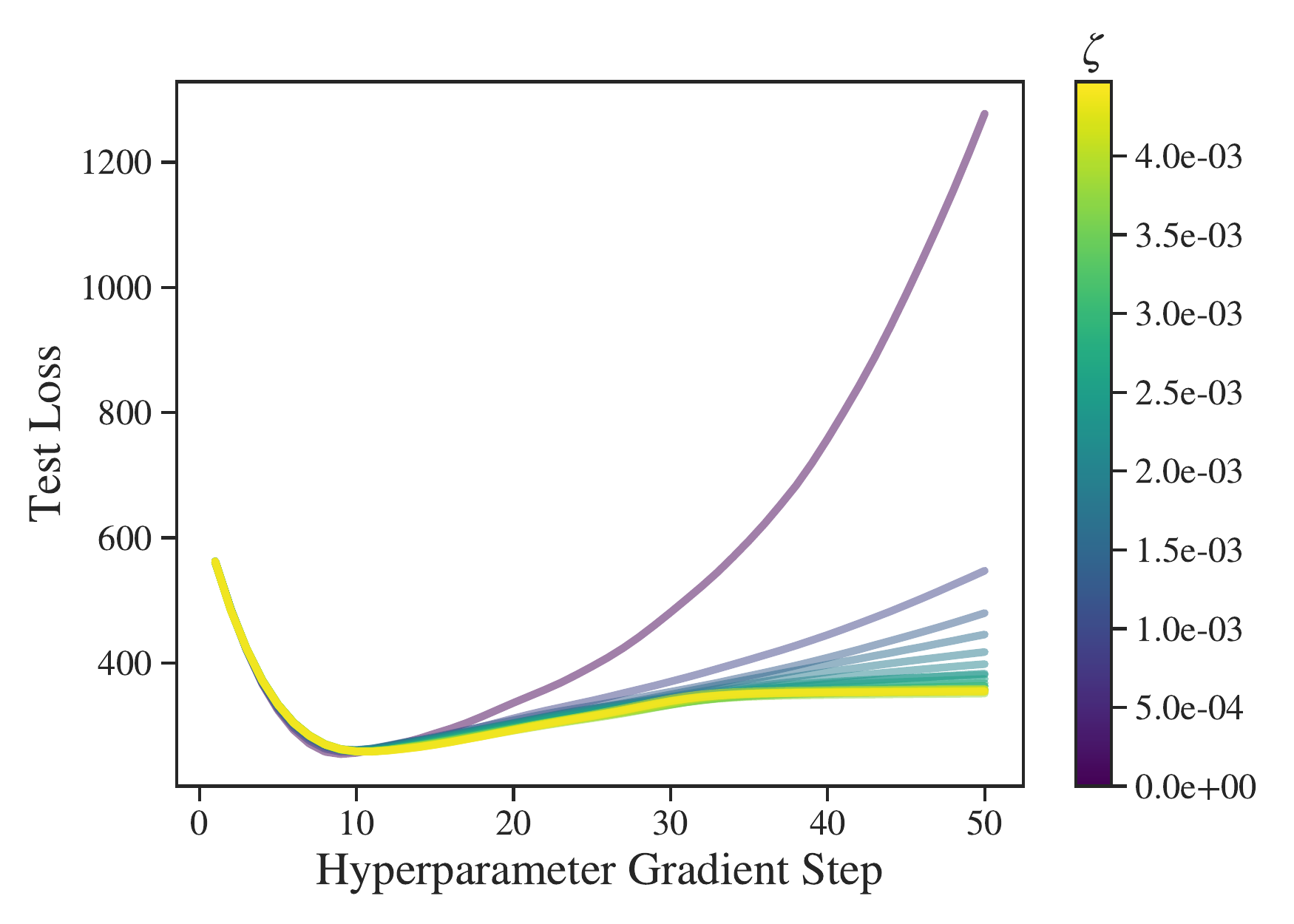}
         \caption{MNIST}
         \label{fig:resnet18_mnist_scan}
     \end{subfigure}
     \hfill
     \begin{subfigure}[t]{0.49\linewidth}
         \centering
         \includegraphics[width=\linewidth]{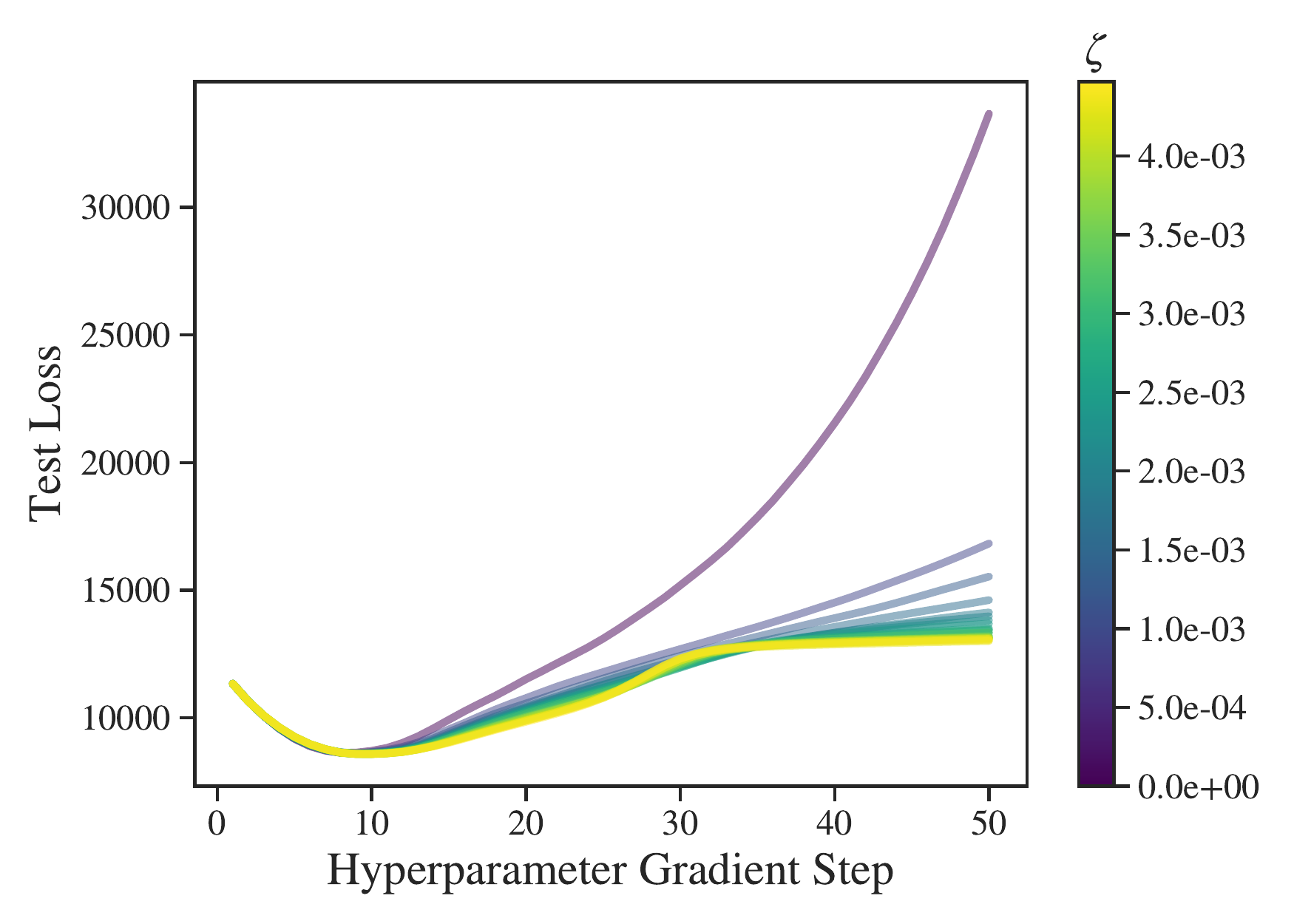}
         \caption{CIFAR-10}
         \label{fig:resnet18_cifar10_scan}
     \end{subfigure}
     \caption{Plotting the effect of $\zeta$ on test loss for ResNet-18 using Algorithm~\ref{alg:ho_stability_alg}.}
     \label{fig:resnet18_scan}
\end{figure}

\begin{figure}[p]
     \centering
     \begin{subfigure}[t]{0.49\linewidth}
         \centering
         \includegraphics[width=\linewidth]{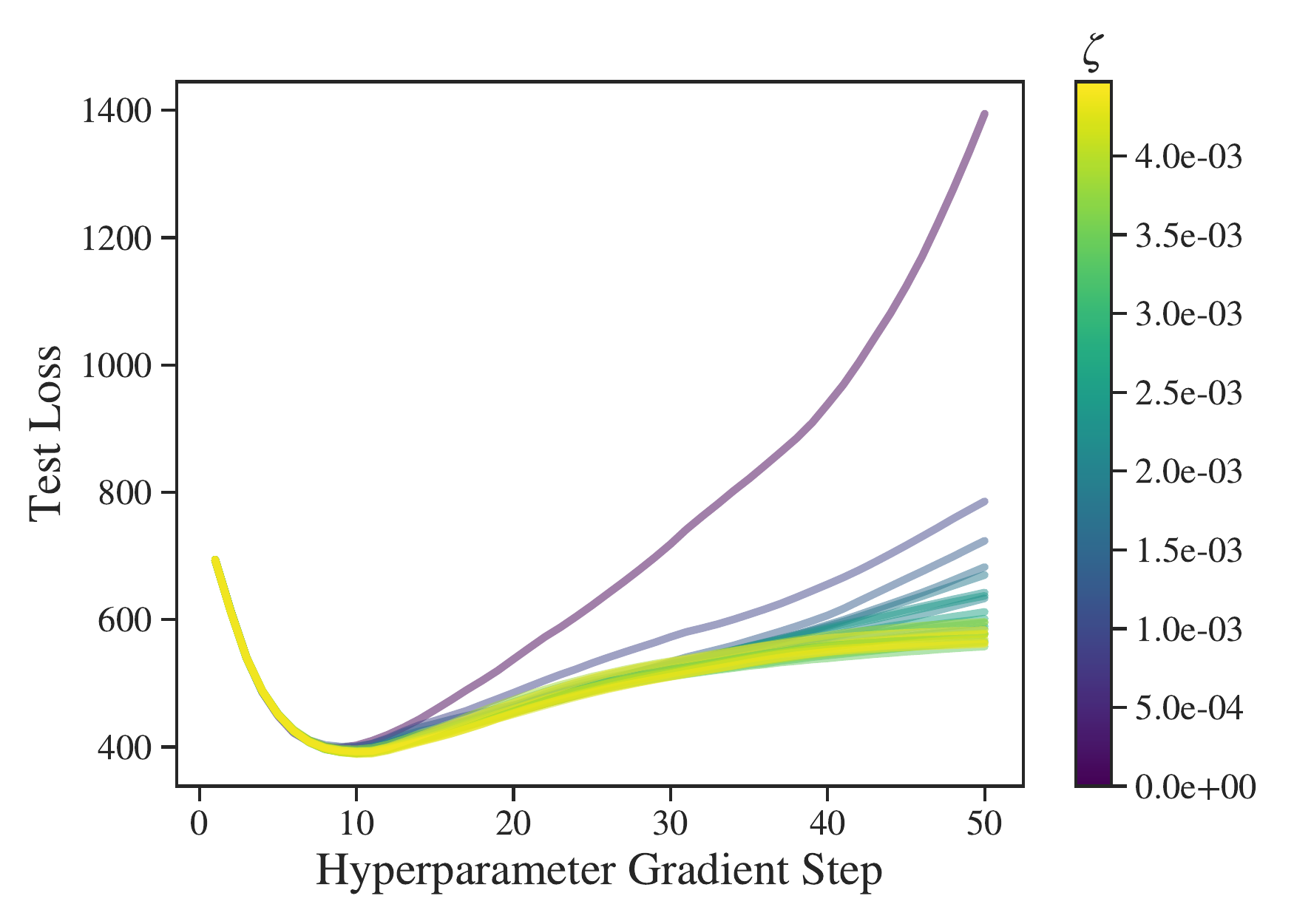}
         \caption{MNIST}
         \label{fig:resnet34_mnist_scan}
     \end{subfigure}
     \hfill
     \begin{subfigure}[t]{0.49\linewidth}
         \centering
         \includegraphics[width=\linewidth]{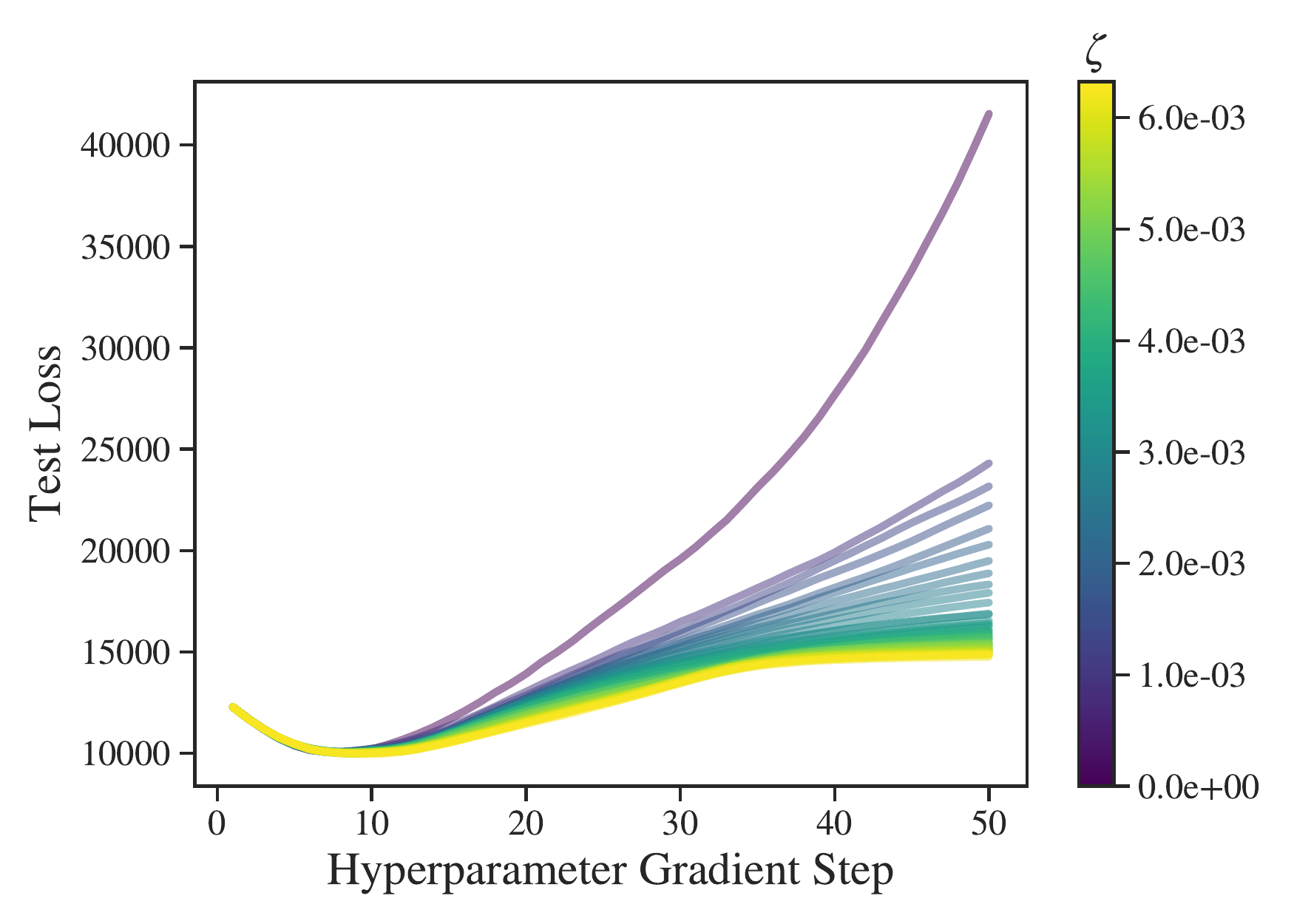}
         \caption{CIFAR-10}
         \label{fig:resnet34_cifar10_scan}
     \end{subfigure}
     \caption{Plotting the effect of $\zeta$ on test loss for ResNet-34 using Algorithm~\ref{alg:ho_stability_alg}.}
     \label{fig:resnet34_scan}
\end{figure}

Though we do not present the results of this experiment here, we also investigated the effect of retaining the parameter resampling step in Algorithm~\ref{alg:ho_stability_alg_mt}. If we instead define $P_0$ to be a delta function over the initial $\theta_0$ and resample $\theta_0$ at each outer step, optimizing Eq.~\ref{eq:bilevel-ho} no longer results in overfitting for certain classifier-dataset pairs and reduces overfitting for the other pairs. Optimizing  Eq.~\ref{eq:ho-stability} still improves test set accuracy when compared to optimizing Eq.~\ref{eq:bilevel-ho}; the improvement is, however, less substantial.

\subsection{Related Observations} \label{sec:relobsappendix}
In Section~\ref{sec:relatedobs}, we motivate recently developed heuristics for improving generalization in meta-learning using the ideas developed in this paper. Here we elaborate on the relationship between published observations and the bounds we prove.

\citet{guiroy2019towards} study meta-learning problems in which the hyperparameter learned is the parameter initialization for unseen tasks. Before we expound on the relationship between their observations and our work, we define some relevant meta-learning terminology. The procedure used for learning the parameter initialization is referred to as ``meta-training.'' A short sequence of gradient steps beginning from the initialization learned during meta-training is then termed ``task adaptation.'' \citet{guiroy2019towards} run task adaptation on a set of tasks held-out from meta-training, and measure the initialization's out-of-sample error by computing the average post-adaptation error for each of those tasks.

\citet{guiroy2019towards} evaluate several proxies for generalization in meta-learning to motivate their choice of regularizer. They first observe that the spectral norm of the Hessian after task adaptation is poorly correlated with post-adaptation error on unseen tasks; selecting a parameter initialization that tends to converge to flat optima does not appear to improve error on additional tasks. While optima flatness is sometimes well-correlated with generalization error \cite{zela2020understanding}, this finding is consistent with Observation~\ref{obs:app_trhessian}. Recall once more that minimizing optima flatness is equivalent to minimizing a data-independent PAC-Bayes bound. Depending on the problem, a data-independent PAC-Bayes prior may not result in a bound that is well-correlated with generalization error.

By contrast, they observe that two measures of gradient incoherence are well-correlated with error on unseen tasks. The first measure is the average cosine similarity between the direction vectors connecting the parameter initialization and  post-adaptation optimum for pairs of tasks held-out from meta-training. The second is the average inner product between the first gradient steps taken on a pair of held-out tasks.

Rigorously proving that these proxies for out-of-sample error are approximations to PAC-Bayes bounds would require extending our analysis to the meta-learning PAC-Bayes bound derived by \citet{AmitMeir2017}. In Observation~\ref{obs:normclip}, we showed how our bound can be modified to yield a cosine-similarity-based regularizer for hyperparameter optimization, albeit not for the problem setting \citet{guiroy2019towards} study. Though we do not prove this result here, we speculate that one could derive a similar regularizer for meta-learning by replacing the data-independent PAC-Bayes prior used in Eq.~4 of \citet{AmitMeir2017} with the prior we proposed in Observation~\ref{obs:normclip}. Repeating the derivation we lay out in Section~\ref{sec:gentheory} for this PAC-Bayes bound could thus lead to the discovery of improved regularization strategies for gradient-based meta-learning.
\end{document}